\newlist{proofdescr}{description}{1}
\setlist[proofdescr]{font=\normalfont, leftmargin=0pt, style=sameline}
\theoremstyle{plain}
\newtheorem{thm}{Theorem}[section]
\newtheorem{cor}[thm]{Corollary}
\newtheorem{lemma}[thm]{Lemma}
\newtheorem{prop}[thm]{Proposition}
\newtheorem*{theorem*}{Theorem}
\newtheorem{conj}[thm]{Conjecture}
\newtheorem{defn}[thm]{Definition}
\newtheorem{example}[thm]{Example}
\newtheorem{ass}{Assumption}
\newtheorem{rmk}[thm]{Remark}
\newcommand{\comment}[1]{}
\newcommand{\N}{\mathbb{N}}
\newcommand{\R}{\mathbb{R}}
\newcommand{\C}{\mathbb{C}}
\newcommand{\vect}[1]{\bm{#1}}
\newcommand{\matr}[1]{\bm{#1}}
\newcommand{\rank}[1]{\mathrm{rank}(#1)}
\newcommand{\dottedcolumn}[3]{%
  \settowidth{\dimen0}{$#1$}
  \settowidth{\dimen2}{$#2$}
  \ifdim\dimen2>\dimen0 \dimen0=\dimen2 \fi
  \begin{matrix}\,
    \vcenter{
      \kern.6ex
      \vbox to \dimexpr#1\normalbaselineskip-1.2ex{
        \hbox{$#2$}
    \kern3pt
    \xleaders\vbox{\hbox to \dimen0{\hss.\hss}\vskip4pt}\vfill
    \kern1pt
    \hbox{$#3$}
  }\kern.6ex}\,
  \end{matrix}
}
\DeclareMathOperator{\Tr}{Tr}
\newcommand{\one}{\mathbbm{1}}
\newcommand{\indicator}[1]{\one_{\left\lbrace #1\right\rbrace}}
\DeclareRobustCommand\1{\futurelet\oneNext\oneCheck}%
\def\oneCheck{%
	\ifx\bgroup\oneNext \expandafter\indicator%
	\else%
	\expandafter\one%
	\fi%
}
\renewcommand{\P}[2][]{\mathbb{P}_{#1}\left(#2\right)}
\newcommand{\E}[2][]{\mathbb{E}_{#1}\left[#2\right]}
\newcommand{\Ex}{\mathop{\mathbb{E}}}
\newcommand{\Px}{\mathop{\mathbb{P}}}
\renewcommand{\S}{\mathbb{S}}
\newcommand{\dber}[1]{\mathrm{Ber}\left(#1\right)} %
\newcommand{\dbin}[1]{\mathrm{Bin}\left(#1\right)} %
\newcommand{\dnorm}[1]{\mathcal{N}\left(#1\right)} %
\newcommand{\Ps}{\mathbb{P}}
\newcommand{\simiid}{\stackrel{\mathrm{i.i.d.}}{\sim}}
\newcommand{\esd}[1]{\mathrm{esd}\left(#1\right)}
\newcommand{\ed}[1]{\mathrm{ed}\left(#1\right)}
\newcommand{\supp}[1]{\mathrm{supp}(#1)}
\newcommand{\diag}[1]{\mathrm{diag}(#1)}
\newcommand{\diagt}{\mathrm{diag}}
\newcommand{\edge}{\mathrm{edge}}
\newcommand{\sym}{\mathrm{sym}}
\newcommand{\edt}{\mathrm{ed}}
\newcommand{\esdt}{\mathrm{esd}}
\renewcommand{\hat}{\widehat}
\renewcommand{\tilde}{\widetilde}
\newcommand{\bx}{\vect{x}}
\newcommand{\by}{\vect{y}}
\newcommand{\bv}{\vect{v}}
\newcommand{\bg}{\vect{g}}
\newcommand{\bd}{\vect{d}}
\newcommand{\bt}{\vect{t}}
\newcommand{\bone}{\vect{1}}
\newcommand{\bX}{\matr{X}}
\newcommand{\bY}{\matr{Y}}
\newcommand{\bW}{\matr{W}}
\newcommand{\bL}{\matr{L}}
\newcommand{\bQ}{\matr{Q}}
\newcommand{\bP}{\matr{P}}
\newcommand{\bM}{\matr{M}}
\newcommand{\bV}{\matr{V}}
\newcommand{\bD}{\matr{D}}
\newcommand{\bZ}{\matr{Z}}
\newcommand{\SC}{\mathrm{sc}}
\newcommand{\GOE}{\mathrm{GOE}}
\newcommand{\wto}{\xrightarrow{\mathrm{(w)}}}
\newcommand{\wtoas}{\xrightarrow[\mathrm{a.s.}]{\mathrm{(w)}}}
\newcommand{\pto}{\xrightarrow{\mathrm{(p)}}}
\newcommand{\asto}{\xrightarrow{\mathrm{(a.s.)}}}
\newcommand{\Law}{\mathrm{Law}}
\newcommand{\Renyi}{R\'{e}nyi}
\newcommand{\Erdos}{Erd\H{o}s}
\title{Nonlinear Laplacians: Tunable principal component analysis under directional prior information}
\author{Yuxin Ma\thanks{Email: \texttt{yma93@jhu.edu}. Partially supported by NSF grant CCF-2430292.}\,\,}
\author{Dmitriy Kunisky\thanks{Email: \texttt{kunisky@jhu.edu}.}}
\affil{Department of Applied Mathematics and Statistics, Johns Hopkins University}
\date{November 14, 2025}
\begin{document}

\maketitle
\thispagestyle{empty}

\begin{abstract}
  We introduce a new family of algorithms for detecting and estimating a rank-one signal from a noisy observation under prior information about that signal's direction, focusing on examples where the signal is known to have entries biased to be positive.
        Given a matrix observation $\bY$, our algorithms construct a \emph{nonlinear Laplacian}, another matrix of the form $\bY + \diag{\sigma(\bY\bone)}$ for a nonlinear $\sigma: \R \to \R$, and examine the top eigenvalue and eigenvector of this matrix.
        When $\bY$ is the (suitably normalized) adjacency matrix of a graph, our approach gives a class of algorithms that search for unusually dense subgraphs by computing a spectrum of the graph ``deformed'' by the degree profile $\bY\bone$.
        We study the performance of such algorithms compared to direct spectral algorithms (the case $\sigma = 0$) on models of sparse principal component analysis with biased signals, including the Gaussian planted submatrix problem.
        For such models, we rigorously characterize the strength of rank-one signal, as a function of $\sigma$, required for an outlier eigenvalue to appear in the spectrum of a nonlinear Laplacian matrix.
        While identifying the $\sigma$ that minimizes the required signal strength in closed form seems intractable, we explore three approaches to design $\sigma$ numerically: exhaustively searching over simple classes of $\sigma$, learning $\sigma$ from datasets of problem instances, and tuning $\sigma$ using black-box optimization of the critical signal strength.
        We find both theoretically and empirically that, if $\sigma$ is chosen appropriately, then nonlinear Laplacian spectral algorithms substantially outperform direct spectral algorithms, while retaining the conceptual simplicity of spectral methods compared to broader classes of computations like approximate message passing or general first order methods.
\end{abstract}

\clearpage

\tableofcontents
\thispagestyle{empty}

\clearpage

\section{Introduction}
\pagenumbering{arabic}

Principal component analysis (PCA) is one of the most ubiquitous computational tasks in statistics and data science, seeking to extract informative low-rank structures from noisy observations organized into matrices (see, e.g., \cite{AW-2010-PCASurvey,JC-2016-PCASurvey,JP-2018-PCASurvey,GGHEMT-2022-PCASurvey} for a few of the huge number of available surveys).
We will study a family of mathematical models of such problems involving detecting or recovering these low-rank structures.
These problems are specified by a family
of probability measures $\Ps_{n, \beta}$ on $n \times n$ symmetric matrices, where
$\beta$ is a \emph{signal-to-noise} parameter.
The observed matrix is biased in the direction of a low-rank \emph{signal}: there is a latent unobserved unit vector $\bx \in \S^{n - 1} \subset \R^n$ such that, when $\bY \sim \Ps_{n, \beta}$, then
\[
    \E{\bY \mid \bx}\approx \beta \sqrt{n}\cdot\bx\bx^{\top}.
\]
For this class of problems, we consider two computational tasks:
\begin{enumerate}
    \item \textbf{Detection:} Determine whether the observed data is uniformly
        random ($\beta=0$) or contains a signal ($\beta>0$). In particular, we say that a sequence of functions
        $f_n :\R^{n\times n}_{\sym}\to \{0,1\}$ (usually encoding a single algorithm allowing for various $n$) achieves \emph{strong detection} if
        \[
            \lim_{n\to\infty}\P[n,\beta]{f_n(\bY)=1} = \lim_{n\to\infty}\P[n,0]{f_n(\bY)=0}
            =1.
        \]

    \item \textbf{Recovery:} When $\beta > 0$, estimate the
        hidden signal $\bx$.\footnote{Since the actual signal is $\bx\bx^{\top}$ which is unchanged by negating $\bx$, it is only sensible to ask to estimate either this matrix or $\{\bx, -\bx\}$, which is why we take the absolute value of the inner product of an estimator with $\bx$.} We say that a sequence of functions $\hat{\bx} = \hat{\bx}_n: \R^{n \times n}_{\sym} \to \S^{n - 1}$ achieves \emph{weak recovery} if, for some $\delta > 0$,
        \[ \liminf_{n \to \infty} \P[n, \beta]{|\langle \hat\bx_n(\bY), \bx\rangle|
        \geq \delta} > 0, \]
        and achieves \emph{strong recovery} if $|\langle
        \hat\bx_n(\bY), \bx\rangle| \to 1$ in probability as $n \to \infty$.
\end{enumerate}

A common approach to such problems that we detail in Section~\ref{sec:direct-spectral} is to ``perform PCA'' on $\bY$ directly, meaning in this context to look for an unusually large eigenvalue to test whether $\beta = 0$ or $\beta > 0$, and to estimate $\bx$ by the eigenvector associated to such an eigenvalue.
We call this a \emph{direct spectral algorithm}.
This approach is effective, but is agnostic to any information we might have in advance about the hidden $\bx$.
In this paper, we propose a new framework for improving direct spectral algorithms when we have some knowledge about $\bx$.
In particular, our approach will be sensible when we have \emph{directional} information about $\bx$, say that it lies in a given convex cone, a class of problem proposed by \cite{DMR-2014-ConePCA} to unify numerous special cases like Non-Negative PCA and the Planted Submatrix and Planted Clique problems that we will discuss below.
Our approach is probably \emph{not} well-suited to other kinds of structural prior information that other works have sought to exploit, like sparsity \cite{ZHT-2006-SparsePCA,AW-2008-SparsePCA,JL-2009-SparsePCA,DM-2014-SparsePCA} or the opposite assumption of a perfectly flat signal from the hypercube, $\bx \in \{\pm 1 / \sqrt{n}\}^n$ \cite{DAM-2015-SBMMutualInformation,FMM-2018-TAPInference}.

To be concrete, we will consider the following class of models where $\langle \bx, \bone\rangle$ is somewhat large with high probability.
\begin{defn}[Sparse Biased PCA]
\label{def:probs}
Let $p = p(n) \in [0, 1]$ satisfy
\[ \omega\left(\frac{\log n}{n}\right) \leq p(n) \leq o(1). \]
Let $\eta$ be a probability measure with positive mean, $\mathbb{E}_{z\sim \eta}\,z >0$, and finite third absolute moment, $\mathbb{E}_{z\sim \eta}\,|z|^3 <\infty$.
Let $\bx \colonequals \by / \|\by\|$ with $\by \in \R^n$ having entries $y_i = \varepsilon_i z_i$, where $z_i \simiid \eta$ and $\varepsilon_i \in \{0, 1\}$ are drawn in one of the following ways:
\begin{itemize}
    \item \textbf{Random Subset:} Sample $S \subseteq [n]$ uniformly at random among subsets of size $|S| = pn$, and set $\varepsilon_i \colonequals \one\{i \in S\}$.\footnote{We allow ourselves the slight imprecision, which is trivial but tedious to treat carefully, of assuming that $pn$ is exactly an integer.}
    \item \textbf{Independent Entries:} Draw $\varepsilon_i \simiid \dber{p}$.
\end{itemize}
We call this choice the \emph{sparsity model}.
Finally, we draw $\bY \sim \Ps_{n, \beta}$ from this model as
\[ \bY = \beta \sqrt{n} \cdot \bx\bx^{\top} + \bW \]
for $\bW$ a symmetric matrix drawn from the \emph{Gaussian orthogonal ensemble (GOE)}, i.e., having $W_{ij} = W_{ji} \sim \mathcal{N}(0, 1 + \one\{i = j\})$ independently.\footnote{The choice of scaling of the diagonal variances has no effect on our results; see our Proposition~\ref{prop:diag_of_goe}.}
We call $(\eta, p(n))$ the \emph{model parameters} and $\beta$ the \emph{signal strength} of the model.
\end{defn}
\noindent
In these models, clearly we have arranged to have $\mathbb{E}[\bY \mid \bx] = \beta\sqrt{n} \cdot \bx\bx^{\top}$ exactly.
Our models are in the spirit of Non-Negative PCA \cite{MR-2015-NonNegative}, where the stricter entrywise condition $x_i \geq 0$ is imposed.
On the other hand, unlike that work, for technical reasons we focus on sparse $\bx$, though our algorithms seem sensible for dense $\bx$ as well (see Section~\ref{sec:dense}).

For sparsity $p(n) = o(1)$, it is believed that optimal algorithms for such problems have a tradeoff between runtime and performance, in the sense that one may spend more time computing and in return identify weaker signals (with a smaller value of $\beta$).
In contrast, for $p(n)$ a constant, there is a single critical $\beta_*$ such that a polynomial-time algorithm can identify signals of strength $\beta > \beta_*$, while doing so for $\beta < \beta_*$ is believed to require nearly exponential time.
In the sparse case see, e.g.,~\cite{AKS-1998-PlantedClique} for the case of the planted clique problem discussed below, or~\cite{DKWB-2019-SubexponentialTimeSparsePCA} for Gaussian models as above.
Our goal here is to study a particular aspect of the former, more algorithmically flexible situation.
Namely, we will ask: how weak of a signal can one detect without straying too far from the direct spectral algorithm?

To give a concrete example, the following is one special case of our model that has been widely studied \cite{MRZ-2015-LimitationsSpectral,MW-2015-ReductionsSubmatrixDetection,HWX-2017-SubmatrixLocalizationMessagePassing,CX-2016-ThresholdsPlantedClustersGrowing,BMVVX-2018-InfoTheoretic,BBH-2019-UniversalityComputationalSubmatrixDetection,LM-2019-LimitsLowRankEstimation,GJS-2021-OGPSubmatrixRecovery}.
\begin{example}[Gaussian Planted Submatrix]
    \label{ex:submatrix}
    Let $\eta \colonequals \delta_1$ be the probability measure concentrated on the constant 1, let $p(n) \colonequals \beta / \sqrt{n}$ for the same $\beta$ as the signal strength, and use the Random Subset sparsity model.
    The result is that $\bY$ has the law of the Gaussian random matrix $\bW$, where a random principal submatrix of dimension $\beta\sqrt{n}$ has had the means of its entries elevated from 0 to 1.
\end{example}
\noindent
While we focus on the specific case of $\bx$ correlated with $\bone$, in Section~\ref{sec:other-directions} we discuss possible extensions to other forms of directional prior information.

\subsection{Summary of contributions}
In this paper, we first propose a new PCA algorithm that seeks to incorporate our prior information about $\bx$ by deforming the matrix $\bY$ before computing its top eigenpair.
Namely, we add to (a normalized version of) $\bY$ a diagonal matrix $\bD$ with entries given by a bounded nonlinear function $\sigma$ of (a normalized version of) the entries of $\bY \bone$, for $\bone$ the all-ones vector.
The idea behind these \emph{nonlinear Laplacian} matrices is that, since $\bx\bx^{\top}$ is rank-one and $\bx$ is positively correlated with $\bone$, both the spectrum of $\bY$ and the vector $\bY\bone$ carry information about $\bx$.
Forming a diagonal matrix from the latter and attenuating its largest entries by applying $\sigma$ lets these two sources of information cooperate, leading to a more effective spectral algorithm for detecting and estimating the signal.

We then give a complete description of the appearance of unusually large ``outlier'' eigenvalues in the spectrum of such matrices built from $\bY$ drawn from models as in Definition~\ref{def:probs}.
The appearance of such outliers corresponds to when, for instance, an algorithm thresholding the largest eigenvalue can detect the presence of a signal.
For a large class of $\sigma$ and $\eta$, we identify the $\beta_* = \beta_*(\sigma)$ such that the $\sigma$-Laplacian has an outlier eigenvalue if and only if $\beta > \beta_*(\sigma)$ (Theorem~\ref{thm:main} and its subsequent discussion).
This analysis applies sophisticated tools developed in random matrix theory and free probability, and gives $\beta_*(\sigma)$ via a sequence of integral equations involving $\sigma$ and $\eta$.
As a consequence we learn that, for instance for the concrete example of the Gaussian Planted Submatrix problem, nonlinear Laplacian algorithms considerably outperform direct spectral algorithms (Theorem~\ref{thm:gauss-submx} and Figures~\ref{fig:submatrix-histograms} and \ref{fig:eigenvector}).

Because the description of $\beta_*(\sigma)$ is rather complex and seems unlikely to admit a closed form, we also explore several heuristics for identifying an effective $\sigma$ for a given problem. We find that our algorithms are quite robust to this choice, and a good $\sigma$ can equally well be found by hand, learned from data, or tuned by black-box optimization (Figure~\ref{fig:methods-comparison}).
Further, nonlinear Laplacian algorithms appear to be robust across the models described in Definition~\ref{def:probs}; a $\sigma$ that we optimize for one such model is quite effective for others (Section~\ref{sec:transferability}).
Based on these findings, we argue that nonlinear Laplacian algorithms give a simple, robust, and substantial improvement over direct spectral algorithms, while being barely more complex than direct spectral algorithms and using a minimum of extra information about the input matrix beyond its spectrum.

\subsection{Nonlinear Laplacian spectral algorithms}
\label{sec:laplacian-spectral}

We will work with the following normalization of $\bY$ in the setting of Definition~\ref{def:probs}:
\[ \hat\bY\colonequals \bY / \sqrt{n} = \beta \bx\bx^{\top} + \hat{\bW} \]
for $\hat{\bW} \colonequals \bW / \sqrt{n}$.
As we will describe, in our models this ensures that the extreme eigenvalues of $\hat{\bY}$ are of constant order.
We will construct algorithms for PCA using the following class of matrix.

\begin{defn}
    [$\sigma$-Laplacian matrix]\label{def:L} Given the observed matrix $\bY$ and a scalar
    function $\sigma: \mathbb{R}\to \mathbb{R}$, we define the \emph{$\sigma$-Laplacian} as:
    \[
        \bL = \bL_{\sigma}(\hat{\bY}) \colonequals \hat \bY + \underbrace{\diag{\sigma(\hat \bY\vect{1})}}_{\equalscolon \bD_{\sigma}(\hat{\bY}) = \bD}
    \]
    where $\sigma$ applies entrywise to the vector $\hat \bY\vect{1} \in \R^n$.
\end{defn}

\begin{defn}[$\sigma$-Laplacian spectral algorithms]
    \label{def:L-alg}
    Given $\sigma: \R \to \R$ and $\tau \in \R$, the associated \emph{$\sigma$-Laplacian spectral algorithm for detection} is $f: \R^{n \times n}_{\sym} \to \{0, 1\}$ that outputs
    \[ f(\bY) \colonequals \one\{\lambda_1(\bL_{\sigma}(\hat{\bY})) \geq \tau\}, \]
    and the associated \emph{$\sigma$-Laplacian spectral algorithm for recovery} is $\hat{\bv}: \R^{n \times n}_{\sym} \to \S^{n - 1}$ that outputs
    \[ \hat{\bv}(\bY) \colonequals \bv_1(\bL_{\sigma}(\hat{\bY})). \]
    Here $(\lambda_1(\cdot), \bv_1(\cdot))$ denote the top eigenpair of a matrix.
\end{defn}
\noindent
As we discuss in Section~\ref{sec:direct-spectral}, the direct spectral algorithms that previous work has focused on are special cases of the above with $\sigma = 0$.

For technical reasons (see Section~\ref{sec:proof-techiniques}) it is easier to work with the following variant of the $\sigma$-Laplacian.
\begin{defn}[Compressed $\sigma$-Laplacian matrix]
    \label{def:L-comp}
    For each $n \geq 1$, fix $\bm V \in \R^{n\times (n-1)}$ with columns an orthonormal basis of the orthogonal complement of the span of the all-ones vector $\bone$.
    Given $\hat{\bY} \in \R^{n \times n}_{\sym}$ and $\sigma$ as before, define the \emph{compressed $\sigma$-Laplacian} as $\widetilde{\bL}_{\sigma}(\hat{\bY}) \colonequals \bm V^\top \bL_{\sigma}(\hat{\bY})\bm V \in \R^{(n - 1) \times (n - 1)}$.
    And, define the \emph{compressed $\sigma$-Laplacian spectral algorithm} for detection as in Definition~\ref{def:L-alg}, only with $\bL$ replaced by $\widetilde{\bL}$. For recovery, use $\bV\bv_1(\tilde\bL_{\sigma}(\hat \bY))$.
\end{defn}
\noindent
We expect all results given below for compressed $\sigma$-Laplacians to hold as well for the original $\sigma$-Laplacians; this change is almost certainly merely a theoretical convenience.

The simple idea behind these algorithms is that, if $\bx$ is biased in the $\bone$ direction, then
\begin{align*}
\hat{\bY} \bone
&= \beta \langle \bx, \bone \rangle \bx + \hat{\bW} \bone
\end{align*}
will be somewhat correlated with $\bx$.
For the models of Definition~\ref{def:probs}, $\hat{\bW}\bone$ will further be a standard Gaussian random vector (up to a negligible adjustment).
In particular, if $\sigma$ is monotone, then $\bD$ will become larger entrywise as $\beta$ increases, and will have larger diagonal entries in the coordinates where $\bx$ is larger.

While it is tempting to dispense with $\sigma$ entirely, we will see below that we have $\|\hat{\bY}\| = O(1)$, while standard asymptotics about the maximum of independent Gaussian random variables $\max_{i = 1}^n |(\hat{\bW}\bone)_i| = \Omega(\sqrt{\log n})$.
So, we must ``tame'' the largest entries of $\bD$ by applying $\sigma$ so that it can ``cooperate'' with $\hat{\bY}$ in determining the largest eigenvalue of $\bL$ rather than dominating $\hat{\bY}$.

For further intuition about the $\bD$ term, note that if $\hat{\bY}$ is a normalized adjacency matrix of a graph, then the vector $\hat{\bY}\bone$ contains normalized and centered degrees of each vertex in the graph.
If a random graph is deformed to have a planted clique (see Section~\ref{sec:clique}) or an unusually dense subgraph, then this degree vector carries some information about which vertices belong to this planted structure.
As we discuss in Section~\ref{sec:related}, both spectral and degree-based algorithms have been studied before for such problems, and the $\sigma$-Laplacians describe a simple and tunable family of ``hybrid'' algorithms involving both kinds of information.
This example is also why we call $\bL$ a ``Laplacian,'' since its definition resembles that of the graph Laplacian, the difference of the (unnormalized) diagonal degree and adjacency matrices of a graph.

Our original motivation, which we discuss in greater detail in Section~\ref{sec:gnn} (see also Section~\ref{sec:related} for general discussion of related work), was to study a broad class of spectral algorithms where one performs PCA on $M(\bY)$ for some function $M: \R^{n \times n}_{\sym} \to \R^{n \times n}_{\sym}$.
It is reasonable to parametrize such $M(\bY)$ as a neural network, alternating linear maps and entrywise nonlinearities.
Subject to the natural criteria of \emph{equivariance} and \emph{dimension generalization} of $M$ that we explain in the Section, such $M(\bY)$ are closely related to the much-studied \emph{graph neural networks}, and the space of permissible linear maps to use is actually quite small, including the function $\bY \mapsto \diag{\bY \bone}$ that appears in nonlinear Laplacians.
To perform random matrix analysis at the level of detail that we do here, one must be careful to make sure that the spectrum of $M(\bY)$ remains on a fixed scale as one applies these transformations.
This can easily break down if one allows others of the available linear functions in $M(\bY)$, such as functions with rank-one outputs like $\bY \mapsto \bone (\bY\bone)^{\top}$.
We have found this issue to be quite delicate, so, as a first step, we have focused on nonlinear Laplacians as one special case of the above general class of flexible spectral algorithms, which do allow for sufficient control of the spectrum to study limiting empirical spectral distributions and outlier eigenvalues in detail.

\subsection{Characterization of outlier eigenvalues}\label{sec:characterization-outlier}

Our main results characterize when a $\sigma$-Laplacian matrix has an outlier eigenvalue.
Let us first quickly recall the corresponding results for the case $\sigma = 0$.

\paragraph{Prior work: \texorpdfstring{$\sigma = 0$}{sigma = 0} and direct spectral algorithms}\label{sec:direct-spectral}

In this case, when $\beta = 0$, $\hat{\bY} = \hat{\bW}$ is just a normalized Wigner matrix,\footnote{That is, a symmetric random matrix with i.i.d.\ entries above the diagonal; conventions vary for the diagonal entries, but, as we have mentioned, this choice does not affect the results we will discuss.} and it is a classical result of random matrix theory that its eigenvalues follow Wigner's \emph{semicircle law} $\mu_{\SC}$, with high probability lying in the interval $[-2 - o(1), 2 + o(1)]$ (Theorem~\ref{thm:wigner_semicircle}).
When $\beta > 0$, $\hat{\bY}$ consists of a rank-one perturbation of a Wigner
matrix.
Such random matrix distributions are commonly referred to as \emph{spiked matrix models} and have been studied extensively in high-dimensional statistics and random matrix theory \cite{Johnstone-2001-LargestEigenvaluePCA,BBAP-2005-LargestEigenvalueSampleCovariance,Paul-2007-SpikedCovariance,FP-2007-LargestEigenvalueWigner,CDMF-2009-DeformedWigner,PWBM-2018-PCAI,AKJ-2018-SpikedWigner,LM-2019-LimitsLowRankEstimation}.
The bulk eigenvalues remain stable
under this rank-one perturbation (Proposition~\ref{cor:esd_stability}) and still obey the semicircle law.
However, when the signal-to-noise ratio $\beta$ exceeds a certain threshold, the rank-one perturbation induces a single outlier eigenvalue outside of the bulk. This
sharp phase transition in the behavior of the largest eigenvalue of $\hat{\bY}$
is known as a \emph{Baik--Ben Arous--P\'{e}ch\'{e} (BBP)} transition, named after the work of \cite{BBAP-2005-LargestEigenvalueSampleCovariance}.
In this setting, it takes the following form:

\begin{thm}[\cite{FP-2007-LargestEigenvalueWigner}] \label{thm:bbp} Consider a symmetric random matrix $\bY
    = \bW + \beta\sqrt{n} \cdot \bx\bx^{\top}\in \R^{n\times n}_{\mathrm{sym}}$ as above, where $\beta>0$,
    $\bx$ is a unit vector and $\bW$ is a GOE random matrix independent of $\bx$. Then the following
    hold for the largest eigenvalue $\lambda_{1}(\hat{\bY})$ and the corresponding unit eigenvector:
    \begin{itemize}
        \item If $\beta \leq 1$, then
            $\lambda_{1}(\hat \bY)\pto 2$ and $|\langle \bv_{1}(\hat \bY), \bx\rangle| \pto
            0$ (the arrows denoting convergence in probability).

        \item If $\beta>1$, then
            $\lambda_{1}(\hat \bY)\pto \beta + 1/\beta > 2$ and $|\langle \bv_{1}(\hat\bY
            ), \bx\rangle|^2 \pto 1 - 1/\beta^2 >0$.
    \end{itemize}
\end{thm}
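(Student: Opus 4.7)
The plan is to prove the theorem via the classical \emph{secular equation} approach combined with rotational invariance of the GOE and concentration of the resolvent quadratic form. Throughout I work with $\hat{\bY} = \hat{\bW} + \beta \bx\bx^\top$.

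\textbf{Step 1: Reduce to a fixed direction via GOE invariance.} Since $\hat{\bW}$ is GOE and $\bx$ is independent of $\hat{\bW}$, the law of $(\hat{\bW}, \bx)$ is invariant under $\bx \mapsto \bm U \bx$, $\hat{\bW} \mapsto \bm U \hat{\bW} \bm U^\top$ for any orthogonal $\bm U$. Conditioning on $\bx$ and applying a rotation that sends $\bx$ to $\bm e_1$, we may assume $\bx = \bm e_1$. Then the eigenvalues of $\hat{\bY}$ depend only on $\hat{\bW}$ and $\beta$.

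\textbf{Step 2: Derive the secular equation.} Any eigenvalue $\lambda$ of $\hat{\bY}$ that is not also an eigenvalue of $\hat{\bW}$ satisfies, by the matrix determinant lemma,
\[
\det(\lambda I - \hat{\bY}) = \det(\lambda I - \hat{\bW}) \cdot \bigl(1 - \beta\, \bx^\top (\lambda I - \hat{\bW})^{-1} \bx\bigr) = 0,
\]
so outlier eigenvalues are exactly the roots, outside the spectrum of $\hat{\bW}$, of
\[
g_n(\lambda) \colonequals \beta\, \bx^\top (\lambda I - \hat{\bW})^{-1} \bx = 1.
\]

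\textbf{Step 3: Concentration to the Stieltjes transform of the semicircle law.} For any fixed $\lambda > 2$, one shows (for instance by expanding in the eigenbasis of $\hat{\bW}$ and using that $\bx = \bm e_1$ is, under the GOE, a uniform unit vector independent of the spectrum, together with standard concentration of the empirical spectral distribution, Proposition~\ref{cor:esd_stability} and Theorem~\ref{thm:wigner_semicircle}) that
\[
g_n(\lambda) \pto \beta\, m_{\SC}(\lambda), \qquad m_{\SC}(\lambda) = \int \frac{d\mu_{\SC}(t)}{\lambda - t} = \frac{\lambda - \sqrt{\lambda^2 - 4}}{2},
\]
uniformly on compact subsets of $(2, \infty)$. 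Combined with rigidity of the edge of the bulk (so that $\lambda_1(\hat{\bW}) = 2 + o(1)$), this identifies the candidate outlier location.

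\textbf{Step 4: Solve the limiting secular equation.} The function $\lambda \mapsto m_{\SC}(\lambda)$ is strictly decreasing on $(2, \infty)$ with $m_{\SC}(2^+) = 1$ and $m_{\SC}(\lambda) \to 0$ as $\lambda \to \infty$. Hence $\beta\, m_{\SC}(\lambda) = 1$ has a solution $\lambda_* > 2$ iff $\beta > 1$, and that solution is $\lambda_* = \beta + 1/\beta$. For $\beta \leq 1$ the equation has no admissible root, so no outlier emerges above $2$; combined with the bulk convergence, this forces $\lambda_1(\hat{\bY}) \to 2$. For $\beta > 1$, a continuity/monotonicity argument on $g_n$ (which is decreasing between consecutive eigenvalues of $\hat{\bW}$ and blows up near each) pins down a unique root near $\beta + 1/\beta$, yielding $\lambda_1(\hat{\bY}) \pto \beta + 1/\beta$.

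\textbf{Step 5: Eigenvector overlap.} For $\beta > 1$, let $\bv_1$ be the top unit eigenvector of $\hat{\bY}$. Write $\hat{\bW} = \sum_i \mu_i \bu_i \bu_i^\top$ and expand the eigenvalue equation $(\hat{\bW} + \beta \bx\bx^\top) \bv_1 = \lambda_1 \bv_1$ to get $\bv_1 \propto (\lambda_1 I - \hat{\bW})^{-1} \bx$. Normalizing,
\[
|\langle \bv_1, \bx\rangle|^2 = \frac{\bigl(\bx^\top (\lambda_1 I - \hat{\bW})^{-1} \bx\bigr)^2}{\bx^\top (\lambda_1 I - \hat{\bW})^{-2} \bx} \pto \frac{m_{\SC}(\lambda_*)^2}{-m_{\SC}'(\lambda_*)} = 1 - \frac{1}{\beta^2},
\]
where the second limit uses $\bx^\top (\lambda_1 I - \hat{\bW})^{-2} \bx \pto -m_{\SC}'(\lambda_*)$ and the explicit form of $m_{\SC}$. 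For $\beta \leq 1$, since $\lambda_1(\hat{\bY})$ merges into the bulk, the top eigenvector delocalizes and $|\langle \bv_1, \bx\rangle| \pto 0$; this follows from the same resolvent analysis together with the absence of a separated spectral mass.

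\textbf{Main obstacle.} The only nontrivial analytic work is Step 3, i.e., showing sufficiently strong concentration of the quadratic form $\bx^\top (\lambda I - \hat{\bW})^{-1} \bx$ at $m_{\SC}(\lambda)$ uniformly for $\lambda$ near the candidate outlier, and simultaneously the edge rigidity $\lambda_1(\hat{\bW}) = 2 + o(1)$ needed to rule out spurious poles of $g_n$. Since the theorem is quoted from \cite{FP-2007-LargestEigenvalueWigner}, these ingredients are by now standard, and the rest of the argument is essentially algebraic manipulation of $m_{\SC}$.
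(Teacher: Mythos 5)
The paper does not prove Theorem~\ref{thm:bbp}: it is quoted directly from \cite{FP-2007-LargestEigenvalueWigner} as a known prior result, so there is no ``paper's own proof'' to compare against. That said, your proof sketch is essentially the standard resolvent/secular-equation argument used in the literature for additive rank-one deformations of Wigner matrices (due to F\'eral--P\'ech\'e and, in more general rank-$r$ form, Capitaine--Donati-Martin--F\'eral and Benaych-Georges--Nadakuditi), and all the steps are sound. In particular: the matrix determinant lemma in Step 2, the concentration of $\bx^{\top}(\lambda I - \hat{\bW})^{-1}\bx$ to $G_{\mu_{\SC}}(\lambda) = \frac{\lambda - \sqrt{\lambda^2-4}}{2}$ in Step 3, the algebra producing $\lambda_* = \beta + 1/\beta$ in Step 4, and the eigenvector overlap identity
\[
|\langle \bv_1, \bx\rangle|^2 = \frac{\bigl(\bx^\top(\lambda_1 I - \hat{\bW})^{-1}\bx\bigr)^2}{\bx^\top(\lambda_1 I - \hat{\bW})^{-2}\bx} \pto \frac{G_{\mu_{\SC}}(\lambda_*)^2}{-G_{\mu_{\SC}}'(\lambda_*)} = 1 - \frac{1}{\beta^2}
\]
in Step 5 are all correct (using $-G_{\mu_{\SC}}'(\lambda_*) = 1/(\beta^2-1)$ at $\lambda_* = \beta+1/\beta$). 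Two minor points of wording: in Step 3, after rotating so $\bx = \bm{e}_1$, the relevant fact is that the \emph{eigenvectors} of the GOE matrix are Haar-distributed and independent of the spectrum, which makes the quadratic form concentrate; your phrasing attributes the uniformity to $\bx$ rather than to the eigenbasis. And for $\beta \leq 1$, the claim that the top eigenvector delocalizes (so that $|\langle\bv_1,\bx\rangle| \pto 0$) does deserve its own argument---one cannot directly use the $\bv_1 \propto (\lambda_1 I - \hat{\bW})^{-1}\bx$ representation since the resolvent is evaluated near the spectral edge where it does not concentrate; the standard fix is a local law at the edge or a deterministic-equivalent estimate, as in the references. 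Worth noting also that it is in fact this very resolvent/secular-equation method that the paper itself adapts in Section~3 (Lemma~\ref{lem:outlier_of_X} and Proposition~\ref{prop:theta_sigma}) when analyzing the spike inside the signal matrix $\bX$, so your approach closely parallels the analogous portion of the paper's own machinery.
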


For models as in Definition~\ref{def:probs}, this result implies the following analysis of direct spectral algorithms:
\begin{cor}
    \label{cor:direct-spectral}
    In a model of Sparse Biased PCA as in Definition~\ref{def:probs}, a direct spectral algorithm (the algorithm of Definition~\ref{def:L-alg} with $\sigma = 0$) achieves strong
    detection and weak recovery if and only if $\beta > \beta_*(0) \colonequals 1$.
\end{cor}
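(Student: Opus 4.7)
The plan is to deduce Corollary~\ref{cor:direct-spectral} directly from Theorem~\ref{thm:bbp} and Wigner's semicircle law after verifying that the randomness of $\bx$ in Definition~\ref{def:probs} poses no obstacle. Since the direct spectral algorithm is the case $\sigma = 0$, we have $\bL_{0}(\hat\bY) = \hat\bY = \beta \bx\bx^{\top} + \hat\bW$, so the entire analysis reduces to the top eigenpair of $\hat\bY$.

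First I would verify that Theorem~\ref{thm:bbp} applies in this setting. In Definition~\ref{def:probs}, $\bx = \by / \|\by\|$ is a random unit vector, well-defined with probability $1 - o(1)$ since the assumptions $p = \omega((\log n)^c / n)$ and $\mathbb{E}_{z \sim \eta}\,z > 0$ imply $\|\by\| > 0$ with high probability, and $\bx$ is independent of $\bW$ by construction. Because the conclusions of Theorem~\ref{thm:bbp} are convergences in probability whose limits do not depend on $\bx$ beyond its being a unit vector, they hold conditionally on $\bx$ and therefore also under the joint law $\Ps_{n,\beta}$.

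For the ``if'' direction, suppose $\beta > 1$. Theorem~\ref{thm:bbp} yields $\lambda_1(\hat\bY) \pto \beta + 1/\beta > 2$ under $\Ps_{n,\beta}$, while under $\Ps_{n,0}$ we have $\hat\bY = \hat\bW$ and $\lambda_1(\hat\bW) \pto 2$ by the semicircle law. Taking $\tau \colonequals (2 + \beta + 1/\beta)/2$, which lies strictly between these two limits, both the type~I and type~II error probabilities vanish, giving strong detection. For recovery, Theorem~\ref{thm:bbp} also gives $|\langle \bv_1(\hat\bY), \bx\rangle| \pto \sqrt{1 - 1/\beta^2} > 0$, so any $\delta < \sqrt{1 - 1/\beta^2}$ witnesses weak recovery.

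For the ``only if'' direction, suppose $\beta \leq 1$. Theorem~\ref{thm:bbp} gives $\lambda_1(\hat\bY) \pto 2$ under $\Ps_{n,\beta}$, matching the null. For any fixed $\tau < 2$ the null type~I error $\Ps_{n,0}(\lambda_1(\hat\bW) \geq \tau) \to 1$, and for any $\tau > 2$ the alternative type~II error $\Ps_{n,\beta}(\lambda_1(\hat\bY) < \tau) \to 1$, so neither choice gives strong detection. The borderline $\tau = 2$ is handled by invoking the Tracy--Widom fluctuation theorem for GOE, which implies $\Ps_{n,0}(\lambda_1(\hat\bW) < 2)$ converges to a value in $(0, 1)$ rather than to $1$, so this threshold also fails. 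Weak recovery likewise fails, since $|\langle \bv_1(\hat\bY), \bx\rangle| \pto 0$ forces $\Ps_{n,\beta}(|\langle \bv_1(\hat\bY), \bx\rangle| \geq \delta) \to 0$ for every fixed $\delta > 0$, violating $\liminf > 0$. The main (and only) obstacle is this boundary case $\tau = 2$: ruling it out requires either a fluctuation statement like Tracy--Widom or some comparable argument showing that $\lambda_1(\hat\bW)$ is not asymptotically concentrated strictly below $2$ under the null. Everything else is immediate from Theorem~\ref{thm:bbp} together with the semicircle law.
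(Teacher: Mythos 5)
Your proof is correct, and it takes the natural route the paper implicitly relies on: reducing to $\hat\bY = \beta\bx\bx^\top + \hat\bW$ and reading off the two regimes from Theorem~\ref{thm:bbp}. A couple of small remarks. First, what you call ``the semicircle law'' for the conclusion $\lambda_1(\hat\bW) \pto 2$ under the null is really Proposition~\ref{prop:largest_eigenvalue_wigner} (Wigner's semicircle theorem alone controls the bulk, not the top eigenvalue), though this is a standard conflation. Second, you correctly identify that the only non-trivial point in the ``only if'' direction is the boundary threshold $\tau = 2$, and your invocation of Tracy--Widom fluctuations for the GOE to show $\Ps_{n,0}(\lambda_1(\hat\bW) < 2) \not\to 1$ is valid (any argument showing $\lambda_1(\hat\bW)$ exceeds $2$ with asymptotically positive probability would do). One could instead rule out every eigenvalue-based test at once for $\beta < 1$ by citing the paper's Theorem~\ref{thm:contiguity}(1) via \cite{MRZ-2015-LimitationsSpectral}, but that result is stated only for the Gaussian Planted Submatrix model and does not cover the boundary $\beta = 1$, so your more elementary threshold-by-threshold argument is actually better matched to the full ``if and only if'' claim here. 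Your observation that the theorem already covers random $\bx$ independent of $\bW$ is right (indeed, Theorem~\ref{thm:bbp} is stated for such $\bx$), so that part of the check is automatic.
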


\paragraph{Our contribution: \texorpdfstring{$\sigma \neq 0$}{sigma nonzero} and nonlinear Laplacian spectral algorithms}

We now present our results, generalizing part of Theorem~\ref{thm:bbp} to (compressed) nonlinear Laplacian matrices.
We always make the following assumptions on the nonlinearity $\sigma$ without further mention.

\begin{ass}
    [Properties of $\sigma$] \label{ass:sigma}
    We assume that:
    \begin{enumerate}
        \item $\sigma$ is monotonically non-decreasing.
        \item $\sigma$ is bounded: $|\sigma(x)| \leq K$ for some $K > 0$ and all $x \in \R$.
        \item $\sigma$ is $\ell$-Lipschitz for some $\ell > 0$.
    \end{enumerate}
\end{ass}
\noindent
We write $\edge^+(\sigma) \colonequals \sup_{x \in \R} \sigma(x)$, which is finite by the second assumption, and $\sigma(\R)$ for the image of $\sigma$, which is an interval (open or closed on either side) of $\R$ of finite length by the first two assumptions.

For now we give just the final result of our analysis, and describe the idea of the derivation below.
Our main result is as follows:
\begin{thm}
    \label{thm:main}
    For a model of Sparse Biased PCA as in Definition~\ref{def:probs}, define
    \[ m_1 \colonequals \Ex_{x \sim \eta} x > 0, \hspace{1em} m_2 \colonequals \Ex_{x \sim \eta} x^2. \]
    Given $\sigma$, define $\theta = \theta_{\sigma}(\beta)$ to solve the equation
    \[ \Ex_{\substack{y \sim \eta \\ g \sim \dnorm{0, 1}}}\left[\frac{y^2}{\theta - \sigma(\frac{m_1}{m_2}\beta y + g)}\right] = \frac{m_2}{\beta} \]
            if such $\theta > \edge^+(\sigma)$ exists, and $\theta = \edge^+(\sigma)$ otherwise.
    The following hold almost surely for the sequence of compressed $\sigma$-Laplacians $\widetilde{\bL} = \widetilde{\bL}^{(n)}$:
    \begin{itemize}
        \item If $\Ex_{g \sim \mathcal{N}(0, 1)}\left[\frac{1}{(\theta_{\sigma}(\beta) - \sigma(g))^2}\right] \geq 1$, then 
            \[
                \lambda_{1}(\tilde \bL^{(n)}) \to \mathrm{edge}^{+}(\mu
                _{\SC}\boxplus \sigma(\dnorm{0, 1})), 
            \]
            the right boundary point of the support
                of the probability measure $\mu
                _{\SC}\boxplus \sigma(\dnorm{0, 1})$.
                Here $\mu_{\SC}$ is Wigner's semicircle law, $\boxplus$ is the additive free convolution operation presented in Section~\ref{sec:free-prob}, and $\sigma(\dnorm{0, 1})$ is the pushforward of the standard Gaussian by $\sigma$. Moreover,
            \[ |\langle \bx, \bV\bv_1(\tilde\bL^{(n)})\rangle| \to 0. \]

        \item If $\Ex_{g \sim \mathcal{N}(0, 1)}\left[\frac{1}{(\theta_{\sigma}(\beta) - \sigma(g))^2}\right] < 1$, then 
            \begin{align*}
                \lambda_{1}(\tilde\bL^{(n)}) &\to \theta_{\sigma}(\beta) + \Ex_{g \sim \dnorm{0, 1}}\left[\frac{1}{\theta_{\sigma}(\beta) - \sigma(g)}\right] > \mathrm{edge}^{+}(\mu
                _{\SC}\boxplus \sigma(\dnorm{0, 1})),\\
                |\langle \bx, \bV\bv_1(\tilde\bL^{(n)})\rangle|^2 &\to
                \frac{m_2}{\beta^2}\left(\Ex_{\substack{y \sim \eta \\ g \sim \dnorm{0, 1}}}\left[\frac{y^2}{\big(\theta_\sigma(\beta)-\sigma\!\big(\tfrac{m_1}{m_2}\beta y + g\big)\big)^2}\right]\right)^{-1}\\
                &\hspace{3.1em} \left(1-\Ex_{g\sim \dnorm{0,1}}\!\left[\frac{1}{\big(\theta_\sigma(\beta)-\sigma(g)\big)^2}\right]\right) > 0.
            \end{align*}
    \end{itemize}
\end{thm}
In the special case where the entrywise condition $x_i\geq 0$ holds almost surely (i.e., $\eta$ in Definition~\ref{def:probs} is a probability measure on $\R_{\geq 0}$), there is a unique $\beta_* = \beta_*(\sigma) > 0$ that solves
     \[ \Ex_{g \sim \mathcal{N}(0, 1)}\left[\frac{1}{(\theta_{\sigma}(\beta_*) - \sigma(g))^2}\right] = 1, \]
and the conditions of the two cases above are equivalent to $\beta\leq \beta_*$ and $\beta>\beta_*$, respectively.
In that case, this result precisely identifies the critical signal strength $\beta_*(\sigma)$ mentioned earlier, the threshold beyond which the $\sigma$-Laplacian has an outlier eigenvalue.\footnote{For general $\eta$, our results do allow for the possibility that, as $\beta$ increases, an outlier eigenvalue first appears, then disappears, then appears again, and so forth. We expect this not to happen for most choices of $\eta$ and $\sigma$, but we leave it to future work to understand when (if ever) this more complicated situation arises. See Section~\ref{app:pf:thm:main} for some more discussion of similar issues in prior work.}
One may also check that, setting $\sigma = 0$ and using that in this case $\edge^+(\mu_{\SC} \boxplus \sigma(\dnorm{0, 1})) = \edge^+(\mu_{\SC}) = 2$, this result is indeed compatible with Theorem~\ref{thm:bbp}, giving $\beta_*(0) = 1$.

\begin{figure}
    \begin{center}
        \renewcommand{\arraystretch}{1.5} %
        \setlength{\tabcolsep}{4pt} %
        \begin{tabularx}
            {\textwidth}{@{}c *{4}{X}} \toprule Algorithm & \makecell[bc]{Eigenvalues: $\beta = 0$}
            & \makecell[bc]{Eigenvalues: $\beta = 0.9$} & \makecell[bc]{Eigenvalues: $\beta = 1.2$} \\ \midrule \\[-1em]
            \makecell[cc]{\,\,\,\,Direct ($\sigma = 0$) \\
            \includegraphics[width=0.207\textwidth]{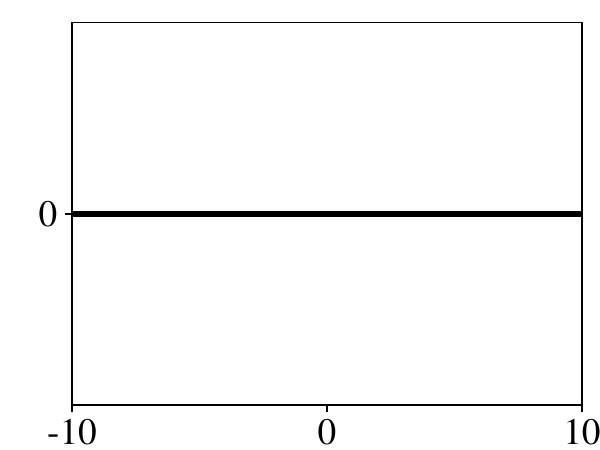}}
            &
            \vspace{-2.93em}\includegraphics[width=0.25\textwidth]{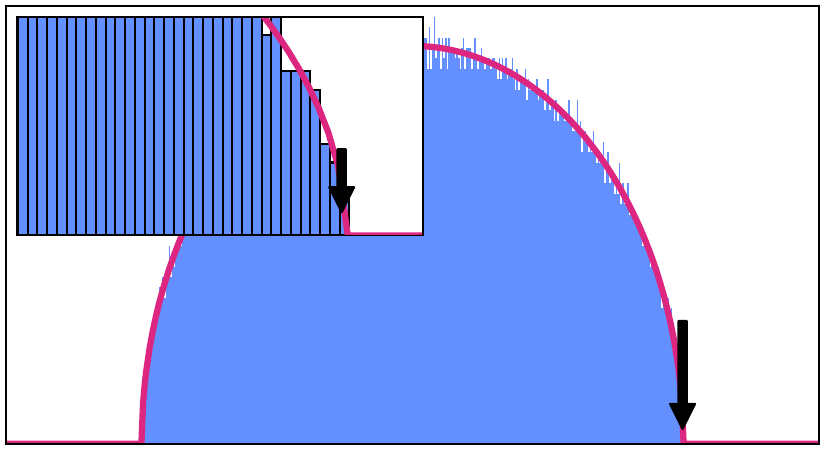}
            &
            \vspace{-2.93em}\includegraphics[width=0.25\textwidth]{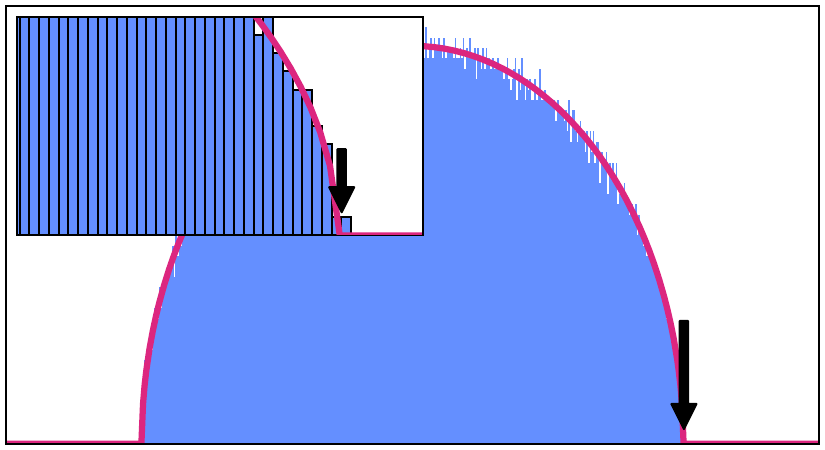}
            &
            \vspace{-2.93em}\includegraphics[width=0.25\textwidth]{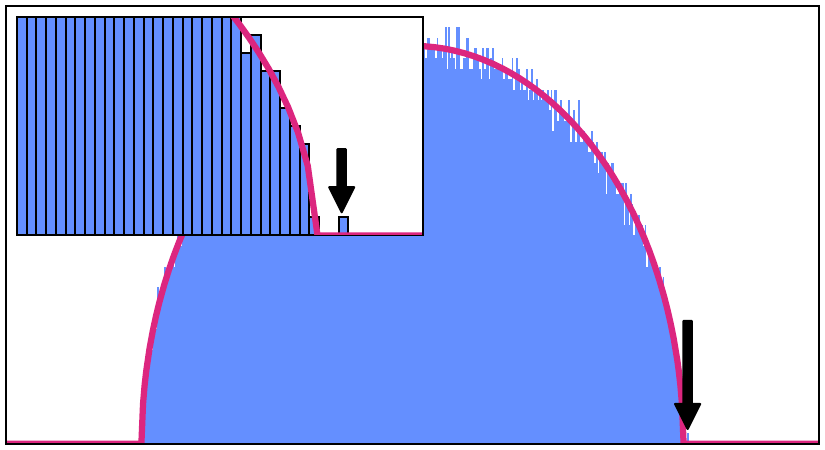}
            \\[4em]
            \makecell[cc]{\,\,\,$\sigma$-Laplacian \\
            \includegraphics[width=0.207\textwidth]{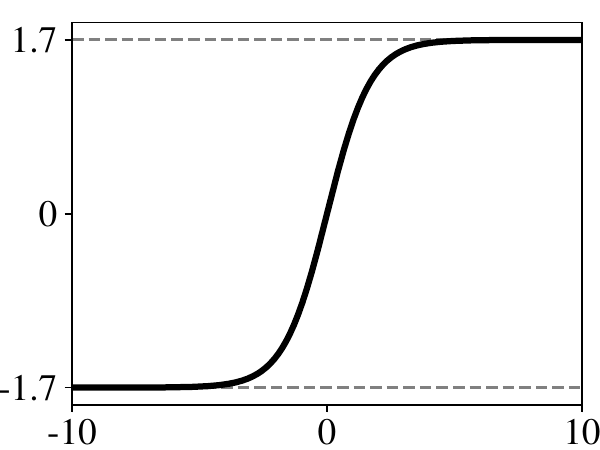}}
            &

            \vspace{-2.93em}\includegraphics[width=0.25\textwidth]{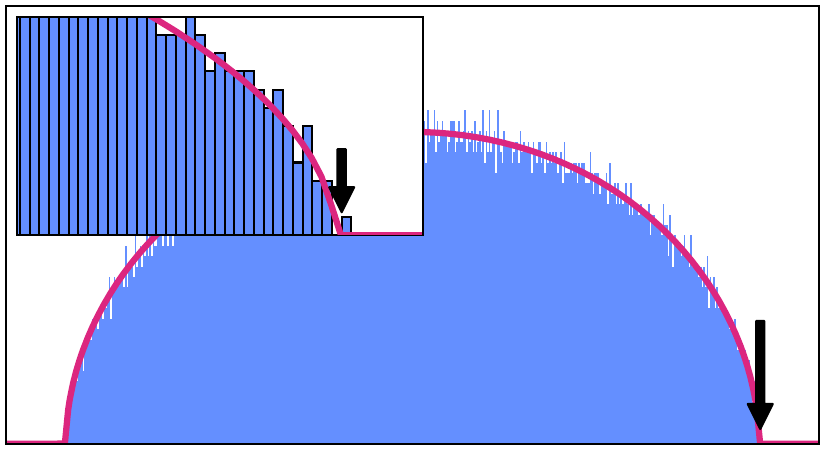}
            &
            \vspace{-2.93em}\includegraphics[width=0.25\textwidth]{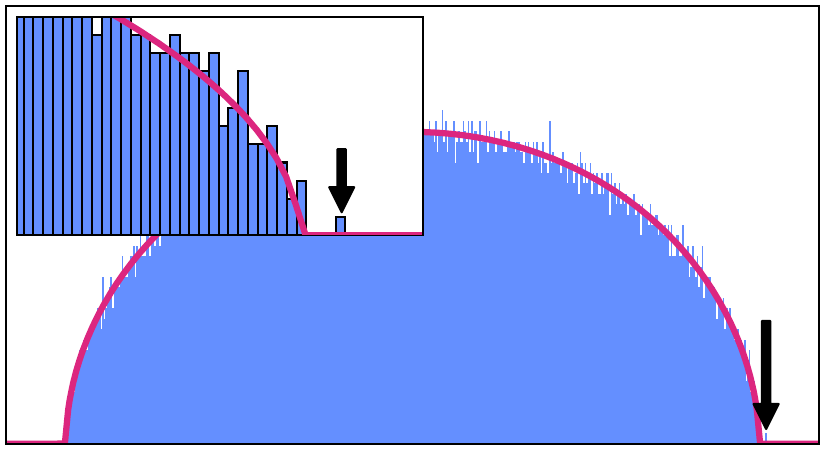}
            &
            \vspace{-2.93em}\includegraphics[width=0.25\textwidth]{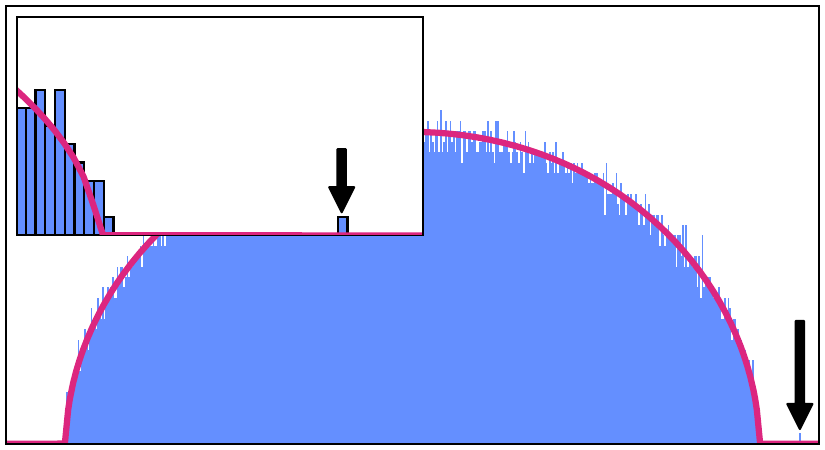}
            \\ \bottomrule
        \end{tabularx}
    \end{center}
    \caption{A comparison of the eigenvalues of the matrices used by a direct spectral algorithm and a $\sigma$-Laplacian spectral algorithm for the Gaussian Planted Submatrix problem. The left column shows the $\sigma$ used (which is zero for the direct algorithm). The other columns show the empirical eigenvalue distributions for various signal strengths $\beta$. The maximum eigenvalue is indicated with an arrow to highlight outliers, and the analytic prediction of the limiting eigenvalue density is plotted in red.}
    \label{fig:submatrix-histograms}
\end{figure}

\subsection{Example: Gaussian Planted Submatrix and Planted Clique models}

We demonstrate the concrete consequence of Theorem~\ref{thm:main} for the model proposed in Example~\ref{ex:submatrix} above.
This is conditional on the accuracy of the numerical evaluation of the Gaussian expectations appearing in the Theorem. These involve only low-dimensional
function and integral evaluations and we are confident that our numerical
solutions are accurate, but we mark the following result with $^{\textbf{(n)}}$
to indicate its mild conditional nature.

\begin{thm}
    [Gaussian Planted Submatrix\,$^{\textbf{(n)}}$] \label{thm:gauss-submx} There
    exist $\sigma: \R \to \R$ and $\tau \in \R$ such that the following holds for the choices of Example~\ref{ex:submatrix} substituted into the setting of Definition~\ref{def:probs}.
    If $\beta > \num{0.76}$ and $\bY \sim \Ps_{n, \beta}$, then the compressed $\sigma$-Laplacian $\widetilde{\bL}_{\sigma}
            (\hat{\bY})$ with high probability has a single outlier eigenvalue (see Figure~\ref{fig:submatrix-histograms} for an illustration
            and Section~\ref{sec:tools-rmt} for precise definitions), 
            and $|\langle \bx, \bV\bv_{1}(\tilde\bL_{\sigma}(\hat{\bY}))\rangle|$ converges in probability to a strictly positive deterministic number.
    In particular, if $\beta > \num{0.76}$, then the compressed $\sigma$-Laplacian
            spectral algorithm with threshold $\tau$ succeeds in strong detection and weak recovery in the Gaussian Planted Submatrix model.
\end{thm}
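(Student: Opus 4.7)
The plan is to specialize Theorem~\ref{thm:main} to the parameters of Example~\ref{ex:submatrix} and then exhibit numerically a nonlinearity $\sigma$ whose critical threshold $\beta_*(\sigma)$ lies strictly below $0.76$. Since $\eta = \delta_1$ we have $m_1 = m_2 = 1$, so the defining equations for $\theta = \theta_\sigma(\beta)$ and $\beta_*(\sigma)$ collapse to one-dimensional Gaussian integrals,
\[ \Ex_{g \sim \dnorm{0,1}}\!\left[\frac{1}{\theta - \sigma(\beta + g)}\right] = \frac{1}{\beta}, \qquad \Ex_{g \sim \dnorm{0,1}}\!\left[\frac{1}{(\theta_\sigma(\beta_*) - \sigma(g))^2}\right] = 1. \]
These are scalar fixed-point equations in $\theta$ and $\beta_*$ that can be solved by nested one-dimensional root finding: the left-hand side of the first is strictly decreasing in $\theta$ on $(\edge^+(\sigma),\infty)$, so bisection determines $\theta_\sigma(\beta)$, after which a second bisection in $\beta$ yields $\beta_*(\sigma)$.

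Next, I would fix $\sigma$ to be the bounded, monotone, Lipschitz nonlinearity displayed in the second row of Figure~\ref{fig:submatrix-histograms} (a suitably scaled and shifted hyperbolic tangent), which satisfies Assumption~\ref{ass:sigma} by inspection. Numerical quadrature (e.g.\ Gauss--Hermite or adaptive) of the two Gaussian expectations above then yields $\beta_*(\sigma) < 0.76$ for this $\sigma$; this is the step that the ${}^{\textbf{(n)}}$ annotation flags, and for which the paper's later tuning procedures are well suited. I would also record that the outlier location $F_\sigma(\beta) \colonequals \theta_\sigma(\beta) + \Ex_{g \sim \dnorm{0,1}}[1/(\theta_\sigma(\beta) - \sigma(g))]$ appearing in the second bullet of Theorem~\ref{thm:main} is strictly increasing in $\beta$ on $(\beta_*(\sigma), \infty)$: implicit differentiation of the first equation together with monotonicity of $\sigma$ makes $\theta_\sigma(\beta)$ increasing in $\beta$, and $F_\sigma'(\theta) = 1 - \Ex_g[1/(\theta - \sigma(g))^2] > 0$ for $\theta$ strictly above the minimizing value pinned down by the second equation.

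With these facts in hand, pick any threshold
\[ \tau \in \bigl(\edge^+(\mu_{\SC} \boxplus \sigma(\dnorm{0,1})),\; F_\sigma(0.76)\bigr), \]
which is nonempty by continuity of $F_\sigma$ at $\beta_*(\sigma)$, where $F_\sigma(\beta_*(\sigma)) = \edge^+(\mu_{\SC} \boxplus \sigma(\dnorm{0,1}))$, together with the strict monotonicity of $F_\sigma$ above the threshold. Theorem~\ref{thm:main} now finishes the proof: under the null ($\beta = 0$) the first bullet gives $\lambda_1(\widetilde{\bL}) \to \edge^+(\mu_{\SC} \boxplus \sigma(\dnorm{0,1})) < \tau$ almost surely, and under the alternative with $\beta > 0.76 > \beta_*(\sigma)$ the second bullet combined with monotonicity of $F_\sigma$ gives $\lambda_1(\widetilde{\bL}) \to F_\sigma(\beta) > F_\sigma(0.76) > \tau$ almost surely. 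Hence the test $\one\{\lambda_1(\widetilde{\bL}) \geq \tau\}$ achieves strong detection. The main obstacle is the numerical certification of $\beta_*(\sigma) < 0.76$ for a concrete $\sigma$; the analytic content is entirely carried by Theorem~\ref{thm:main}, with the remaining steps being soft monotonicity and continuity arguments.
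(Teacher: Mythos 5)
Your proposal is correct and follows essentially the same route as the paper: specialize Theorem~\ref{thm:main} to $\eta = \delta_1$, $m_1 = m_2 = 1$ (giving exactly the simplified integral equations you write, which the paper records in Section~4.1), numerically certify $\beta_*(\sigma) < 0.76$ for a concrete $\sigma$ (the ``S-shaped'' $a\tanh(bx)$ family in the paper's Figure~\ref{fig:methods-comparison}), and then threshold between the null limit $\edge^+(\mu_{\SC} \boxplus \sigma(\dnorm{0,1}))$ and the alternative limit. The paper states this derivation informally rather than as a displayed proof, so your write-up is an accurate reconstruction.

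One small note in your favor: you explicitly fill in the monotonicity of $\beta \mapsto \theta_\sigma(\beta)$ (via implicit differentiation using $\sigma' \geq 0$) and hence of $F_\sigma(\beta) = H_\nu(\theta_\sigma(\beta))$ on $(\beta_*,\infty)$, which is needed to justify that a \emph{single} threshold $\tau$ works uniformly for all $\beta > 0.76$ rather than a $\beta$-dependent one; the paper leaves this implicit. Also, for nonemptiness of the admissible $\tau$-interval you do not actually need the boundary identity $F_\sigma(\beta_*) = \edge^+(\mu_{\SC} \boxplus \sigma(\dnorm{0,1}))$ — the strict inequality in the second bullet of Theorem~\ref{thm:main} applied at $\beta = 0.76 > \beta_*$ already gives $F_\sigma(0.76) > \edge^+(\mu_{\SC} \boxplus \sigma(\dnorm{0,1}))$.
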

\noindent
Underlying this result is a choice of $\sigma$ for which $\beta_*(\sigma) < 0.76$; we discuss below in Section~\ref{sec:results-numerical} various ways one can find $\sigma$ achieving the above, and illustrate one such $\sigma$ in Figure~\ref{fig:submatrix-histograms}.

\begin{figure}
    \centering
    \includegraphics[width=0.7\linewidth]{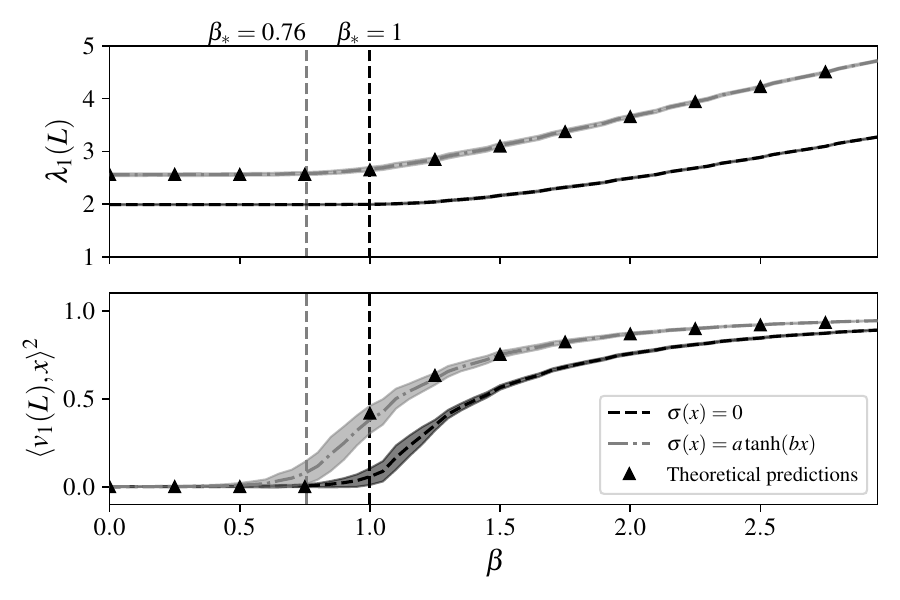}
    \caption{The phase transition of top eigenvalue and eigenvector for direct and nonlinear Laplacian spectral algorithms. Each point of the plot shows the average over 500 random inputs of the largest eigenvalue and the squared correlation of the top eigenvector with the signal $\bx$ for a nonlinear Laplacian $\bL_{\sigma}(\bY)$ with $\bY$ drawn from the Gaussian Planted Submatrix model, for $\sigma = 0$ (dark line) and $\sigma$ the best ``S-shaped'' sigmoid nonlinearity found by black-box optimization (light line). Error bars have width of plus/minus one standard deviation over these trials. We also include the theoretical prediction from Theorem~\ref{thm:main}.}
    \label{fig:eigenvector}
\end{figure}

There are two reasonable benchmarks with which to compare this performance.
On the one hand, $\beta_{*}(0) = 1$ is the corresponding threshold for
the direct spectral algorithm, per Theorem~\ref{thm:bbp}. In words, our result says
that only 76\% as strong of a signal is required by a suitable nonlinear
Laplacian to achieve strong detection.
On the other hand, the work of \cite{HWX-2017-SubmatrixLocalizationMessagePassing} shows that a belief propagation~(BP) algorithm achieves weak recovery provided $\beta > 1 / \sqrt{e} \approx 0.61$.
In this setting of dense input data, BP is likely also to behave similarly to approximate message passing (AMP), an approximation that is more efficient to compute.
Thus the performance of our nonlinear Laplacian algorithm lies between that of the direct spectral algorithm and BP/AMP, while our algorithm enjoys the advantages of being conceptually simpler than BP/AMP and only making a small modification to the direct spectral algorithm.
(See Section~\ref{sec:related} for a more detailed comparison with BP/AMP.)

We may also better understand the individual power of the two
components of any $\sigma$-Laplacian, and show that they \emph{must} be
combined in order to achieve the above performance: neither the eigenvalues of $\hat
{\bY}$ nor the values of $\hat{\bY}\bone$ alone can achieve strong detection for any
$\beta < 1$.
\begin{thm}
    \label{thm:contiguity} The following hold in the Gaussian Planted Submatrix model (the choices of Example~\ref{ex:submatrix} substituted into the setting of Definition~\ref{def:probs}):
    \begin{enumerate}
        \item If $\beta < 1$, then there is no function of
            the vector $(\lambda_{1}(\hat{\bY}), \dots, \lambda_{n}(\hat{\bY}))$ that
            achieves strong detection. (This result is due to prior work of
            \cite{MRZ-2015-LimitationsSpectral}.)

        \item For \emph{any} $\beta \geq 0$ (not depending on $n$), there is no function
            of the vector $\hat{\bY}\bone$ that achieves strong detection. (This result is our
            contribution, which we prove in greater generality than just the Gaussian Planted Submatrix model; see Theorem~\ref{thm:contiguity-general}.)
    \end{enumerate}
\end{thm}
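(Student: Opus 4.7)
Part (i) is attributed to \cite{MRZ-2015-LimitationsSpectral}, so my plan concerns part (ii). The strategy is the classical second-moment method for contiguity, applied to the pushforward laws of $\bt \colonequals \hat{\bY}\bone \in \R^n$. Concretely, the goal is to establish $\limsup_{n \to \infty} \EXP[\Ps_{n,0}]{L(\bt)^2} < \infty$ for the restricted likelihood ratio $L(\bt) \colonequals d\Ps_{n,\beta}^{\bt} / d\Ps_{n,0}^{\bt}$, which implies contiguity of $\Ps_{n,\beta}^{\bt}$ with respect to $\Ps_{n,0}^{\bt}$ and hence rules out strong detection by any function of $\bt$: if $f(\bt)$ satisfied $\Ps_{n,0}[f = 1] \to 0$, contiguity would force $\Ps_{n,\beta}[f = 1] \to 0$ as well, contradicting strong detection.

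The calculation is tractable because both conditional laws of $\bt$ are Gaussian. A direct computation from the GOE variance convention gives $\bt \sim \dnorm{0, \Sigma_n}$ under $\Ps_{n,0}$ with $\Sigma_n = I_n + \tfrac{1}{n}\bone\bone^{\top}$, while under $\Ps_{n,\beta}$ conditioned on the signal $\bx$, $\bt \sim \dnorm{\mu(\bx), \Sigma_n}$ with $\mu(\bx) \colonequals \beta\langle \bx, \bone\rangle \bx$. The restricted likelihood ratio is thus
\[
    L(\bt) = \Ex_{\bx}\!\left[\exp\!\left(\mu(\bx)^{\top}\Sigma_n^{-1}\bt - \tfrac{1}{2}\mu(\bx)^{\top}\Sigma_n^{-1}\mu(\bx)\right)\right],
\]
and the Gaussian MGF identity applied under $\Ps_{n,0}$ collapses its second moment to
\[
    \EXP[\Ps_{n,0}]{L(\bt)^2} = \EXP[\bx,\bx']{\exp\!\left(\mu(\bx)^{\top}\Sigma_n^{-1}\mu(\bx')\right)}
\]
for $\bx, \bx'$ i.i.d.\ copies of the signal.

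For the Gaussian Planted Submatrix ($\eta = \delta_1$ with $|S| = \beta\sqrt{n}$) we have $\mu(\bx) = \beta \bone_S$, and Sherman-Morrison applied to $\Sigma_n$ gives $\mu(\bx)^{\top}\Sigma_n^{-1}\mu(\bx') = \beta^2 |S \cap S'| - \tfrac{1}{2}\beta^4$. The intersection size $|S \cap S'|$ is hypergeometric with mean $\beta^2$, and by Hoeffding's comparison of hypergeometric and binomial MGFs, $\Ex \exp(\beta^2 |S \cap S'|) \leq (1 + (\beta/\sqrt{n})(e^{\beta^2}-1))^{\beta\sqrt{n}} \to \exp(\beta^2(e^{\beta^2}-1))$. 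Consequently $\limsup_n \EXP[\Ps_{n,0}]{L(\bt)^2} < \infty$ for every fixed $\beta \geq 0$, proving contiguity and thus the claim in this case.

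The main obstacle lies in the general extension to Definition~\ref{def:probs} (for Theorem~\ref{thm:contiguity-general}), where $\mu(\bx)$ depends on both the random support $S$ and the subgaussian entries $z_i$, and is normalized by $\|\by\|$. I would condition on the high-probability event $\|\by\|^2 = (1 + o(1))\, m_2\, pn$ (available from standard concentration for subgaussian $\eta$), after which $\mu(\bx)$ equals $(\beta m_1/m_2)\,\by$ up to $1 + o(1)$ factors; the second-moment computation then reduces to an MGF bound on $\sum_{i \in S \cap S'} z_i z_i'$, which subgaussianity of $\eta$ controls termwise. The delicate step is obtaining a bound uniform in $n$ across the entire range $\omega((\log n)^c/n) \leq p(n) \leq o(1)$ permitted by the definition, and cleanly handling the Sherman-Morrison correction, which remains nonpositive but can diverge to $-\infty$ for larger $p$; rather than any single integral, this uniformity is what I expect to be the primary technical hurdle.
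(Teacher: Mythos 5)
Your proposal is correct and takes essentially the same route as the paper: a second-moment contiguity argument applied to the pushforward law of $\hat{\bY}\bone$, exploiting that both laws are Gaussian with common covariance $\Sigma_n = I + \bone\bone^\top/n$ to collapse the second moment to a replica expectation $\Ex_{\by,\by'}\exp(\mu(\bx)^\top\Sigma_n^{-1}\mu(\bx'))$, followed by dropping the nonpositive Sherman--Morrison correction and bounding the resulting sparse overlap MGF. The paper's version of your ``condition on $\|\by\|^2 \approx m_2 pn$'' step is a truncation of $\by$ on an event $\mathcal{E}_2$ (controlled by a total-variation argument), and the uniformity issue you flag is exactly why the paper's Theorem~\ref{thm:contiguity-general} imposes the extra hypotheses $p(n) = O(n^{-1/2})$ and $\eta$ of bounded support rather than covering the full range of Definition~\ref{def:probs}.
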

\noindent
It is maybe surprising that the information contained in $\hat{\bY}\bone$, which by itself is useless for detection in this regime, is enough to ``boost'' the performance of a spectral algorithm substantially.
The question of how effective ``purely spectral'' algorithms can be for weak recovery in the Gaussian Planted Submatrix model was raised by \cite{HWX-2017-SubmatrixLocalizationMessagePassing} (their Section 1.3), asking whether the direct spectral algorithm's $\beta_* = 1$ threshold is optimal in this regard.
Our results suggest that only a small step beyond algorithms using only the eigenvalues of $\hat{\bY}$ is enough to improve on this threshold.

Finally, we offer a more speculative extensions.
As we discuss in Section~\ref{sec:clique}, from the point of view of the random matrix theory of
$\sigma$-Laplacians, the much-studied Planted Clique problem looks nearly identical to the Gaussian
Planted Submatrix problem.
We define this problem formally in Definition~\ref{def:clique}, but, in words, it is given by taking $\bY$ to be a centered adjacency matrix of an \Erdos-\Renyi\ random graph with each edge present independently with probability $1/2$, with a clique (complete subgraph) inserted on a random subset of $\beta\sqrt{n}$ vertices.
Replacing the Gaussian structure with discrete
structure creates technical challenges that we have not been able to surmount.
We are quite confident, but leave as an open problem to show, that the above
results apply directly to the Planted Clique problem.

\begin{conj}
    \label{conj:clique}
    The results of Theorem~\ref{thm:gauss-submx} hold verbatim if the Gaussian Planted Submatrix problem is replaced by the Planted Clique
    problem.
\end{conj}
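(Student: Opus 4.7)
The plan is to leverage universality of Wigner spectral statistics to transfer the conclusions of Theorem~\ref{thm:gauss-submx} and Conjecture~\ref{conj:gauss-submx-recovery} from the Gaussian model to the Bernoulli (Planted Clique) model. Writing the centered and normalized adjacency matrix of the planted graph as $\hat{\bY} = \beta \bx\bx^\top + \hat{\bW}^{(B)}$ with $\hat{\bW}^{(B)}$ a suitably scaled Wigner matrix with centered Bernoulli entries, the main task is to show that every quantity appearing in Theorem~\ref{thm:main}---the location of the bulk edge $\edge^+(\mu_{\SC} \boxplus \sigma(\dnorm{0,1}))$, the outlier location, and the eigenvector overlap---coincides with the Gaussian case asymptotically. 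Since $\sigma$ is bounded and Lipschitz, the map $\hat{\bY} \mapsto \bL_\sigma(\hat{\bY})$ is Lipschitz in operator norm (with the diagonal term being $\ell/\sqrt{n}$-Lipschitz in each entry of $\hat\bY$), so $\lambda_1(\bL_\sigma(\hat{\bY}))$ is a sufficiently smooth function of the entries of $\hat{\bW}^{(B)}$ to be accessible by interpolation arguments.

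First, I would verify that $\hat\bY \bone$ behaves as in the Gaussian case up to negligible error: entrywise, $(\hat\bY\bone)_i = \beta \langle\bx,\bone\rangle x_i + (\hat{\bW}^{(B)}\bone)_i$, and the noise coordinate is a sum of $n-1$ bounded i.i.d.\ centered random variables of unit variance, so by a quantitative CLT (Berry--Esseen plus a union bound, using the model's sparsity to handle the signal term carefully on the planted clique) the distribution of $\sigma(\hat{\bY}\bone)$ entries matches the Gaussian case within $o(1)$ in Wasserstein distance uniformly. Second, I would run a Lindeberg-style swap: replace the Bernoulli entries of $\hat{\bW}^{(B)}$ one at a time by Gaussian entries with matched first and second moments, and control the resulting change in $\lambda_1(\bL_\sigma(\hat{\bY}))$ through a third-order Taylor expansion in the individual matrix entry. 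The derivatives of $\lambda_1$ are expressible, via the Hellmann--Feynman formula, in terms of bilinear forms of the top eigenvector of $\bL_\sigma$ and of the resolvent $(\bL_\sigma - z)^{-1}$ near the spectral edge, and a key input would be local-law estimates for $\bL_\sigma$ in the Gaussian case.

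The main obstacle is precisely establishing this local law for the \emph{deformed} matrix $\bL_\sigma$, which is not a standard Wigner matrix because the diagonal entries of $\bD_\sigma(\hat{\bY})$ are correlated with every off-diagonal entry of $\hat{\bY}$ (through the row sums). One route is to ``decouple'' by conditioning on $\hat{\bY}\bone$ and analyzing the conditional law of the off-diagonal part; in the Bernoulli case this conditioning is more delicate than under Gaussian rotational invariance used implicitly in the proof of Theorem~\ref{thm:main}. A cleaner alternative may be to work directly with the compressed Laplacian $\widetilde\bL$, whose construction projects out the $\bone$ direction and partially decouples the diagonal from the bulk; then an adaptation of the resolvent comparison techniques of Tao--Vu or Erd\H{o}s--Knowles--Yau--Yin, applied to the block structure of $\widetilde\bL$, should yield the required isotropic local law and edge universality. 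Given such a local law, the Lindeberg swap and the identification of the outlier location via the same characteristic equations as in Theorem~\ref{thm:main} follow from the computations already established in the Gaussian setting, and the eigenvector overlap statement of Conjecture~\ref{conj:gauss-submx-recovery} for the Bernoulli case follows from the standard link between outlier eigenvector projections and residues of the resolvent at the outlier location.
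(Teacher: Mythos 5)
This statement is explicitly a \emph{conjecture} in the paper: the authors state that they ``are quite confident, but leave as an open problem to show, that the above results apply directly to the Planted Clique problem,'' and Section~\ref{sec:clique} spells out why they could not prove it. So there is no paper proof to compare against, and what you have written is a heuristic plan rather than a completed argument.

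Your plan is sensible at a high level---reduce to a Gaussian-vs-Bernoulli universality statement and run a Lindeberg swap through the quantities appearing in Theorem~\ref{thm:main}---but it glosses over exactly the obstacle that the paper identifies as the reason the conjecture remains open. The Gaussian analysis leans essentially on the orthogonal invariance of the GOE: the compression $\bL \mapsto \bV^\top \bL \bV$ makes the compressed noise $\tilde\bW = \bV^\top\hat\bW\bV$ exactly independent of $\hat\bW\bone$, hence of the signal matrix $\tilde\bX$ (Proposition~\ref{prop:compression}). For a Rademacher Wigner matrix this fails entirely: $\bV^\top\hat\bW\bV$ is not a Wigner matrix, and it is not independent of $\hat\bW\bone$, so one cannot feed the compressed model into the deformed-Wigner results of \cite{capitaineFreeConvolutionSemicircular2011c,boursier2024large}. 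Your proposed fix---``an adaptation of the resolvent comparison techniques of Tao--Vu or Erd\H{o}s--Knowles--Yau--Yin''---is where the real work would have to go, but no concrete substitute for the decoupling step is offered, so this remains the gap.

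Two further issues in the sketch. First, for a Lindeberg swap the control you need is not Lipschitz continuity of $\lambda_1(\bL_\sigma(\cdot))$ but smoothness to third or fourth order with operator-norm bounds on the derivatives; the nonstandard feature here is that every off-diagonal entry $W_{ij}$ enters $\bL_\sigma$ not only directly but also through the two diagonal entries $D_{ii} = \sigma((\hat\bW\bone)_i + \cdots)$ and $D_{jj}$, so the derivative computation and the variance matching in the swap are genuinely different from the classical Wigner case and would need to be carried out. Second, in the Planted Clique model the entries of $\bW = \bY - \E{\bY\mid\bx}$ indexed by $S\times S$ are identically zero, not Rademacher, so the row sums for vertices in $S$ are not sums of i.i.d.\ unit-variance terms; the paper sketches why this is spectrally negligible ($\|\bW^{(0)}_{S,S}\| = \Theta(n^{1/4})$), but your CLT step for $\hat\bY\bone$ should account for this correlation structure rather than treat the noise as i.i.d.

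In short: your outline is a plausible research program, roughly the one the authors themselves gesture at, but it does not close the conjecture, and the step you label as ``the main obstacle'' is in fact where the proof is missing, not merely a delicate detail.
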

\noindent
See Section~\ref{sec:dense} for discussion of further examples and extensions.

\subsection{Proof techniques}\label{sec:proof-techiniques}

We now sketch the analysis leading to Theorem~\ref{thm:main}.
Full proofs are given in Section~\ref{sec:rmt-proofs}.
Recall that we are interested $\hat{\bY}= \hat{\bW}+ \beta \bx \bx^{\top}$, where $\hat{\bW}$ is a Wigner
random matrix with entrywise variance $1 / n$ and $\bx$ is a unit vector. Consequently, the
matrix $\bL$ can be expressed as
\begin{equation}
    \label{eq:L_expansion}\bL = \hat{\bW}+ \underbrace{\beta \bx\bx^{\top} + \diag{\sigma(\hat{\bW} \bone + \beta \langle\bx, \bone\rangle \bx)}}
    _{\equalscolon \bX},
\end{equation}
which we interpret as a perturbation of the Wigner noise $\hat{\bW}$ by a
matrix $\bX$.

If $\sigma = 0$, the perturbation term is simply $\bX = \beta \bx\bx^{\top}$, and in particular is low-rank.
In that case, the
bulk eigenvalue distribution of $\bL$ is always the same as that of $\hat \bW$, obeying the semicircle
law. The effect of $\bX$ in such models is limited to creating potential outlier
eigenvalues, leaving the bulk spectrum unchanged.
Our setting of $\sigma \neq 0$ presents
a key difference, stemming from the fact that our $\bX$ is (usually) full-rank, even when $\beta = 0$.
Therefore, even when $\beta = 0$, the spectrum of $\bL$ undergoes a non-trivial
deformation from that of $\hat{\bW}$, which is described by free probability theory (specifically, by the operation of additive free convolution appearing in Theorem~\ref{thm:main}). When $\beta > 0$, the bulk eigenvalues
will resemble this same deformation, and may have a further outlier eigenvalue generated
by a corresponding outlier eigenvalue of $\bX$.
Such results have been obtained by \cite{capitaineFreeConvolutionSemicircular2011c,capitaine2017deformed,boursier2024large}, which our analysis applies.

Those results, roughly speaking, give a recipe for deducing the behavior of the eigenvalues of $\bL$ from those of $\bX$; in particular, outlier eigenvalues in $\bL$ arise from sufficiently extreme eigenvalues in $\bX$.
So, we proceed by characterizing the eigenvalues of $\bX$.
Notably, $\bX$ itself resembles a spiked matrix model,
although one where the ``noise term'' is a diagonal matrix, making the analysis
different than that for conventional spiked matrix models.
The eigenvalues of $\bX$ are as follows:

\begin{lemma}
    \label{lem:spectrum_of_X} In the setting of Theorem~\ref{thm:main}, the following hold almost surely for the sequence of $\bX = \bX^{(n)}$:
    \begin{enumerate}
        \item The empirical spectral distribution satisfies $\frac{1}{n}\sum_{i}\delta_{\lambda_i(\bX^{(n)})}\wto \sigma(\dnorm{0, 1})$, where the arrow denotes weak convergence (Definition~\ref{def:weak_convergence}).
        \item The largest eigenvalue of $\bX^{(n)}$ satisfies $\lambda_{1}(\bX^{(n)}) \to \theta_{\sigma}(
            \beta)$ for the function $\theta_{\sigma}$ described in Theorem~\ref{thm:main}.
        \item All other eigenvalues $\lambda_2(\bX^{(n)}), \dots, \lambda_n(\bX^{(n)})$ lie in $\overline{\sigma(\R)}$, where the bar denotes the closure.
    \end{enumerate}
\end{lemma}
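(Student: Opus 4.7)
The matrix $\bX = \bD + \beta \bx \bx^{\top}$ is a rank-one positive-semidefinite perturbation of the diagonal matrix $\bD = \diag{\sigma(\hat\bW\bone + \beta \langle \bx,\bone\rangle \bx)}$, so the three claims decouple naturally: part~1 follows from weak convergence of $\esdt(\bD)$ (stable under the rank-one perturbation), the third claim follows from Weyl interlacing, and the $\lambda_1$ statement from the secular equation. Writing $d_i = \sigma(u_i + \beta \langle \bx,\bone\rangle x_i)$ with $u_i = (\hat\bW\bone)_i$, the Gaussian vector $(u_1,\ldots,u_n)$ has marginal variance $(n+1)/n$ and pairwise covariances $1/n$, so by coupling to i.i.d.\ $\dnorm{0,1}$ variables at vanishing cost and using that $\sigma$ is bounded and Lipschitz, $\frac{1}{n}\sum_{i \notin S}\delta_{\sigma(u_i)}$ converges a.s.\ in bounded Lipschitz distance to $\sigma(\dnorm{0,1})$. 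Since $|S| = pn = o(n)$ a.s.\ (under either sparsity model) and $\sigma$ is bounded, the shifted entries contribute mass $O(p) = o(1)$; the rank-one inequality then gives $d_{\mathrm{BL}}(\esdt(\bD), \esdt(\bX)) \leq 2/n$, proving part~1. For the third claim, Weyl's interlacing for a rank-one PSD perturbation yields $\lambda_k(\bD) \leq \lambda_k(\bX) \leq \lambda_{k-1}(\bD)$ for every $k \geq 2$, and since the eigenvalues of the diagonal $\bD$ are its entries, each lies in $\sigma(\R)$, hence $\lambda_k(\bX) \in \overline{\sigma(\R)}$.

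\textbf{Top eigenvalue.} Any eigenvalue $\lambda > \max_i d_i$ of $\bX$ satisfies the secular equation
\[
    \frac{1}{\beta} \;=\; \sum_{i \in S} \frac{x_i^2}{\lambda - \sigma(u_i + \beta \langle \bx,\bone\rangle x_i)},
\]
with right-hand side strictly decreasing from $+\infty$ to $0$ on $(\max_i d_i, \infty)$, identifying $\lambda_1(\bX)$ as its unique root whenever an outlier exists. A strong law of large numbers gives $\|\by\|^2 = pn\,m_2(1+o(1))$ and $\langle \by,\bone\rangle = pn\,m_1(1+o(1))$ a.s., hence $x_i = z_i/\sqrt{pn\,m_2}\,(1+o(1))$ and $\beta \langle \bx,\bone\rangle x_i = (\beta m_1/m_2)\,z_i\,(1+o(1))$ uniformly on $i \in S$. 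Substituting and applying the strong law to the $|S| \asymp pn$ i.i.d.\ pairs $(z_i, u_i)$ (with $u_i$ decoupled to independent standard Gaussians as above), the secular equation converges a.s.\ to
\[
    \frac{m_2}{\beta} \;=\; \Ex_{y \sim \eta,\, g \sim \dnorm{0,1}}\!\left[\frac{y^2}{\lambda - \sigma(\tfrac{m_1}{m_2}\beta y + g)}\right],
\]
which is precisely the defining equation for $\theta_\sigma(\beta)$. When this has a solution $\theta > \edge^+(\sigma)$, continuity and monotonicity of both the prelimit and limiting secular functions give $\lambda_1(\bX) \to \theta_\sigma(\beta)$. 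When no such solution exists, the limiting right-hand side evaluated at $\lambda = \edge^+(\sigma)$ is $\leq m_2/\beta$, and this combined with the easy estimate $\max_i d_i \to \edge^+(\sigma)$ (from $\max_i u_i = \sqrt{2\log n}(1+o(1))$ and $\sigma$ non-decreasing and bounded) sandwiches $\lambda_1(\bX) \to \edge^+(\sigma) = \theta_\sigma(\beta)$.

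\textbf{Main obstacle.} The technical heart is the a.s., uniform-in-$\lambda$ passage to the limit of the secular sum near $\lambda = \edge^+(\sigma)$. If $\sigma$ concentrates mass close to its supremum, individual terms with $d_i$ atypically close to $\lambda$ are large, so a truncation / uniform-integrability argument is needed, leveraging the Lipschitz property of $\sigma$ and Gaussian tail bounds for the $u_i$ to show that only $o(|S|)$ terms contribute appreciably to the discrepancy. The small ($1/n$) correlations among the $u_i$'s must also be removed through a Gaussian coupling, and the boundary case $\theta_\sigma(\beta) = \edge^+(\sigma)$ requires a separate quantitative sandwich between $\max_i d_i$ and the unique root of the (now finite) limiting secular equation, both of which must be shown to converge to $\edge^+(\sigma)$ at comparable rates.
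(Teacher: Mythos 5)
Your overall strategy matches the paper's: decompose $\bX = \beta\bx\bx^\top + \diag{\bd}$, use stability of the esd under a rank-one perturbation for part~1 (the paper's Corollary~\ref{cor:esd_stability}, applied twice because the shift by $\beta\frac{\|\by\|_1}{\|\by\|^2}\by$ only affects $o(n)$ entries), use Weyl interlacing for part~3, and characterize $\lambda_1(\bX)$ via the secular equation for part~2. Where you diverge is in how you take the $n\to\infty$ limit of the secular function: you propose a real-line argument (pointwise convergence plus monotonicity pins down the root), whereas the paper extends $G_n$ to the complex plane, proves a uniform local law on a domain $\mathbb{D}(n,\epsilon(n),K)$ (Lemmas~\ref{lem:nnpca_new_approximation}--\ref{lem:nnpca_new_local_law}), and applies Rouch\'{e}'s theorem (Proposition~\ref{prop:theta_sigma}). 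Your real-line route could in principle be made rigorous and is arguably more elementary; the complex-analytic route buys a cleaner treatment of the no-outlier regime, since one can exclude zeros in a sliding family of disks rather than working a limit argument at the edge of $\sigma(\R)$.

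The genuine gap is in the invocation of ``a strong law of large numbers.'' The secular sum has $|S| \asymp np = o(n)$ terms, and $p = p(n)\to 0$, so this is a triangular array, not a fixed i.i.d.\ sequence to which the classical SLLN applies. To get \emph{almost sure} statements you need a quantitative concentration bound with tail probability summable in $n$ (so that Borel--Cantelli applies), and this is exactly where the hypothesis $p(n) = \omega((\log n)^c / n)$, $c > 6$, in Definition~\ref{def:probs} enters. The paper supplies this via sub-Gaussian/sub-exponential Chernoff bounds for the sparsified sums (Lemma~\ref{lem:sparsify-subexponential}, Proposition~\ref{prop:subgaussian-bernoulli}, Corollary~\ref{cor:subgaussian-fixed-size}) combined with Lemma~\ref{lem:happens_eventually_always}, and these rates then propagate into the local law with an explicit error $O(\epsilon^{-2}(np)^{-1/2+3\delta})$. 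You are right that the $O(1/n)$ correlations among the $u_i$ require a coupling (the paper writes $\hat{\bW}\bone \stackrel{d}{=}\bg + t\bone/\sqrt{n}$ and peels off the $t$ term), and that the boundary case $\theta_\sigma(\beta) = \edge^+(\sigma)$ needs a sandwich against $\max_i d_i$. But your stated ``main obstacle''---uniform-in-$\lambda$ passage near the edge---is somewhat misdirected: on the real line, monotonicity means pointwise a.s.\ convergence at a countable dense set of $\lambda$ would already pin down the root; the real obstruction is the triangular-array concentration just described, and once you have that at a quantitative rate, uniformity over $\lambda$ comes essentially for free via a Lipschitz/net argument as in Lemma~\ref{lem:nnpca_new_local_law}.
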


With this understanding, the eigenvalues of $\bL$ can be effectively
described using the above tools, provided that we make the adjustment from Definition~\ref{def:L-comp}.
The reason for this is that $\hat{\bW}$ is weakly
dependent on $\bX$, as $\bX$ depends on $\hat{\bW}\bone$, while standard analysis from random matrix theory assumes these signal and noise matrices to be independent.
When $\bW$ is drawn from the GOE, we can circumvent this issue by instead analyzing the spectrum of the compressed $\sigma$-Laplacian $\tilde\bL =\tilde\bL^{(n)} =\bV^\top \bL\bV$.
By the rotational symmetry of the
GOE, the noise term of $\tilde{\bL}$ remains a $(n - 1) \times (n - 1)$ GOE matrix, up to a negligible rescaling. And, this noise term has
had the $\bone$ direction ``projected away,'' whereby, it is now independent
of the projected signal term, making the model compatible with existing results.

To analyze the top eigenvector of $\widetilde{\bL}^{(n)}$, we use a simple trick: if we replace the term $\beta\bx\bx^{\top}$ in the underlying $\bL$ with $(\beta + t)\bx\bx^{\top}$ for another parameter $t$, then one may show that $\langle \bx, \bV\bv_1(\widetilde{\bL}^{(n)}) \rangle^2$ is precisely the derivative of $\lambda_1(\widetilde{\bL}^{(n)})$ with respect to $t$ at $t = 0$.
We argue that one may exchange this derivative with the limit $n \to \infty$, and thus $\langle \bx, \bv_1(\bL) \rangle^2$ is obtained as a derivative of a closely related formula to that for $\lim_{n \to \infty} \lambda_1(\widetilde{\bL}^{(n)})$.

As an aside, in addition to the analysis in Theorem~\ref{thm:main} of the largest eigenvalue, we obtain the following result on the empirical spectral distribution of the $\sigma$-Laplacian, which is sensible given the meaning of the additive free convolution operation (see Definition~\ref{def:free-conv}) and explains its appearance in Theorem~\ref{thm:main}:

\begin{lemma}
    \label{lem:esd_of_L} For a model of Sparse Biased PCA as in Definition~\ref{def:probs}, for any $\sigma$ and $\beta
    \geq 0$, almost surely the empirical spectral distribution satisfies $\frac{1}{n}\sum_{i}\delta_{\lambda_i(\tilde{\bL}^{(n)})} \wto \mu_{\SC}\boxplus \sigma(\dnorm{0, 1})$.
\end{lemma}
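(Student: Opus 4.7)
The plan is to decompose $\tilde{\bL}$ into a GOE-like noise term and a term whose limiting spectral distribution we already know from Lemma~\ref{lem:spectrum_of_X}, then invoke asymptotic freeness to identify the ESD of the sum as the free additive convolution of the two.

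First, I would write, using \eqref{eq:L_expansion},
\[ \tilde{\bL} = \bV^\top \hat{\bW} \bV + \bV^\top \bX \bV =: \bM + \tilde{\bX}, \]
where $\bX = \beta\bx\bx^\top + \diag{\sigma(\hat{\bW}\bone + \beta\langle \bx, \bone\rangle \bx)}$ is precisely the matrix studied in Lemma~\ref{lem:spectrum_of_X}.

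Second, I would establish that $\bM$ is (i) marginally distributed as an $(n-1)\times(n-1)$ GOE matrix up to the $O(1/n)$ variance discrepancy and (ii) independent of $\tilde{\bX}$. Both facts come from the orthogonal invariance of the GOE: extending $\bV$ to an orthogonal matrix $\bU = [\bone/\sqrt{n}, \bV]$, the conjugate $\bU^\top \hat{\bW} \bU$ is again GOE, and the entries of its $(n-1)\times(n-1)$ bottom-right block $\bM$, the $(n-1)$-vector $\bV^\top \hat{\bW} \bone/\sqrt n$, and the scalar $\bone^\top \hat{\bW} \bone/n$ are jointly independent. Since $\hat{\bW}\bone = \sqrt{n}\cdot (\bone/\sqrt n)(\bone^\top\hat{\bW}\bone/n) + \sqrt n \cdot \bV(\bV^\top\hat{\bW}\bone/\sqrt n)$ is a function only of the latter two components, $\bM$ is independent of $\hat{\bW}\bone$ and hence (conditionally on $\bx$, which is itself independent of $\bW$) of $\tilde{\bX}$. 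The ESD of $\bM$ converges almost surely to $\mu_{\SC}$ by Wigner's theorem.

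Third, I would transfer the ESD convergence of $\bX$ given by Lemma~\ref{lem:spectrum_of_X} to $\tilde{\bX} = \bV^\top \bX \bV$. Since $\tilde{\bX}$ is the lower-right $(n-1)\times(n-1)$ principal submatrix of the orthogonal conjugate $[\bone/\sqrt n, \bV]^\top \bX [\bone/\sqrt n, \bV]$ (which has the same spectrum as $\bX$), Cauchy interlacing forces the Kolmogorov distance between the two ESDs to be $O(1/n)$. Therefore $\frac{1}{n-1}\sum_i \delta_{\lambda_i(\tilde{\bX}^{(n)})} \wto \sigma(\dnorm{0,1})$ almost surely.

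Finally, I would invoke asymptotic freeness of a GOE sequence and an independent sequence of symmetric matrices with an almost-surely convergent ESD and uniformly bounded operator norm (the latter holds because $\|\tilde{\bX}\| \leq \|\sigma\|_\infty + \beta = O(1)$ under Assumption~\ref{ass:sigma}); this is the standard Voiculescu-type result (see, e.g., Anderson--Guionnet--Zeitouni, Theorem 5.4.5) applied conditionally on $\tilde{\bX}$. It yields that the ESD of $\bM + \tilde{\bX}$ converges almost surely to $\mu_{\SC} \boxplus \sigma(\dnorm{0,1})$, which is the claim. The main obstacle is the bookkeeping around the free-convolution step: one must check that the almost-sure (rather than in-expectation) ESD convergence of $\tilde{\bX}$ and the independence of $\bM$ and $\tilde{\bX}$ together suffice to apply freeness on a full-measure event, which is handled cleanly by conditioning on a regular realization of $\tilde{\bX}$ and using the uniform operator-norm bound to avoid measurability and tightness issues.
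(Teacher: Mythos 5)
Your proposal is correct and follows essentially the same route as the paper: compress via $\bV$ so that the noise block is an independent GOE, transfer the ESD from $\bX$ to $\tilde{\bX}$ by an interlacing/low-rank-perturbation argument, and invoke asymptotic freeness of an orthogonally-invariant GOE with an independent, norm-bounded, ESD-convergent sequence (the paper cites Hiai--Petz Proposition~4.3.9 where you cite AGZ Theorem~5.4.5). The paper bundles the compression facts into Proposition~\ref{prop:compression}, which also introduces a small remainder $\matr{\Delta}$ to make the GOE variance exactly $1/(n-1)$, but this is the same $O(1/n)$ rescaling you acknowledge.
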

\noindent
This statement should be read as describing the bulk eigenvalues
of $\tilde\bL$; recall that weak convergence does not
give any guarantees about the behavior of extreme or outlier eigenvalues, so Theorem~\ref{thm:main} indeed gives additional further information.

\begin{figure}
    \captionsetup[subfigure]{justification=centering}

    \begin{subfigure}[b]{0.19\textwidth}
        \centering
        \includegraphics[width=\linewidth]{zero_sigma.pdf}
        \caption{\footnotesize $\sigma(x)=0$\\[0.25em]\,($\beta_* = 1$)}
        \label{fig:sub1}
    \end{subfigure}%
    \hfill
    \begin{subfigure}[b]{0.19\textwidth}
        \centering
        \includegraphics[width=\linewidth]{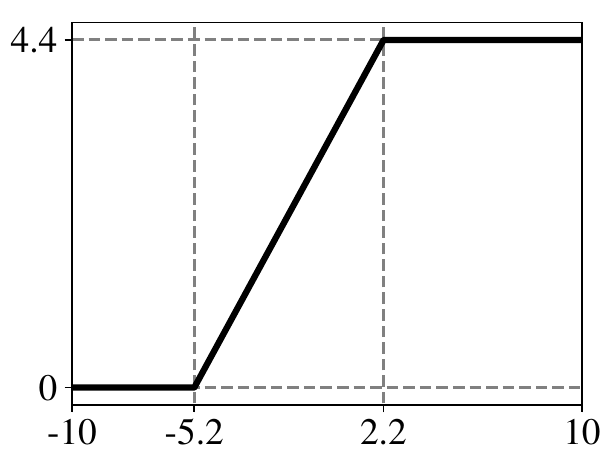}
        \caption{\footnotesize ``Z-shaped''\\[0.25em]\,($\beta_* = \num{0.765}$)}
        \label{fig:sub2}
    \end{subfigure}%
    \hfill
    \begin{subfigure}[b]{0.19\textwidth}
        \centering
        \includegraphics[width=\linewidth]{planted_submatrix_tanh.pdf}
        \caption{\footnotesize ``S-shaped''\\[0.25em]\,($\beta_* = \num{0.755}$)}
        \label{fig:sub3}
    \end{subfigure}%
    \hfill
    \begin{subfigure}[b]{0.19\textwidth}
        \centering
        \includegraphics[width=\linewidth]{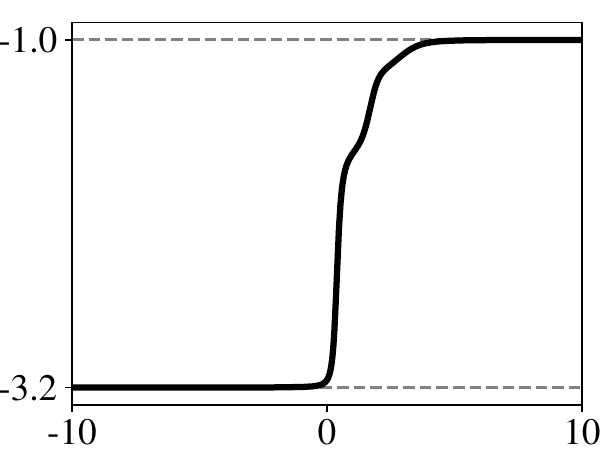}
        \caption{\footnotesize MLP\\[0.25em]\,($\beta_* = \num{0.759}$)}
        \label{fig:sub4}
    \end{subfigure}
    \hfill
    \begin{subfigure}[b]{0.19\textwidth}
        \centering
        \includegraphics[width=\linewidth]{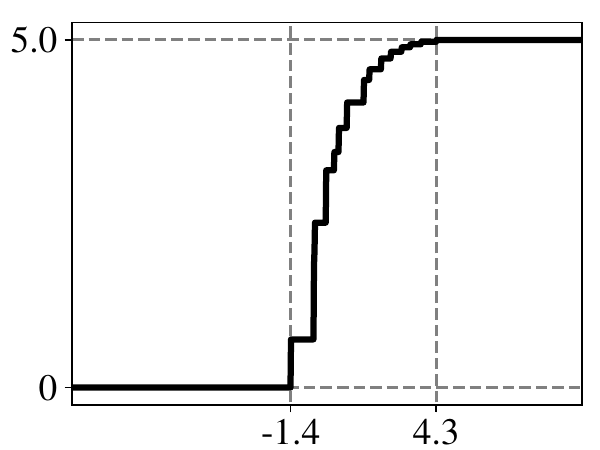}
        \caption{\footnotesize Step function\\[0.25em]\,($\beta_* = \num{0.755}$)}
        \label{fig:sub5}
    \end{subfigure}
    \caption{Comparison of $\sigma$ obtained by various approaches for the Gaussian Planted
    Submatrix model: we illustrate $\sigma$ optimized over small function classes (b--c), learned
    from data using a multi-layer perceptron (MLP) structure (d), and obtained from black-box optimization using a step function structure (e). The corresponding value of $\beta_*(\sigma)$ is given below each.}
    \label{fig:methods-comparison}
\end{figure}

\subsection{Numerical optimization of nonlinearities}
\label{sec:results-numerical}

Let us comment briefly on how we actually find the $\sigma$ and the number $0.76$ in Theorem~\ref{thm:gauss-submx}.
First, note that, given $\sigma$, in principle the above results determine $\beta_*(\sigma)$, albeit via an integral equation involving an expectation over $g \sim \dnorm{0, 1}$.
Further, that equation is only given in terms of the function $\theta_{\sigma}(\beta)$, which itself is only given in terms of the solution of another integral equation involving an expectation over $g \sim \dnorm{0, 1}$ and $y \sim \eta$.
This is why we point out above that our results only determine $\beta_*$ assuming the fidelity of the numerical calculations of these integrals (which are, however, only at most two-dimensional and thus not computationally challenging).

The question remains of how to find $\sigma$ that minimizes $\beta_*(\sigma)$.
We have not been able to identify even a contrived construction of $\sigma \neq 0$ satisfying Assumption~\ref{ass:sigma} for which we can find $\beta_*(\sigma)$ in closed form.
So, we resort to heuristically identifying good $\sigma$ and then estimating $\beta_*(\sigma)$ for such $\sigma$ numerically.
We have studied three approaches to this task, which all seem more or less equally effective:
\begin{enumerate}
    \item Pick a simple class of $\sigma$ given by a small number of parameters, such as $\sigma(x) = a \tanh(bx)$ for $a, b \in \R$ or $\sigma(x) = \min\{c, \max\{d, ax + b\}\}$ for $a, b, c, d \in \R$ and optimize $\beta_*(\sigma)$ over these few parameters by manual inspection of numerical results or exhaustive grid search.
    \item Fix a multi-layer perceptron (MLP) structure for $\sigma$ and optimize it by training $\lambda_{\max}(\bL_{\sigma}(\bY))$ to classify a training dataset of synthetic $\bY$ drawn from the null model ($\beta = 0$) and the structured alternative model ($\beta > 0$).
    \item Fix a simple structure for $\sigma$ such as a step function\footnote{Strictly speaking a step function does not satisfy the Lipschitz condition of Assumption~\ref{ass:sigma}, but it is straightforward to show that an arbitrarily small smoothing $\sigma_{\epsilon}$ (say by convolution with a Gaussian of small width $\epsilon$) of such $\sigma$ does, and $\lim_{\epsilon \to 0} \beta_*(\sigma_{\epsilon})$ recovers the value of $\beta_*(\sigma)$ computed by plugging the step function into the calculations prescribed by Theorem~\ref{thm:main}.} over a fixed grid and directly optimize the (complicated) objective function $\beta_*(\sigma)$ via gradient-free black-box optimization methods such as the Nelder-Mead or differential evolution algorithms.
\end{enumerate}
We discuss the implementation details of these choices further in Section~\ref{sec:numerical-optimize-sigma}.
We conclude from these explorations that $\sigma$-Laplacian algorithms are rather robust to the choice of $\sigma$---just a few degrees of freedom in $\sigma$ appear to suffice to achieve optimal performance.
We illustrate this in Figure~\ref{fig:methods-comparison}, which gives the $\sigma$ obtained by each of the above methods.
On the other hand, mathematically understanding the behavior of the equations determining $\beta_*(\sigma)$ seems quite challenging, and we leave this as an interesting problem for future work.

\subsection{Related work}
\label{sec:related}

\paragraph{Spiked matrix models}
Spiked matrix models have a long history in random matrix theory and statistics since their introduction in the work of Johnstone \cite{Johnstone-2001-LargestEigenvaluePCA}.
The BBP transition was first predicted there and proved by \cite{BBAP-2005-LargestEigenvalueSampleCovariance}, though in a different setting than Theorem~\ref{thm:bbp} concerning sample covariance matrices rather than additive noise models. A useful general mathematical reference is the Habilitation \`{a} Diriger des Recherches of Capitaine \cite{capitaine2017deformed}, while a statistical survey is given in \cite{JP-2018-PCASurvey}.
In addition to the citations given above, let us emphasize that the problem of how high-rank signals (rather than finite or low-rank, as the above references considered) interact with additive noise has been studied at length recently and leads to many intriguing interactions with free probability theory with many remaining open problems.
See, for example, \cite{dozier2007empirical,capitaineFreeConvolutionSemicircular2011c,capitaine2014exact,capitaine2017deformed,McKenna-2021-LargeDeviationsDeformedWigner,boursier2024large}.

\paragraph{Modified Laplacians}
The idea of modifying the graph Laplacian to solve various inference problems on graphs is not new, but seems mostly to have been explored in the context of community detection problems where the structure planted in a graph does not change its overall density, but rather only the relative density of connections within and between groups, as in the much-studied stochastic block model \cite{Abbe-2017-SBMReview,Moore-2017-SBMReview}.
In this literature, modifications of the Laplacian motivated by the non-backtracking adjacency matrix and the Ihara-Bass formula concerning its eigenvalues have proved useful \cite{KMMNSZZ-2013-SpectralRedemptionSBM,BMNZ-2014-CommunityDetectionZLaplacian,BLM-2015-NonBacktrackingSpectrumSBM}.
Another approach, more similar to our construction, considers the \emph{signless Laplacian} or the \emph{sum} rather than difference of the degree and adjacency matrices, used for instance for community detection by \cite{sarkar2011community}.
However, unlike nonlinear Laplacians, these are all still only \emph{linear} combinations of the adjacency and diagonal degree matrices of a graph.
Finally, this general approach has some similarities to the idea of \cite{mondelli2022optimal} to combine linear and spectral estimators for estimation of generalized linear models; however, that work considers computing two estimators separately and then combining them, while we use the idea of the degree-based algorithm to modify the \emph{input} to the spectral algorithm.

\paragraph{Nonlinear PCA}
The idea of applying nonlinearities as a preprocessing step to improve the performance of spectral algorithms has also appeared before.
The work of \cite{LKZ-2015-LowRankChannelUniversality,PWBM-2018-PCAI} considered spiked matrix models with non-Gaussian noise (or, in the former case, more general noisy observation channels), and showed that in these cases the performance of a direct spectral algorithm can be improved by first applying an entrywise nonlinearity to $\bY$.
However, our method is again different because we consider applying such a nonlinearity only to the diagonal matrix $\bD$ (which is not included at all in the above algorithms).
It would be natural to combine the two methods by applying some other entrywise nonlinearity to $\bY$ in addition to applying $\sigma$ to the diagonal part of a nonlinear Laplacian, but this would complicate the analysis, and, based on the observations of the above works that their entrywise nonlinearity applied to $\bY$ is \emph{not} helpful under Gaussian noise, it seems unlikely that this would be useful for the Gaussian problems we describe in Definition~\ref{def:probs}.
The line of work \cite{LAL-2019-SpectralInitializationPhaseRetrieval,LL-2020-SpectralInitializationPhaseTransitions,MKLZ-2022-OptimalSpectralMethodsPhaseRetrieval} explored using spectral algorithms with a general nonlinearity $\sigma$ for the phase retrieval problem; the settings are somewhat similar, but in the case of phase retrieval the description of how well a given $\sigma$ performs is simpler and in fact the optimal $\sigma$ can be identified in closed form.

\paragraph{Degree-based algorithms}
In the Planted Clique model, it has been observed before that computing the degree of each node (and in particular the maximum degree over all nodes) is sufficient to detect a clique of size $O(\sqrt{n\log n})$ \cite{kuvcera1995expected}. It is straightforward to show a similar phenomenon for the Gaussian Planted Submatrix model, which in that case takes the form of thresholding the largest entry of $\bY\bone$ succeeding at strong detection once $\beta = \beta(n) \geq C\sqrt{\log n}$ for sufficiently large $n$.

\paragraph{Power of restricted algorithms}
As we have mentioned, \cite{MRZ-2015-LimitationsSpectral} showed that, in the Gaussian Planted Submatrix model, a direct spectral algorithm is optimal for strong detection (achieving the signal strength threshold $\beta_* = 1$) among algorithms that only examine the eigenvalues of $\bY$.
This kind of claim was pursued in greater generality and detail by \cite{BMVVX-2018-InfoTheoretic,PWBM-2018-PCAI}.
We show in our Theorem~\ref{thm:contiguity} that, not surprisingly, algorithms based only on the (analog of the) degree vector $\bY\bone$ fare even worse, failing at strong detection for any constant $\beta$.
Our Theorem~\ref{thm:gauss-submx} viewed in this light then is rather surprising: it shows that one may improve considerably on the direct spectral algorithm using only the extra information of the degree vector $\bY\bone$, which on its own is useless for detection in this regime.

\paragraph{Equivariant algorithms and graph neural networks (GNN)}
GNNs are a family of neural network architectures that are sensible to apply to graph data, which may be viewed as an adjacency matrix $\bY$ (possibly centered and/or normalized).
Mathematically, they have the properties of \emph{invariance} or \emph{equivariance}: if $f: \R^{n \times n}_{\sym} \to \R$ as for a classification or detection problem then $f(\bm P \bY \bm P^{\top}) = f(\bY)$, while if $f: \R^{n \times n}_{\sym} \to \R^n$ as for an estimation or recovery problem then $f(\bm P\bY \bm P^{\top}) = \bm P f(\bY)$, for all permutation matrices $\bm P$.
The algorithms we consider, which output the top eigenvalue or eigenvector of a $\sigma$-Laplacian, have these properties, respectively.
Following the standard principles of neural network design, it seems reasonable to construct more general learnable spectral algorithms by combining linear equivariant layers with entrywise nonlinearities.
The possibilities of such architectures were characterized and studied by \cite{MBSL-2018-InvariantEquivariantGraphNetworks} (see also results in a similar spirit of \cite{MFSL-2019-UniversalityInvariantNetworks,KP-2019-UniversalInvariantGNN,PLKML-2023-EquivalentPolynomialsGNN}).
In Section~\ref{sec:gnn}, we show how the much simpler family of $\sigma$-Laplacian spectral algorithms arises from trying to construct specifically \emph{spectral} algorithms based on such an architecture, if one seeks to constrain these layers so as not to ``blow up'' the spectrum of the matrix being repeatedly transformed.

\paragraph{Approximate message passing (AMP)}
AMP algorithms take the general form of a ``nonlinear power method'', similar to the power method one can use to estimate the leading eigenpair of a matrix (see, e.g., \cite{BPW-2018-GapsNotes,FVRS-2021-TutorialAMP} for this general perspective, as well as \cite{CMW-2020-GFOM} for the broader family of ``general first order methods'').
In particular, if one expands the power method applied to the $\sigma$-Laplacian, the result somewhat resembles such an algorithm.
AMP for Non-Negative PCA with dense signals was studied by \cite{MR-2015-NonNegative}, for the Gaussian Planted Submatrix problem a ``dense BP'' similar to (but less efficient than) AMP was studied by \cite{HWX-2017-SubmatrixLocalizationMessagePassing}, and the same for the Planted Clique problem by \cite{DM-2015-FindingHiddenCliques}.

While they are statistically powerful and conjectured in many situations to perform optimally, AMP algorithms are more complex to specify, calibrate, and study than the simple kind of spectral algorithm we consider, and are known to be fragile to mismatches between the assumptions (on the distribution of $\bY$ and $\bx$, in our setting) used to design the specific AMP algorithm and the actual distributions from which inputs are drawn.
See, e.g., the discussion in \cite{rangan2019vector} as well as \cite{caltagirone2014convergence} for further practical subtleties, and the results of \cite{ivkov2024semidefinite,ivkov2025fast}, who demonstrate that AMP must be modified in order to obtain certain robustness guarantees.
In contrast, spectral algorithms enjoy straightforward such guarantees by applying standard eigenvalue and eigenvector perturbation inequalities.
We also remark that much research has focused on faster computation and approximation of the spectral decomposition and top eigenpair, and any such improvements immediately translate to speedups for spectral algorithms (modulo matching any special structural assumptions on the matrices involved).
For example, the recent work \cite{boahen2025fast} uses sketching to approximate top eigenvector computations for matrices that are too large to fit in memory.
To the best of our knowledge, such large-scale execution of AMP or BP has not yet been explored.

Lastly, we note that nonlinear Laplacian spectral algorithms could also be used together with AMP: often for PCA problems the output of a direct spectral algorithm is used to initialize AMP, and the ``warm start'' this gives to AMP is important to its iterations converging to an informative fixed point.
(This is not necessary for the theoretical analysis of the algorithms of \cite{DM-2015-FindingHiddenCliques,HWX-2017-SubmatrixLocalizationMessagePassing} that apply to our specific setting, but could still be used with them to improve the quantitative performance and rate of convergence.)
This could be substituted with the output of a nonlinear Laplacian spectral algorithm, which, by having higher correlation with the signal and giving a ``warmer start,'' should reduce the number of AMP iterations required to achieve a given quality of estimate.

\paragraph{General-purpose non-negative PCA}
As we have discussed, our approach begins from a spectral algorithm for PCA, which may be viewed as solving
\begin{equation}
\label{eq:pca}
\begin{array}{ll} \text{maximize} & \hat{\bx}^{\top} \bY \hat{\bx} \\ \text{subject to} & \|\hat{\bx}\| = 1. \end{array}
\end{equation}
When $\bx \geq \bm{0}$, i.e., $x_i \geq 0$ for all $i$, which is one way in which $\bx$ can be biased toward the $\bone$ direction (more restrictive than the general models we propose in Definition~\ref{def:probs}, which are only biased in this direction on average rather than entrywise), a natural choice is to instead solve
\begin{equation}
\label{eq:nnpca}
\begin{array}{ll} \text{maximize} & \hat{\bx}^{\top} \bY \hat{\bx} \\ \text{subject to} & \|\hat{\bx}\| = 1 \\ & \hat{\bx} \geq \bm 0 \end{array}
\end{equation}
Unfortunately, while \eqref{eq:pca} can be solved efficiently, \eqref{eq:nnpca} is NP-hard to solve in general \cite{KP-2002-StabilityNumberCopositiveProgramming}.
Various algorithmic approaches to approximating the above problem (not necessarily attached to a statistical setting or assumption) have been proposed, for instance including semidefinite programming relaxations \cite{MR-2015-NonNegative,BKW-2020-PositivePCA}.
Generally, the above problem is an instance of optimization over the convex cone of \emph{completely positive matrices}, and various convex optimization approaches have been studied in the optimization literature and are discussed by \cite{KP-2002-StabilityNumberCopositiveProgramming} and further citations given there.
We have not explored the performance of nonlinear Laplacian spectral algorithms outside of a statistical setting, but it seems plausible that they could also give a faster alternative to such convex optimization methods for optimization problems like \eqref{eq:nnpca}.

\section{Preliminaries}
\subsection{Notation}

\paragraph{Linear algebra}
For matrices, we use $\|\cdot\|$ to denote the spectral norm. For vectors, we use $\|\cdot\|$, $\|\cdot\|_{1}$, and $\|\cdot\|_{\infty}$ to denote the Euclidean ($\ell^2$) norm, $\ell_1$ norm, and $\ell_{\infty}$ norm, respectively. For a symmetric matrix $\bX \in \R^{n \times n}_{\mathrm{sym}}$, we write $\lambda_1(\bX) \geq \dots \geq \lambda_n(\bX)$ for its ordered eigenvalues.

We use $\bone$ to denote the all-ones vector. For an index set $S \subset [n]$, we write $\bone_S$ for the indicator vector of $S$, i.e., $(\bone_S)_i = \one\{i \in S\}$.
For a vector $\bx$, we use $\diag{\bx}$ to denote the diagonal matrix with diagonal entries given by $\bx$. 

Throughout the paper, we use $\hat{\bX}$ and $\hat{\bx}$ to denote the suitably normalized matrix and vector, respectively, where
\[
\hat{\bx} \colonequals \frac{\bx}{\|\bx\|}, \quad \hat{\bX} \colonequals \frac{\bX}{\sqrt{n}}.
\]

\paragraph{Analysis}
The asymptotic
notations $O(\cdot), o(\cdot), \Omega(\cdot), \omega(\cdot ), \Theta(\cdot)$ will
have their usual meaning, always referring to the limit $n \to \infty$.
We write $\delta_x$ for the Dirac delta measure at $x \in \R$, and use the linear combination of measures notation $\sum a_i \mu_i$ to denote a mixture of measures $\mu_i$ with weights $a_i$.
For a vector $\bx \in \R^n$, we use $\ed{\bx}$ to denote its empirical distribution:
$\ed{\bx} \colonequals \frac{1}{n} \sum_{i=1}^n \delta_{x_i}.$
For a symmetric matrix $\bX \in \R^{n \times n}_{\mathrm{sym}}$, we use $\esd{\bX}$ to denote its empirical spectral distribution:
$
\esd{\bX} \colonequals \frac{1}{n} \sum_{i=1}^n \delta_{\lambda_i(\bX)}.
$
We use $\mu_n \wto \mu$ to denote weak convergence of (probability) measures.

\paragraph{Probability}
$X_n \pto X$ and $X_n \asto X$ to denote
convergence in probability and almost sure convergence of random variables, respectively.
For an event $A$, we use $A^c$ to denote its complement. 
For a sequence of events $A(n)$, we define the following:

\begin{align*}
    \{A(n) \text{ happens eventually always}\} &\colonequals \liminf_{n \to \infty} A(n) = \bigcup_{N=1}^{\infty} \bigcap_{n \geq N} A(n), \\
    \{A(n) \text{ happens infinitely often}\}  &\colonequals \limsup_{n \to \infty} A(n) = \bigcap_{N=1}^{\infty} \bigcup_{n \geq N} A(n).
\end{align*}
Note that
$
\{A(n) \text{ happens eventually always}\}^c = \{A(n)^c \text{ happens infinitely often}\}.
$

\paragraph{Random matrices}
We use $\GOE(n, \sigma^2)$ to denote the Gaussian Orthogonal Ensemble (GOE) on $\R^{n \times n}_{\mathrm{sym}}$ with entrywise variance $\sigma^2$, the law of $\bX$ having $X_{ij} = X_{ji} \sim \dnorm{0, \sigma^2\one\{i = j\}}$ independently for all $i \leq j$.
We use $\mathrm{Wig}(n, \nu)$ to denote the Wigner matrix distribution on $\R^{n \times n}_{\mathrm{sym}}$, with entries (up to symmetry) i.i.d.\ from the distribution $\nu$ on $\R$.
We denote the Wigner semicircle law by $\mu_{\SC}$, which is the probability measure on $\R$ supported on $[-2, 2]$ with density $\frac{1}{2\pi} \sqrt{4 - x^2}$.
We use $G(n, p)$ to denote the Erdős–Rényi random graph with $n$ vertices and edge probability $p$.
We write $\mathrm{edge}^+(\sigma)$ for the right endpoint of the (closed) image $\overline{\sigma(\R)}$. For a probability measure $\mu$ on $\R$, we write $\supp{\mu}$ for the support of $\mu$, and $\mathrm{edge}^+(\mu)$ for the rightmost point in $\supp{\mu}$.
We denote by $G_{\nu}$ the Stieltjes transform of $\nu$, and by $R_{\nu}$ the $R$-transform of $\nu$. We use $\omega_{\mu_{\SC}, \nu}$ to denote the subordination function, and $H_{\mu}$ for its functional inverse.
(See Section~\ref{sec:free-prob} for these notions from free probability.)

\subsection{Probability tools}

\paragraph{Weak convergence}
Many of our results are phrased in terms of the weak convergence of probability measures, whose definition and basic properties we recall below.

\begin{defn}[Weak convergence]\label{def:weak_convergence}
    A sequence of probability measures  $(\mu_n)_{n \geq 1}$ on $\R$ converges weakly to another probability measure $\mu$ on $\R$  if, for every bounded continuous function $f: \R \to \R$,
\[
\lim_{n \to \infty} \int_\R f \, d\mu_n = \int_\R f \, d\mu.
\]
In this case, we write
$
\mu_n \wto \mu.
$

\end{defn}
\begin{prop}
    [Weak convergence of empirical distribution] \label{prop:ed-weak-as} Let $\mu$
    be a probability measure on $\R$ and define a sequence of random vectors $\bx=\bx^{(n)}$
    with $x_{i}^{(n)}\simiid \mu$ for each $1 \leq i \leq n$. Then, almost surely, $\edt(\bx^{(n)}) \wto \mu.$
\end{prop}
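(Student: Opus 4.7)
The plan is to reduce the claim to a countable family of one-dimensional strong-law statements via the CDF characterization of weak convergence on $\R$. Without loss of generality, assume the entries of $\bx^{(n)}$ are the first $n$ terms of a single infinite i.i.d.\ sequence $x_1, x_2, \ldots \simiid \mu$ realized on a common probability space, so that $\ed{\bx^{(n)}}$ is the empirical measure of the first $n$ samples.

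For any fixed $a \in \R$, the indicators $\one\{x_i \leq a\}$ are i.i.d.\ Bernoulli with mean $F_\mu(a) \colonequals \mu((-\infty, a])$, so the strong law of large numbers yields
\[
F_{\ed{\bx^{(n)}}}(a) = \frac{1}{n}\sum_{i=1}^n \one\{x_i \leq a\} \to F_\mu(a) \quad \text{almost surely.}
\]
Applying this to each $a$ in a countable dense set $D \subset \R$ (for instance $D = \Q$) and taking a countable union of null sets produces a single almost sure event on which $F_{\ed{\bx^{(n)}}}(a) \to F_\mu(a)$ simultaneously for every $a \in D$. On the same event, at every continuity point $c$ of $F_\mu$ we choose $a, b \in D$ with $a < c < b$ and $F_\mu(b) - F_\mu(a) < \epsilon$ for any prescribed $\epsilon > 0$ (possible by density of $D$ and continuity of $F_\mu$ at $c$); monotonicity of all CDFs involved then squeezes $F_{\ed{\bx^{(n)}}}(c)$ between $F_{\ed{\bx^{(n)}}}(a)$ and $F_{\ed{\bx^{(n)}}}(b)$, whose limits lie within $\epsilon$ of $F_\mu(c)$, so $F_{\ed{\bx^{(n)}}}(c) \to F_\mu(c)$ almost surely. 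Since weak convergence on $\R$ is equivalent to pointwise convergence of CDFs at continuity points of the limit, we conclude $\ed{\bx^{(n)}} \wto \mu$ almost surely.

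The argument is entirely classical and I anticipate no real obstacle. The only conceptual step worth isolating is the reduction from uncountably many values of $a$ to a countable dense subset, which is what lets us interchange the almost-sure quantifier with the quantifier over $a$ while keeping the exceptional set a single null set rather than a possibly uncountable union.
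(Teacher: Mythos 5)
Your proof is correct but takes a different route from the paper. The paper invokes the Glivenko--Cantelli theorem as a black box to get $\sup_x |F_{\edt(\bx^{(n)})}(x) - F_\mu(x)| \to 0$ almost surely, and then observes (via Portmanteau) that this uniform CDF convergence implies weak convergence. You instead prove the needed (weaker) pointwise convergence from scratch: strong law of large numbers at each rational $a$, a single null-set union over the countable dense set, and a monotonicity squeeze to extend to every continuity point of $F_\mu$, followed by the CDF characterization of weak convergence. Your argument is more elementary and self-contained --- it is essentially the easy half of the proof of Glivenko--Cantelli, stopping short of uniformity because weak convergence only needs pointwise-at-continuity-points --- at the cost of being a bit longer than the paper's two-line citation. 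One remark on the "without loss of generality": both your proof and the paper's implicitly treat the $\bx^{(n)}$ as prefixes of a single infinite i.i.d.\ sequence (this is what makes the SLLN, and Glivenko--Cantelli as usually stated, apply); if instead the vectors were sampled independently across $n$, one would need a concentration bound (Hoeffding or DKW) plus Borel--Cantelli rather than the bare SLLN. Since the paper adopts the same convention, this is not a gap relative to the paper, but the "WLOG" is really a modeling choice rather than a reduction.
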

\begin{proof}
    Let $\mu_{n} \colonequals \edt(\bx^{(n)})$.
    By the Portmanteau theorem, weak convergence is equivalent to
    \begin{align*}
        \P{\mu_n \wto \mu}
        &= \P{\bigg|\mu_n((-\infty, x]) - \mu((-\infty, x])\bigg|\to 0 \text{ for all } x \in \R} \\
        &\geq \P{\sup_{x\in \R} \bigg|\mu_n((-\infty, x]) - \mu((-\infty, x])\bigg|\to 0},
    \end{align*}
    and the latter probability is 1 by the Glivenko-Cantelli theorem.
\end{proof}

\begin{defn}[Wasserstein distance]
    Define
    \[
        \mathcal{P}_{1}(\R)\colonequals \left\{ \mu \text{ a probability measure on $\R$
        with }\E[X\sim \mu]{|X|}<\infty \right\}.
    \]
    Let $\Gamma(\mu, \nu)$ be the set of probability measures on $\R^2$ whose marginal on the first coordinate is $\mu$ and whose marginal on the second coordinate is $\nu$.
    The \emph{Wasserstein distance} on the space $\mathcal{P}_1(\R)$ is
    \[ W_1(\mu, \nu) \colonequals \inf_{\gamma \in \Gamma(\mu, \nu)} \Ex_{(x, y) \sim \gamma} |x - y|. \]
\end{defn}

\begin{lemma}[Weak convergence and Wasserstein distance, Theorem 6.9 of \cite{villani2009optimal}]
    \label{lem:W1_and_weak}
    Let $\mu_n, \mu \in \mathcal{P}_1(\R)$.
    Then, $\mu_n \wto \mu$ if and only if $W_1(\mu_n, \mu) \to 0$.
\end{lemma}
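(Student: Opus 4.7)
The plan is to defer to Theorem 6.9 of \cite{villani2009optimal}, which establishes this equivalence in the much more general setting of Polish spaces and arbitrary $p \in [1, \infty)$; below I sketch the proof specialized to $\R$ and $p = 1$, flagging which direction requires extra care.

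First I would establish the direction $W_1(\mu_n, \mu) \to 0 \Rightarrow \mu_n \wto \mu$, which is the easier one. The key tool is the Kantorovich--Rubinstein duality
\[
    W_1(\mu, \nu) \;=\; \sup\left\{ \int f \, d\mu - \int f \, d\nu \;:\; f \text{ is } 1\text{-Lipschitz} \right\},
\]
which immediately implies $\int f \, d\mu_n \to \int f \, d\mu$ for every bounded Lipschitz $f$. A standard approximation argument---approximating an arbitrary bounded continuous test function uniformly on compact sets by Lipschitz functions, combined with tightness of $\{\mu_n\}$ (which follows from $W_1$-convergence together with the finiteness of the first moment of $\mu$)---extends this convergence to all of $C_b(\R)$, giving $\mu_n \wto \mu$.

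For the reverse direction, I would exploit the explicit formula available on $\R$,
\[
    W_1(\mu, \nu) \;=\; \int_\R |F_\mu(t) - F_\nu(t)| \, dt,
\]
where $F_\mu, F_\nu$ are the respective cumulative distribution functions. Weak convergence $\mu_n \wto \mu$ is equivalent to pointwise convergence $F_{\mu_n}(t) \to F_\mu(t)$ at every continuity point of $F_\mu$, which is almost every $t \in \R$, so the integrand converges to zero almost everywhere. A dominated convergence argument then yields $W_1(\mu_n, \mu) \to 0$, provided one can produce an integrable majorant for $|F_{\mu_n}(t) - F_\mu(t)|$ uniformly in $n$.

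This is the step I expect to be the main obstacle: strictly speaking, weak convergence alone within $\mathcal{P}_1(\R)$ does not suffice (e.g., the sequence $\mu_n = (1 - 1/n)\delta_0 + (1/n)\delta_n$ converges weakly to $\delta_0$ but has $W_1(\mu_n, \delta_0) = 1$), so the proof of Villani's theorem implicitly uses the additional hypothesis that $\int |x| \, d\mu_n \to \int |x| \, d\mu$, which supplies the needed uniform integrability and hence the majorant. In every use of this lemma elsewhere in the present paper, the measures involved have compact support or uniformly Gaussian-decaying tails, so this extra condition comes for free and the reverse implication can be invoked as stated.
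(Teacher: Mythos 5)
The paper gives no proof of this statement; it is a bare citation of Villani's Theorem 6.9, so there is no internal argument to compare your proposal against. What your proposal does correctly is notice that the lemma as written overstates Villani's result. Villani's Theorem 6.9 asserts that $W_p$ metrizes ``weak convergence in $\mathcal{P}_p$,'' which his Definition 6.8 defines to mean ordinary weak convergence \emph{plus} convergence of $p$-th moments (equivalently, a uniform integrability condition on $|x|^p$). Ordinary weak convergence of measures in $\mathcal{P}_1(\R)$, which is how $\wto$ is defined in Definition~\ref{def:weak_convergence}, does \emph{not} imply $W_1$-convergence, and your counterexample $\mu_n = (1-1/n)\delta_0 + (1/n)\delta_n$ exhibits exactly this: each $\mu_n\in\mathcal{P}_1(\R)$, $\mu_n \wto \delta_0$, yet $W_1(\mu_n,\delta_0) = 1$ for all $n$. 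Your one misstep is to say that Villani's proof ``implicitly uses'' the moment condition: it is not implicit but explicit, built into the very notion of convergence that Theorem 6.9 refers to; the imprecision lies in the paper's paraphrase, not in Villani's argument.

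Your final observation---that the extra moment hypothesis holds for free in every downstream use---is correct and is exactly the right thing to check. The problematic ``weak $\Rightarrow W_1$'' direction is invoked in the proofs of Proposition~\ref{prop:weak-conv-perturbation} and Corollary~\ref{cor:esd_stability}, but in every place those are applied, the measures are empirical distributions either of a bounded $\sigma$ applied entrywise to a vector, or of eigenvalues of matrices with uniformly bounded spectral norm, so the supports lie in a fixed compact set and moment convergence is automatic. One could also sidestep Lemma~\ref{lem:W1_and_weak} entirely: the triangle-inequality arguments in those two proofs only need that if $W_1(\mu_n,\nu_n)\to 0$ then $\int f\,d\mu_n - \int f\,d\nu_n\to 0$ for every bounded Lipschitz $f$, which together with the Portmanteau characterization of weak convergence via bounded Lipschitz test functions yields the same conclusions with no moment hypothesis and no appeal to the hard direction of Villani's theorem.
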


\begin{prop}
    [Stability of weak convergence under perturbation] \label{prop:weak-conv-perturbation}
    Consider sequences of vectors $\bx^{(n)}, \by^{(n)}\in \R^{n}$ such that $\|
    \bx^{(n)}-\by^{(n)}\|_{\infty}\to 0$ as $n\to\infty$. For $\mu \in\mathcal{P}_1(\R)$, $\edt(\bx^{(n)}) \wto \mu$ if and only if
    $\edt (\by^{(n)}) \wto \mu.$
\end{prop}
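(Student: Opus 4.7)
The plan is to reduce the claim to Wasserstein-distance convergence via Lemma \ref{lem:W1_and_weak}. Both $\edt(\bx^{(n)})$ and $\edt(\by^{(n)})$ are finitely supported probability measures on $\R$, and hence trivially lie in $\mathcal{P}_1(\R)$, so the lemma applies and reduces weak convergence to $\mu$ to $W_1$-convergence to $\mu$. It therefore suffices to show that $W_1(\edt(\bx^{(n)}), \edt(\by^{(n)})) \to 0$ and then invoke the triangle inequality.

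The central step is to bound $W_1(\edt(\bx^{(n)}), \edt(\by^{(n)}))$ by $\|\bx^{(n)} - \by^{(n)}\|_{\infty}$ using the natural coordinate-matching coupling
\[
\gamma_n \colonequals \frac{1}{n}\sum_{i=1}^n \delta_{(x_i^{(n)},\, y_i^{(n)})},
\]
whose first and second marginals are exactly $\edt(\bx^{(n)})$ and $\edt(\by^{(n)})$. Plugging $\gamma_n$ into the definition of $W_1$ gives
\[
W_1(\edt(\bx^{(n)}), \edt(\by^{(n)})) \;\leq\; \Ex_{(x,y)\sim \gamma_n}|x - y| \;=\; \frac{1}{n}\sum_{i=1}^n |x_i^{(n)} - y_i^{(n)}| \;\leq\; \|\bx^{(n)} - \by^{(n)}\|_{\infty},
\]
which vanishes as $n \to \infty$ by hypothesis.

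Combining this with the triangle inequality for $W_1$ yields
\[
\bigl|W_1(\edt(\bx^{(n)}), \mu) - W_1(\edt(\by^{(n)}), \mu)\bigr| \;\leq\; W_1(\edt(\bx^{(n)}), \edt(\by^{(n)})) \;\to\; 0,
\]
so one of these Wasserstein distances tends to zero if and only if the other does. Applying Lemma \ref{lem:W1_and_weak} in both directions then translates this back into the desired equivalence of weak-convergence statements.

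No serious obstacle arises; the only modest subtlety is to notice that the index-matching coupling automatically produces the $\ell^\infty$ bound, which is precisely why a \emph{uniform} entrywise perturbation controls $W_1$ (whereas, say, a bound only in $\|\cdot\|$ or $\|\cdot\|_1$ would require a factor depending on $n$). Everything else is routine.
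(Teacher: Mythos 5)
Your proof is correct and follows essentially the same route as the paper's: both reduce weak convergence to $W_1$-convergence via Lemma~\ref{lem:W1_and_weak}, bound $W_1(\edt(\bx^{(n)}),\edt(\by^{(n)}))$ by the average entrywise difference (your diagonal coupling $\gamma_n$ is exactly the identity permutation in the paper's infimum-over-permutations formulation of $W_1$ between uniform empirical measures), and finish with the triangle inequality. There is no substantive difference.
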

\begin{proof}
    We will apply Lemma~\ref{lem:W1_and_weak}.
    Since they are discrete probability measures, $\ed{\bx^{(n)}}, \ed{\by^{(n)}}\in \mathcal{P}_{1}(\R)$ for all $n$.
    The Wasserstein distance between $\ed{\bx^{(n)}}$ and $\ed{\by^{(n)}}$ is given by
    \begin{align*}
        W_{1}\left(\edt(\bx^{(n)}), \edt(\by^{(n)})\right) &= \inf_{\pi\in S_n}\frac{1}{n}
        \sum_{i=1}^{n}\left|x_{i}^{(n)}- y_{\pi(i)}^{(n)}\right| \\ &\leq \frac{1}{n}\sum_{i=1}
        ^{n}|x_{i}^{(n)} - y_i^{(n)}| \\ &\leq \|\bx^{(n)} -\by^{(n)}\|_{\infty} \\ &\to 0.
    \end{align*}
    Suppose $\ed{\bx^{(n)}}\wto \mu$. Since the Wasserstein distance is a metric satisfying the triangle inequality,
    \[
        W_{1}\left(\edt(\by^{(n)}), \mu\right) \leq W_{1}\left(\edt(\bx^{(n)}), \mu
        \right) + W_{1}\left(\edt(\bx^{(n)}),\edt(\by^{(n)})\right)\to 0,
    \]so $\ed{\by^{(n)}}\wto \mu$. The converse follows in the same way.
\end{proof}

\paragraph{Almost sure convergence}
We also will use the following familiar results about almost sure convergence.

\begin{lemma}\label{lem:ber-as}
    For $s \sim \dbin{n, p}$, where $p = p(n) = \omega\left(\frac{\log n}{n}\right)$, we have
    \[
        \frac{s}{np} \asto 1.
    \]
\end{lemma}
\begin{proof}
    Let $0 < \epsilon < 1$.
    Using the Chernoff inequality \cite[Corollary 2.3.4]{vershynin2009high},
    \[
        \Px\left(\left|\frac{s}{np} - 1\right| > \epsilon\right)
        = \Px\left(\left|s - np\right| > np\epsilon\right) \leq 2 \exp\left(-\frac{\epsilon^2 np}{3}\right).
    \]
    
    Since $p = \omega\left(\frac{\log n}{n}\right)$, we have $\exp\left(-\frac{\epsilon^2 np}{3}\right) < n^{-2}$ for all sufficiently large $n$. 
    Hence, $\sum_{n=1}^\infty \exp\left(-\frac{\epsilon^2 np}{3}\right) < \infty$. 
    By the Borel-Cantelli lemma, this shows that 
    \[
        \Px\left(\left\{\left|\frac{s}{np} - 1\right| > \epsilon\right\} \text{ occurs infinitely often}\right) = 0. \qedhere
    \]
\end{proof}
\begin{lemma}\label{lem:ber-sparsification-as}
    Let $\varepsilon_i \simiid \dber{p}$, where $p = p(n)=\omega\left(\frac{\log n}{n}\right) $. Let $x_i$ be i.i.d. random variables with mean $m = \Ex[x_i]$, and suppose $\Ex[|x_i|] < \infty$. Then
    \[
        \frac{1}{np} \sum_{i=1}^n \varepsilon_i x_i \asto m.
    \]
\end{lemma}

\begin{proof}
    Using the triangle inequality, we have
    \[
        \left|\frac{1}{np}\sum_{i=1}^n \varepsilon_i x_i - m\right| \leq \left|\frac{\sum_{i=1}^n \varepsilon_i x_i}{\sum_{i=1}^n \varepsilon_i} - m\right| + \left|\frac{\sum_{i=1}^n \varepsilon_i x_i}{\sum_{i=1}^n \varepsilon_i}\right| \cdot \left|\frac{\sum_{i=1}^n \varepsilon_i }{pn}-1\right|.
    \]
    
    By Lemma~\ref{lem:ber-as}, we have $\frac{1}{np} \sum_{i=1}^n \varepsilon_i \asto 1$. Since $p = \omega(n^{-1})$, it follows that $\sum_{i=1}^n \varepsilon_i \asto \infty$. 
    Conditional on the $\varepsilon_i$, $\sum_{i = 1}^n \varepsilon_i x_i$ is a sum of $\sum_{i = 1}^n \varepsilon_i$ many i.i.d.\ random variables.
    By our above observation, the number of terms in this sum almost surely diverges as $n \to \infty$.
    So, by the Strong Law of Large Numbers applied after conditioning on the $\varepsilon_i$, we get 
    \[
        \frac{\sum_{i=1}^n \varepsilon_i x_i}{\sum_{i=1}^n \varepsilon_i} \asto m.
    \]
    Combining the results above, we conclude that $\left|\frac{1}{np}\sum_{i=1}^n \varepsilon_i x_i - m\right| \asto 0$.
\end{proof}

\subsection{Random matrix theory}
\label{sec:tools-rmt}

We review the models of random matrices that will be relevant to us and some of their main properties.

\begin{defn}
    [Gaussian orthogonal ensemble] \label{def:GOE} For $n\in \N$, we define the
    Gaussian orthogonal ensemble (GOE) distribution with variance $\sigma^{2}$, denoted
    as $\GOE(n, \sigma^{2})$ to be the law of a $n \times n$ symmetric matrix
    $\bW$ with entries $W_{ij}= W_{ji}\sim \dnorm{0,\sigma^2}$ for $i<j$ and
    $W_{ii}\sim \dnorm{0,2\sigma^2}$ for all $i$, with all entries with
    $i\leq j$ distributed independently.
\end{defn}

\begin{defn}
    [Wigner matrix] \label{def:Wigner} For $n\in \N$, we define the Wigner
    matrix distribution $\mathrm{Wig}(n,\nu)$ to be the law of a $n \times n$
    symmetric matrix $\bW$ with entries $W_{ij}=W_{ji}\simiid \nu$ for $i<j$
    and $W_{ii}\colonequals 0$ for all $i$. We say $\bW$ is Wigner matrix with
    variance $\sigma^{2}$ if $\bW\sim \mathrm{Wig}(n,\nu)$ for some $\nu$ with zero
    expectation and variance $\sigma^{2}$.
\end{defn}

The following shows that the specific choice of distribution of the diagonal entries is not important; similar results for other modifications of the diagonal can be proved analogously.

\begin{prop}
    \label{prop:diag_of_goe}
    Consider random symmetric matrices $\hat\bW_0=\hat\bW_0^{(n)}$ where
    $(\hat W_0)_{ij} = (\hat W_0)_{ji} \simiid \dnorm{0,1/n}$ for all $i\leq j$, and $\hat\bW_1= \hat\bW_1^{(n)}\sim \GOE(n,1/n)$.
    Then,
    $\mathrm{Law}(\hat\bW_1) =\mathrm{Law}(\hat\bW_0 + \matr{\Delta})$ for
    a sequence of random matrices $\matr{\Delta} = \matr{\Delta}^{(n)}$ satisfying
    $\|\matr{\Delta}^{(n)}\| \asto 0$.
\end{prop}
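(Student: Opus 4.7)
The plan is to construct $\matr{\Delta}$ explicitly as an independent diagonal Gaussian matrix and then control its operator norm by standard Gaussian maxima estimates. Specifically, on some auxiliary probability space let $\matr{\Delta} = \matr{\Delta}^{(n)}$ be diagonal with entries $\Delta_{ii} \simiid \dnorm{0, 1/n}$, independent of $\hat{\bW}_0$. Then the off-diagonal entries of $\hat{\bW}_0 + \matr{\Delta}$ coincide with those of $\hat{\bW}_0$ (so they are i.i.d.\ $\dnorm{0, 1/n}$), while each diagonal entry $(\hat{W}_0)_{ii} + \Delta_{ii}$ is a sum of two independent $\dnorm{0, 1/n}$ variables and thus distributed as $\dnorm{0, 2/n}$, independently across $i$. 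Comparing with Definition~\ref{def:GOE}, this matches the law of $\hat{\bW}_1 \sim \GOE(n, 1/n)$ entry by entry, so $\Law(\hat{\bW}_0 + \matr{\Delta}) = \Law(\hat{\bW}_1)$.

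It remains to show $\|\matr{\Delta}^{(n)}\| \asto 0$. Since $\matr{\Delta}$ is diagonal,
\[
\|\matr{\Delta}^{(n)}\| = \max_{1 \leq i \leq n} |\Delta_{ii}| = \frac{1}{\sqrt{n}} \max_{1 \leq i \leq n} |g_i|,
\]
where $g_i \simiid \dnorm{0,1}$. Applying Lemma~\ref{lem:Gaussian_t} with $\delta = 1/4$ to each $g_i$ and taking a union bound gives
\[
\P{\max_{i} |g_i| > n^{1/4}} \leq n \exp(-\Omega(n^{1/2})) = \exp(-\Omega(n^{1/2})),
\]
so the event $A(n) \colonequals \{\|\matr{\Delta}^{(n)}\| \leq n^{-1/4}\}$ satisfies $\P{A(n)} = 1 - \exp(-\Omega(n^{1/2}))$. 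By Lemma~\ref{lem:happens_eventually_always}(1), almost surely $A(n)$ holds eventually always, and since $n^{-1/4} \to 0$ we conclude $\|\matr{\Delta}^{(n)}\| \asto 0$.

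There is no real obstacle here: the content of the proposition is simply that the discrepancy between the ``naive'' symmetric Gaussian matrix and the GOE lives entirely on the diagonal, and a diagonal Gaussian matrix of entrywise variance $1/n$ has vanishing operator norm. The only small choice is the exponent in the union bound, where any rate $\delta > 0$ inside the Gaussian tail suffices to beat the factor of $n$ and invoke Borel--Cantelli via Lemma~\ref{lem:happens_eventually_always}.
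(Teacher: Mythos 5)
Your proof is correct and takes essentially the same approach as the paper: construct $\matr{\Delta}$ as an independent diagonal Gaussian with entries $\dnorm{0,1/n}$, check the distributional identity entrywise, and bound $\|\matr{\Delta}\| = \max_i |\Delta_{ii}|$ via Gaussian tail bounds and Borel--Cantelli. The only difference is cosmetic: you settle for the (looser but sufficient) bound $n^{-1/4}$ via a union bound, whereas the paper quotes the sharper $O(\sqrt{(\log n)/n})$ rate for the maximum of $n$ independent Gaussians.
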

\begin{proof}
    Take $\matr{\Delta}=\diag{\bg}$ where $\bg\sim \dnorm{0,\matr{I}_n/n}$ is independent of $\hat{\bW}_0$.
    Then, checking means and covariances shows that these $\matr{\Delta}$ satisfy the stated distributional equality, and $\|\matr{\Delta}\| = \|\bg\|_{\infty} = O(\sqrt{(\log n) / n})$ almost surely by standard concentration results.
\end{proof}

In the introduction we have repeatedly mentioned the notion of ``bulk'' eigenvalues; let us be more specific about the meaning of this.
For a deterministic sequence of symmetric matrices $\bX^{(n)}\in \R^{n\times n}_{\mathrm{sym}}$ with eigenvalues $\{\lambda_i(\bX^{(n)})\}_{i=1}^n$,  we say its bulk eigenvalues have distribution $\mu$ if its empirical spectral distribution \[\esd{\bX^{(n)}}\colonequals \frac{1}{n}\sum_{i=1}^n \delta_{\lambda_i(\bX^{(n)}})\wto \mu.\]
Thus the bulk indicates where the vast majority of eigenvalues are located asymptotically, but does not describe the behavior of any $o(n)$ eigenvalues, and in particular of small numbers of outliers outside the support of $\mu$ (which in all cases we consider is compactly supported).

We will be interested in such notions for random matrices, in which case the empirical spectral distribution $\esd{\bX^{(n)}}$ is itself a random measure. Therefore, it is necessary to specify a precise mode of weak convergence. In this work, we focus on the \emph{almost sure weak convergence}, which is commonly studied in random matrix theory.

\begin{defn}[Almost sure weak convergence]
    Let $\bX^{(n)} \in \mathbb{R}^{n \times n}_{\mathrm{sym}}$ be a sequence of random symmetric matrices.
    We say that the empirical spectral distributions $\esdt(\bX^{(n)})$ converge weakly almost surely to a deterministic probability measure $\mu$, denoted by
     $\esdt(\bX^{(n)}) \wtoas \mu$
    if for all continuous functions $f : \mathbb{R} \to \mathbb{R}$,
    $
        \frac{1}{n} \sum_{i=1}^n f(\lambda_i(\bX^{(n)})) \asto \int f(x) \, d\mu(x).
    $
\end{defn}
The celebrated Wigner semicircle law establishes such almost sure weak convergence for the Wigner matrix model:
\begin{thm}
    [Wigner's semicircle limit theorem]\label{thm:wigner_semicircle} Let
    $\bW\sim \mathrm{Wig}(n, \nu)$, where $\nu$ is a probability measure with zero
    mean and unit variance. Then $\esdt(\bW / \sqrt{n}) \wtoas \mu_{\SC}$
    where $\mu_{\SC}$ is the semicircle measure on $\R$, supported on $[-2,2]$
    with density $\frac{1}{2\pi}\sqrt{4-x^{2}}$ on that interval.
\end{thm}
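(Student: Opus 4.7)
The plan is to use the method of moments, which is the classical route to Wigner's theorem and integrates cleanly with the paper's demand for almost sure weak convergence.

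First, I would show that for each fixed integer $k \geq 0$, the expected moment
\[
\Ex\left[\frac{1}{n}\Tr\left((\bW/\sqrt{n})^k\right)\right] = \frac{1}{n^{1 + k/2}} \sum_{i_1, \ldots, i_k \in [n]} \Ex[W_{i_1 i_2} W_{i_2 i_3} \cdots W_{i_k i_1}]
\]
converges to the $k$-th moment of $\mu_{\SC}$, which equals the Catalan number $C_{k/2}$ when $k$ is even and $0$ when $k$ is odd. The combinatorial heart of this step is to analyze the closed walks $i_1 \to i_2 \to \cdots \to i_k \to i_1$ on $[n]$ and group them by the set partition of $\{1, \ldots, k\}$ induced by requiring edges $\{i_j, i_{j+1}\}$ to be identified whenever their $\nu$-moments pair them up. Because $\nu$ has mean zero, only partitions where every block has size at least $2$ contribute; because $\nu$ has unit variance, only pairings contribute to leading order. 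A standard counting argument based on the tree structure of non-crossing pair partitions shows that pairings contribute $n^{1 + k/2}$ distinct index sequences each, while all other partitions contribute at most $n^{k/2}$, so only non-crossing pair partitions survive the $1/n^{1+k/2}$ normalization, giving the Catalan numbers.

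Next, to upgrade from convergence of expected moments to almost sure convergence of moments, I would bound the variance
\[
\text{Var}\left(\frac{1}{n}\Tr\left((\bW/\sqrt{n})^k\right)\right) = O(n^{-2}).
\]
This follows from an analogous combinatorial analysis of pairs of closed walks: the leading $n^{-2}$ order comes from walks whose index graphs are connected, and the disconnected walks cancel between $\Ex[\cdot^2]$ and $(\Ex \cdot)^2$. Summability of $n^{-2}$ together with the Borel-Cantelli lemma yields almost sure convergence of $\frac{1}{n}\Tr((\bW/\sqrt{n})^k)$ to the $k$-th moment of $\mu_{\SC}$ for each fixed $k$. A single almost-sure event on which this holds for every $k \in \N$ is obtained by intersecting countably many full-measure events.

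Finally, on this almost sure event I would invoke the standard fact that if a sequence of probability measures on $\R$ has moments that converge to those of a compactly supported measure $\mu$ (and hence a measure determined by its moments via Carleman's condition), then the sequence converges weakly to $\mu$. Applied pointwise on the almost-sure event, this yields $\esdt(\bW/\sqrt{n}) \wtoas \mu_{\SC}$, completing the argument. The main obstacle is really the combinatorial bookkeeping in the first step: carefully verifying that only non-crossing pair partitions give the leading $n^{1 + k/2}$ contribution, with all other partitions (in particular those with odd block sizes or crossing pairings) suppressed by at least one factor of $n$. The variance bound in the second step is analogous but requires being slightly careful about which double-walk graphs are connected, and this is the only place where extra moment assumptions on $\nu$ beyond finite variance could be needed if one wanted sharper control; for the almost sure statement as written, a truncation argument on $\nu$ can be used if $\nu$ is only assumed to have finite variance, reducing to the bounded-support case where all moments exist.
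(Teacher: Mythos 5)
The paper does not prove this theorem; it states it as the classical Wigner semicircle law and uses it as a cited fact. Your method-of-moments argument is one of the two standard routes to this result (the other being the Stieltjes-transform/resolvent method), and your outline is correct: reduce expected trace moments $\E{\frac{1}{n}\Tr((\bW/\sqrt n)^k)}$ to a closed-walk count, show that only non-crossing pair partitions survive the $n^{-(1+k/2)}$ normalization (giving Catalan numbers), bound $\var{\frac{1}{n}\Tr((\bW/\sqrt n)^k)}=O(n^{-2})$ via the double-walk analysis, invoke Borel--Cantelli and intersect over $k$, and conclude by moment determinacy of the compactly supported $\mu_{\SC}$.

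The one place you are right to be cautious is the moment hypothesis. The theorem as stated assumes only finite variance, while the combinatorics for the $k$-th trace moment and its variance need finite moments of $\nu$ up to order $2k$. Your suggested truncation is the standard repair, but to actually close it one should be explicit: set $\bW^{(C)}$ to be the Wigner matrix with entries truncated at level $C$ and then recentered/rescaled to unit variance, and control the difference of empirical spectral distributions by, e.g., the Hoffman--Wielandt bound $W_2^2\big(\esdt(\bW/\sqrt n),\esdt(\bW^{(C)}/\sqrt n)\big)\le n^{-2}\|\bW-\bW^{(C)}\|_F^2$, which by the strong law converges almost surely to $\Ex\big[(W_{12}-W_{12}^{(C)})^2\big]$ and can be made arbitrarily small by taking $C$ large. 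Combining the diagonal-argument over $C\to\infty$ with the bounded case then yields the stated almost-sure weak convergence under only a second moment assumption. With that step filled in, the proof is complete.
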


In our work, for technical convenience, we will mostly consider a different notion: we say $\esd{\bX^{(n)}}\wto \mu$ almost surely if $\mathbb{P}(\esd{\bX^{(n)}}\wto \mu)=1$. But in the situation of our concern, it is usually equivalent to the aforementioned notion.

\begin{prop}When $\mu$ is a continuous probability measure on $\R$,
    $\esdt(\bX^{(n)}) \wto \mu$ almost surely if and only if $\esdt(\bX^{(n)}) \wtoas \mu$.
\end{prop}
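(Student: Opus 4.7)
The plan is to prove the two implications separately. The forward direction ($\wto$ almost surely implies $\wtoas$) is essentially just unpacking definitions: on the single probability-one event $\{\esd{\bX^{(n)}} \wto \mu\}$, the defining property of weak convergence gives $\frac{1}{n}\sum_i f(\lambda_i(\bX^{(n)})) \to \int f \, d\mu$ for every bounded continuous $f$, which is stronger than the per-$f$ almost sure convergence required by $\wtoas$. Continuity of $\mu$ is not needed in this direction.

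The reverse direction is the substantive one. The issue is that $\wtoas$ supplies, for each continuous test function $f$ separately, an almost sure event $A_f$ on which the spectral average converges to $\int f\, d\mu$; but the exceptional null set can depend on $f$, and there are uncountably many $f$. To conclude a single probability-one event on which $\esd{\bX^{(n)}} \wto \mu$ holds in its entirety, I would reduce weak convergence to a countable family of test-function conditions, and this is where the continuity of $\mu$ enters.

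Concretely, for each pair $(q, \epsilon) \in \Q \times \Q_{>0}$, define continuous piecewise-linear sandwich functions
\[
f_{q,\epsilon}^-(t) \colonequals \max\!\left\{0, \min\!\left\{1, \tfrac{q-t}{\epsilon}\right\}\right\}, \qquad f_{q,\epsilon}^+(t) \colonequals \max\!\left\{0, \min\!\left\{1, \tfrac{q+\epsilon-t}{\epsilon}\right\}\right\}
\]
so that $f_{q,\epsilon}^- \leq \mathbbm{1}_{(-\infty, q]} \leq f_{q,\epsilon}^+$. Let $A \colonequals \bigcap_{q,\epsilon,\pm} A_{f_{q,\epsilon}^{\pm}}$, a countable intersection of the almost sure events guaranteed by $\wtoas$, so $\P{A} = 1$. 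Writing $\mu_n \colonequals \esd{\bX^{(n)}}$, on $A$ the sandwich inequality gives
\[
\int f_{q,\epsilon}^- \, d\mu \;\leq\; \liminf_n F_{\mu_n}(q) \;\leq\; \limsup_n F_{\mu_n}(q) \;\leq\; \int f_{q,\epsilon}^+ \, d\mu
\]
for all $q \in \Q$ and rational $\epsilon > 0$. Letting $\epsilon \downarrow 0$ along rationals, dominated convergence together with the continuity of $\mu$ (so $\mu(\{q\}) = 0$) forces both bounds to $F_\mu(q)$, yielding $F_{\mu_n}(q) \to F_\mu(q)$ for every $q \in \Q$. A standard monotonicity-plus-continuity argument, bracketing any $x \in \R$ between rational $q_1 < x < q_2$ and using continuity of $F_\mu$, then extends this to $F_{\mu_n}(x) \to F_\mu(x)$ for every $x$, which by the Portmanteau theorem is equivalent to $\mu_n \wto \mu$.

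The main obstacle here is conceptual rather than technical: identifying a countable family of test functions sufficient to certify weak convergence. Continuity of $\mu$ is precisely what makes this reduction clean, since it makes every $x \in \R$ a continuity point of $F_\mu$ and avoids having to track which rationals fail to be continuity points; without it, one would have to restrict attention to the (a priori distribution-dependent) set of continuity points of $F_\mu$.
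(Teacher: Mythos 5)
Your proof is correct and follows essentially the same route as the paper's: reduce weak convergence to CDF convergence at a countable dense set of points, take a countable intersection of almost-sure events, and use the continuity of $\mu$ to extend from rationals to all of $\R$ via monotonicity of CDFs. The only difference is that you make the reduction from test-function convergence to CDF convergence explicit via the piecewise-linear sandwich functions, whereas the paper delegates that reduction to a citation of Tao's Exercise 2.4.1; your version is somewhat more self-contained but conceptually identical.
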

\begin{proof}
    Let $\mu^{(n)} \colonequals \esdt(\bX^{(n)})$.
    By \cite[Exercise 2.4.1]{tao2012topics}, it is equivalent to prove the equivalence:
    \begin{align*}
        &\mathbb{P}\left( \mu^{(n)}(-\infty, x] \to \mu(-\infty, x] \text{ for all } x \in \mathbb{R} \right) = 1\\
        \Leftrightarrow \quad &
        \text{for all } x \in \mathbb{R},\ \mathbb{P}\left( \mu^{(n)}(-\infty, x]\to \mu(-\infty, x] \right) = 1.
    \end{align*}
    The ``$\Rightarrow$'' direction is immediate. We will prove ``$\Leftarrow$''.

    By the union bound and countable subadditivity,
    \[
        \mathbb{P}\left( \exists x \in \mathbb{Q},\ \mu^{(n)}(-\infty, x] \not\to \mu(-\infty, x]\right)
        \leq \sum_{x \in \mathbb{Q}} \mathbb{P}\left( \mu^{(n)}(-\infty, x])\not\to \mu(-\infty, x]\right) = 0.
    \]
    Hence, with probability $1$, we have $\mu^{(n)}((-\infty, x]) \to \mu((-\infty, x])$ for all $x \in \mathbb{Q}$.
    It remains to show that this implies convergence for all $x \in \mathbb{R}$.

    Fix $\epsilon > 0$ and let $x \in \mathbb{R}$. Choose rationals $q_1, q_2 \in \mathbb{Q}$ such that $q_1 < x < q_2$ and $\mu((q_1, q_2)) < \epsilon$. For sufficiently large $n$, we have:
    \[
        \left| \mu^{(n)}(-\infty, q_1] - \mu(-\infty, q_1] \right| < \epsilon,
        \quad \text{and} \quad
        \left| \mu^{(n)}(-\infty, q_2] - \mu(-\infty, q_2] \right| < \epsilon.
    \]
    Then by the monotonicity of $\mu^{(n)}$, we have:
    \begin{align*}
        \mu^{(n)}(-\infty, x] - \mu(-\infty, x]
      &\leq \mu^{(n)}(-\infty, q_2] - \mu(-\infty, q_2] + \mu(x, q_2) < 2\epsilon, \\
        \mu^{(n)}(-\infty, x] - \mu(-\infty, x]
        &\geq \mu^{(n)}(-\infty, q_1] - \mu(-\infty, q_1] - \mu(q_1, x) > -2\epsilon.
    \end{align*}
    Since $\epsilon > 0$ is arbitrary, it follows that
    \[
        \mu^{(n)}(-\infty, x] \to \mu(-\infty, x] \quad \text{for all } x \in \mathbb{R}.
    \]
    This concludes the proof.
\end{proof}

The complementary notion to bulk eigenvalues is that of outlier eigenvalues, which we also mentioned earlier.
While a statement like $\esd{\bX^{(n)}}\wtoas \mu$ captures the asympototic distribution of the eigenvalue bulk, it does not give any information on the extreme eigenvalues. We call the top eigenvalue $\lambda_1(\bX^{(n)})$ an outlier eigenvalue if it asymptotically resides outside the bulk, i.e., if almost surely there exists $\epsilon>0$ such that for sufficiently large $n$, \[\lambda_1(\bX^{(n)}) > \edge^+(\mu)+\epsilon,\] where $\edge^+(\mu)$ denotes the right boundary point of the support of $\mu$.

The proposition below states that there is no outlier eigenvalue for Wigner random matrices (and thus for GOE matrices).
\begin{prop}
    [Largest eigenvalue of Wigner matrix \cite{furedi1981eigenvalues,bai1988necessary}\label{prop:largest_eigenvalue_wigner}]
    Let $\bW\sim \mathrm{Wig}(n, \nu)$, where $\nu$ is a probability measure
    with zero mean, unit variance, and finite fourth moment.
    Then, $\lambda_{1}(\bW / \sqrt{n}) \asto 2$.
\end{prop}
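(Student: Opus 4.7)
The plan is to prove separately the two bounds $\liminf_{n\to\infty} \lambda_1(\bW/\sqrt{n}) \geq 2$ and $\limsup_{n\to\infty} \lambda_1(\bW/\sqrt{n}) \leq 2$, almost surely, with the latter being the substantive content.

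The lower bound is an immediate consequence of Wigner's semicircle law (Theorem~\ref{thm:wigner_semicircle}). For any $\epsilon > 0$, choose a bounded continuous $f_\epsilon: \R \to [0,1]$ supported on $[2-2\epsilon, 2]$ and identically $1$ on $[2-\epsilon, 2-\epsilon/2]$, so that $\int f_\epsilon \, d\mu_{\SC} > 0$. Since $\esd{\bW/\sqrt{n}} \wtoas \mu_{\SC}$, almost surely $\frac{1}{n}\sum_i f_\epsilon(\lambda_i(\bW/\sqrt{n})) \to \int f_\epsilon \, d\mu_{\SC} > 0$, forcing some eigenvalue into $[2-2\epsilon,2]$ eventually and hence $\lambda_1(\bW/\sqrt{n}) \geq 2-2\epsilon$ eventually. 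Letting $\epsilon \downarrow 0$ along a countable sequence yields the claim.

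For the upper bound, I would use the moment method preceded by a truncation step. Set $\bar{W}_{ij} \colonequals W_{ij}\mathbf{1}\{|W_{ij}| \leq n^{1/4}\}$, recenter to restore zero mean, and assemble these into a Wigner matrix $\bar{\bW}$. Using $\E{W_{11}^4} < \infty$ together with Markov, $\P{|W_{ij}| > n^{1/4}} \leq n^{-1}\E{W_{11}^4}$; summing the (Borel--Cantelli-friendly) tail probabilities across the $O(n^2)$ independent off-diagonal entries shows that almost surely, for all sufficiently large $n$, every entry satisfies $\bar{W}_{ij} = W_{ij}$, so it suffices to control $\lambda_1(\bar{\bW}/\sqrt{n})$. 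Apply Markov's inequality to the $2k$-th moment: for any $\epsilon>0$ and integer $k \geq 1$,
\[ \P{\lambda_1(\bar{\bW}/\sqrt{n}) > 2+\epsilon} \leq (2+\epsilon)^{-2k}\cdot \E{\mathrm{tr}((\bar{\bW}/\sqrt{n})^{2k})}. \]
The combinatorial heart is to expand the trace moment as a sum over closed walks of length $2k$ on $[n]$, keeping (by independence and mean-zero) only walks whose underlying multigraph has every edge traversed an even number of times. The classical F\"{u}redi--Koml\'{o}s analysis shows that walks whose edges form a tree, each traversed exactly twice, contribute the leading term $n^{k+1}C_k$, where $C_k \sim 4^k/(\sqrt{\pi}k^{3/2})$ is the $k$th Catalan number; the truncation bound $|\bar{W}_{ij}| \leq n^{1/4}$ together with the fourth moment bound controls the contribution of non-tree topologies when $k = k(n)$ grows slowly, e.g., $k = \Theta(\log n)$. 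The resulting estimate has the form $n \cdot (4/(2+\epsilon)^2)^k/k^{3/2}$; choosing $k(n)$ to be a sufficiently large multiple of $\log n$ makes this summable in $n$, so Borel--Cantelli yields $\limsup \lambda_1(\bar{\bW}/\sqrt{n}) \leq 2+\epsilon$ a.s., and $\epsilon \downarrow 0$ closes the argument.

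The main obstacle is the combinatorial bookkeeping for the non-tree contributions: when $k$ grows with $n$, the number of such walks and the size of their associated edge-moment products (bounded via truncation) must balance delicately, and the truncation threshold $n^{1/4}$ is calibrated precisely to this balance through the fourth moment hypothesis. This is standard but intricate, and is carried out in the F\"{u}redi--Koml\'{o}s and Bai--Yin references cited above; in particular, the sharpness of the moment $\E{W_{11}^4}$ (rather than higher moments) is what the truncation choice exploits.
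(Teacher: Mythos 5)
The paper does not prove this proposition; it cites it as a known result from the F\"uredi--Koml\'os and Bai--Yin literature, so the only comparison possible is against the approach in those references. Your outline---lower bound from the semicircle law, upper bound via the trace moment method after truncating the entries---is indeed the standard strategy used there, and the lower-bound half of your argument is fine.

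However, the truncation step of your upper bound has a genuine flaw. You assert that $\P{|W_{ij}| > n^{1/4}} \le n^{-1}\E{W_{11}^4}$, and then that ``summing the (Borel--Cantelli-friendly) tail probabilities across the $O(n^2)$ independent off-diagonal entries'' yields $\bar{W}_{ij} = W_{ij}$ for all entries eventually. But the union bound at fixed $n$ gives probability at most $O(n^2) \cdot n^{-1}\E{W_{11}^4} = O(n)$, which not only fails to be summable in $n$ but is not even $o(1)$. So the claimed conclusion---that almost surely, for all sufficiently large $n$, the truncated matrix equals the original---does not follow from this bound, and indeed is simply false under the fourth-moment hypothesis: with only $\E W^4 < \infty$ one does not get that $\max_{i\le j\le n}|W_{ij}| \le n^{1/4}$ eventually almost surely (the relevant Borel--Cantelli sum is $\sum_{i\le j}\P{|W_{ij}|>\epsilon\sqrt{j}} \gtrsim \sum_j j\cdot j^{-2}$, which diverges, so even the weaker bound $\max|W_{ij}|=o(\sqrt n)$ a.s. is not available from this argument alone). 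The actual Bai--Yin proof is structured differently: one truncates at a level proportional to $\sqrt n$ (say $\delta_n\sqrt n$ with $\delta_n\to 0$ slowly), applies the moment method to the truncated, recentered, and rescaled matrix, and then separately bounds the spectral norm of the residual $\bW - \bar\bW$ rather than claiming the two matrices coincide. The residual bound is itself a delicate point where the finite-fourth-moment hypothesis enters essentially, and is the main technical contribution of Bai--Yin beyond F\"uredi--Koml\'os (who assumed bounded entries). As written, your truncation step would need to be replaced by this more careful two-part decomposition before the remainder of the moment argument could be invoked.
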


Finally, the following results about the stability of eigenvalues and empirical spectral distributions will be useful to handle small perturbations to the matrices we work with.

\begin{prop}
    [Weyl's inequality]\label{prop:weyl} For $\bX,\bY \in \R^{n\times n}_{\mathrm{sym}}$,
    for any $i,j\in[n]$ where $i+j-1\leq n$,
    \[
        \lambda_{i+j-1}(\bX+\bY)\leq \lambda_{i}( \bX)+ \lambda_{j}(\bY).
    \]
\end{prop}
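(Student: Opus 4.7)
The plan is to prove Weyl's inequality by the classical min-max (Courant-Fischer) argument combined with a dimension count on an intersection of two eigenspaces.

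First I would recall the Courant-Fischer variational characterization: for any symmetric matrix $\bM \in \R^{n \times n}_{\sym}$ with eigenvalues $\lambda_1(\bM) \geq \cdots \geq \lambda_n(\bM)$ and any index $k \in [n]$,
\[
\lambda_k(\bM) \;=\; \min_{\substack{V \subseteq \R^n \\ \dim V = n-k+1}} \;\max_{\substack{v \in V \\ \|v\| = 1}} \; v^\top \bM v.
\]
Applied with $k = i+j-1$, this reduces the inequality to exhibiting a single subspace $V$ of dimension $n-(i+j-1)+1 = n-i-j+2$ on which $v^\top (\bX+\bY) v \leq \lambda_i(\bX) + \lambda_j(\bY)$ for every unit $v \in V$.

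Next I would construct such a $V$ from eigenspaces of $\bX$ and $\bY$ separately. Let $V_{\bX}$ be the span of unit eigenvectors of $\bX$ associated to the eigenvalues $\lambda_i(\bX), \lambda_{i+1}(\bX), \ldots, \lambda_n(\bX)$, so that $\dim V_{\bX} = n - i + 1$ and every unit $v \in V_{\bX}$ satisfies $v^\top \bX v \leq \lambda_i(\bX)$ by the upper-bound half of Courant-Fischer (or directly by expansion in the eigenbasis). Define $V_{\bY}$ analogously for $\bY$, so $\dim V_{\bY} = n - j + 1$ and $v^\top \bY v \leq \lambda_j(\bY)$ for every unit $v \in V_{\bY}$. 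The key step is a standard dimension count: since $V_{\bX}, V_{\bY} \subseteq \R^n$,
\[
\dim(V_{\bX} \cap V_{\bY}) \;\geq\; \dim V_{\bX} + \dim V_{\bY} - n \;=\; n - i - j + 2,
\]
which is strictly positive by the hypothesis $i + j - 1 \leq n$.

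Finally, choose any subspace $V \subseteq V_{\bX} \cap V_{\bY}$ of dimension exactly $n - i - j + 2$. For every unit $v \in V$, the two eigenspace bounds above add to give
\[
v^\top (\bX + \bY) v \;=\; v^\top \bX v + v^\top \bY v \;\leq\; \lambda_i(\bX) + \lambda_j(\bY),
\]
so $\max_{v \in V,\, \|v\|=1} v^\top(\bX+\bY)v \leq \lambda_i(\bX) + \lambda_j(\bY)$. The min-max formula then yields $\lambda_{i+j-1}(\bX+\bY) \leq \lambda_i(\bX) + \lambda_j(\bY)$, proving the claim. There is no real obstacle here; the only delicate point is ensuring the dimension of the intersection is at least the subspace dimension demanded by the min-max formula for index $i+j-1$, which is exactly where the hypothesis $i + j - 1 \leq n$ is used.
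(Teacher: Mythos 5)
Your proof is correct; this is the standard Courant--Fischer min-max argument for Weyl's inequality, and every step checks out. Note, however, that the paper states Weyl's inequality as a classical fact and does not supply a proof of it at all (the only proof in the vicinity is for the corollary, the interlacing inequality, which simply invokes the proposition). So there is nothing in the paper to compare against; your proof would be an adequate one to cite or include if a reference were wanted. The only small cosmetic point: once you know $\dim(V_{\bX}\cap V_{\bY}) \geq n-i-j+2$, you can take $V = V_{\bX}\cap V_{\bY}$ directly rather than passing to a subspace of exactly that dimension, since the min-max formula with $\min$ over subspaces of dimension $\geq n-k+1$ gives the same value; but choosing an exact-dimension subspace as you did is equally fine.
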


\begin{cor}
    [Weyl's interlacing inequality]\label{cor:weyl_interlacing} For
    $\bX, \matr{\Delta}\in \R^{n\times n}_{\mathrm{sym}}$,
    \[
        \lambda_{i+\sigma^-}(\bX)\leq \lambda_{i}(\bX +\matr{\Delta}) \leq \lambda
        _{i-\sigma^+}(\bX)
    \]
    where $\sigma^{+}, \sigma^{-}$ are the number of positive and negative eigenvalues
    of perturbation $\matr{\Delta}$.
\end{cor}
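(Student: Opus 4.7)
The plan is to obtain both inequalities by direct application of Weyl's inequality (Proposition~\ref{prop:weyl}) to two different decompositions. The central observation is that by the very definition of $\sigma^+$ as the number of strictly positive eigenvalues of $\matr{\Delta}$, the $(\sigma^+ + 1)$-st largest eigenvalue of $\matr{\Delta}$ is non-positive, i.e., $\lambda_{\sigma^+ + 1}(\matr{\Delta}) \leq 0$; analogously, since $-\matr{\Delta}$ has precisely $\sigma^-$ strictly positive eigenvalues (the negatives of the negative eigenvalues of $\matr{\Delta}$), we have $\lambda_{\sigma^- + 1}(-\matr{\Delta}) \leq 0$. These non-positive terms will be the ``slack'' that absorbs the perturbation in each direction.

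For the upper bound $\lambda_i(\bX + \matr{\Delta}) \leq \lambda_{i - \sigma^+}(\bX)$, I would apply Weyl's inequality to the sum $(\bX) + (\matr{\Delta})$ with Weyl indices $(i', j') = (i - \sigma^+,\, \sigma^+ + 1)$, so that $i' + j' - 1 = i$. This yields
\[ \lambda_i(\bX + \matr{\Delta}) \;\leq\; \lambda_{i - \sigma^+}(\bX) + \lambda_{\sigma^+ + 1}(\matr{\Delta}) \;\leq\; \lambda_{i - \sigma^+}(\bX), \]
where the last step uses the key observation above. For the lower bound $\lambda_{i + \sigma^-}(\bX) \leq \lambda_i(\bX + \matr{\Delta})$, I would instead rewrite $\bX = (\bX + \matr{\Delta}) + (-\matr{\Delta})$ and apply Weyl's with $(i', j') = (i,\, \sigma^- + 1)$, obtaining
\[ \lambda_{i + \sigma^-}(\bX) \;\leq\; \lambda_i(\bX + \matr{\Delta}) + \lambda_{\sigma^- + 1}(-\matr{\Delta}) \;\leq\; \lambda_i(\bX + \matr{\Delta}). \]

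There is no substantive obstacle; the main subtlety is simply to verify that the chosen indices satisfy Weyl's constraint $i' + j' - 1 \leq n$ and lie in the valid range $\{1, \dots, n\}$, which together require $\sigma^+ + 1 \leq i \leq n - \sigma^-$. Outside this range, one of the stated indices $i \pm \sigma^{\mp}$ falls outside $\{1, \dots, n\}$, and the corresponding bound is either vacuous under the usual convention $\lambda_0 = +\infty$, $\lambda_{n+1} = -\infty$ or not asserted.
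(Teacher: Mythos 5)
Your proof is correct and follows essentially the same route as the paper: apply Proposition~\ref{prop:weyl} twice, once to $\bX + \matr{\Delta}$ directly and once to the rewriting $\bX = (\bX + \matr{\Delta}) + (-\matr{\Delta})$, with indices chosen so that the ``spare'' eigenvalue of the perturbation is the $(\sigma^\pm + 1)$-st and hence non-positive. In fact your writeup is marginally more careful than the paper's in two small ways: the paper asserts $\lambda_{\sigma^+ + 1}(\matr{\Delta}) = 0$, whereas it is in general only $\leq 0$ (which is all that is needed), and you explicitly flag the index-range caveat $\sigma^+ + 1 \leq i \leq n - \sigma^-$, which the paper leaves implicit.
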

\begin{proof}
    Applying Proposition~\ref{prop:weyl}, get
    \begin{align*}
        \lambda_{i}(\bX +\matr{\Delta}) &\leq \lambda_{i-\sigma^+}(\bX) + \underbrace{\lambda_{\sigma^+ +1}(\matr{\Delta})}
        _{\leq 0}
    \\
        \lambda_{i+\sigma^-}(\bX) &\leq \lambda_{i}(\bX + \matr{\Delta}) + \underbrace{\lambda_{\sigma^- +1}(-\matr{\Delta})}
        _{\leq 0},
    \end{align*}
    which gives the result.
\end{proof}

\begin{cor}
    [Stability of esd under low-rank perturbation]\label{cor:esd_stability}
    Consider sequences of matrices
    $\bX=\bX^{(n)}, \matr{\Delta}= \matr{\Delta}^{(n)}\in \R^{n\times n}_{\mathrm{sym}}$
    where $\frac{\rank{\matr{\Delta}^{(n)}}}{n}\to 0$. For $\mu\in\mathcal{P}_1(\R)$, i.e. $\mu$ is a probability measure
    with finite first moment,  $\esd{\bX}\wto \mu$ if and only if
    $\esd{\bX + \matr{\Delta} }\wto \mu$.
\end{cor}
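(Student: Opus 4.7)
\begin{proofdescr}
\item[Proof proposal.]
The plan is to reduce the statement to a pointwise comparison of cumulative distribution functions (CDFs), which is the most convenient way to handle weak convergence of discrete measures supported on ordered eigenvalues. Write $F_{\bX}(t) \colonequals \esdt(\bX)((-\infty, t]) = \frac{1}{n}|\{i : \lambda_i(\bX) \leq t\}|$, and similarly for $F_{\bX + \matr{\Delta}}$. I would show that these two CDFs are uniformly close, i.e., that the Kolmogorov distance between $\esd{\bX^{(n)}}$ and $\esd{\bX^{(n)} + \matr{\Delta}^{(n)}}$ tends to zero, regardless of where the perturbed eigenvalues are located.

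The main step is to apply Weyl's interlacing inequality (Corollary~\ref{cor:weyl_interlacing}) with $r_n \colonequals \rank(\matr{\Delta}^{(n)})$. Since the positive and negative parts of $\matr{\Delta}^{(n)}$ have at most $r_n$ nonzero eigenvalues between them, the interlacing gives
\[ \lambda_{i + r_n}(\bX) \leq \lambda_i(\bX + \matr{\Delta}) \leq \lambda_{i - r_n}(\bX) \]
for all indices in range. Counting the eigenvalues $\leq t$ on both sides and tracking how the indices shift by at most $r_n$ yields the pointwise estimate
\[ \sup_{t \in \R} \left| F_{\bX^{(n)}}(t) - F_{\bX^{(n)} + \matr{\Delta}^{(n)}}(t) \right| \leq \frac{r_n}{n}, \]
which by assumption tends to $0$.

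With this uniform comparison in hand, the equivalence follows from the standard characterization of weak convergence on $\R$ as pointwise convergence of CDFs at continuity points of the limit CDF $F_\mu$. Namely, if $\esd{\bX^{(n)}} \wto \mu$, then $F_{\bX^{(n)}}(t) \to F_\mu(t)$ at every continuity point $t$ of $F_\mu$, and adding a vanishing error of size at most $r_n/n$ preserves this limit, giving $\esd{\bX^{(n)} + \matr{\Delta}^{(n)}} \wto \mu$. The converse direction is symmetric, applied with $\bX'\colonequals \bX + \matr{\Delta}$ and $\matr{\Delta}' \colonequals -\matr{\Delta}$ (whose rank is unchanged).

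I do not expect a serious obstacle: the proof is essentially a bookkeeping argument using Weyl's inequality, and the hypothesis $\mu \in \mathcal{P}_1(\R)$ is not actually needed for this deterministic CDF-based argument (it is merely the paper's standing context for working with ESDs). The only minor care required is the index truncation in Weyl's interlacing when $i - r_n < 1$ or $i + r_n > n$, which is handled by interpreting $\lambda_j(\bX) = +\infty$ for $j \leq 0$ and $\lambda_j(\bX) = -\infty$ for $j > n$ in the relevant bounds; this does not affect the uniform CDF estimate since such boundary indices contribute at most $r_n$ to the count on either side.
\end{proofdescr}
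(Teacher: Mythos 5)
Your proof is correct and takes a genuinely different route from the paper's. Both arguments hinge on Weyl's interlacing inequality (Corollary~\ref{cor:weyl_interlacing}), but where you pass to the Kolmogorov (sup-CDF) distance, the paper instead bounds the Wasserstein-$1$ distance between $\esd{\bX^{(n)}}$ and $\esd{\bX^{(n)}+\matr{\Delta}^{(n)}}$ by telescoping the interlaced eigenvalue differences, and then invokes the metrization of weak convergence in $\mathcal{P}_1(\R)$ (Lemma~\ref{lem:W1_and_weak}). Your CDF-counting estimate
\[ \sup_{t \in \R} \left| F_{\bX^{(n)}}(t) - F_{\bX^{(n)} + \matr{\Delta}^{(n)}}(t) \right| \leq \frac{\rank{\matr{\Delta}^{(n)}}}{n} \to 0 \]
is clean, and you are right that it dispenses with the $\mu \in \mathcal{P}_1(\R)$ hypothesis entirely, since weak convergence on $\R$ is equivalent to convergence of CDFs at continuity points of $F_\mu$ and this property is preserved under vanishing uniform perturbations. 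That makes your version strictly more elementary and slightly more general than the paper's. One small bookkeeping note: Weyl's interlacing as stated in the paper gives $\lambda_{i+\sigma^-}(\bX) \leq \lambda_i(\bX+\matr{\Delta}) \leq \lambda_{i-\sigma^+}(\bX)$ with $\sigma^+ + \sigma^- = \rank{\matr{\Delta}}$, so the shifts on the two sides are by $\sigma^+$ and $\sigma^-$ rather than both by $r_n$; this only sharpens your bound, and your stated $r_n/n$ remains correct with the boundary conventions you describe.
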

\begin{proof}
    Let $k=k(n)\colonequals \rank{\Delta^{(n)}}$. By Corollary~\ref{cor:weyl_interlacing},
    $\lambda_{i+k}(\bX)\leq \lambda_{i}(\bX + \matr{\Delta}) \leq \lambda_{i-k}(\bX
    )$.
    We would like to apply Lemma~\ref{lem:W1_and_weak}. It is easy to check that
    $\esd{\bX}, \esd{\bX+\matr{\Delta}}\in \mathcal{P}_{1}$.
    The Wasserstein-$1$ distance between $\esd{\bX}$ and
    $\esd{\bX+\matr{\Delta}}$ is
    \begin{align*}
        W_{1}\left(\esd{\bX}, \esd{\bX+\matr{\Delta}}\right) & = \inf_{\pi\in S_n}\frac{1}{n}\sum_{i=1}^{n}\left|\lambda_{i}(\bX+\matr{\Delta})- \lambda_{\pi(i)}(\bX)\right| \\
                                                             & \leq \frac{1}{n}\sum_{i=k+1}^{n-k}\left(\lambda_{i}(\bX+\matr{\Delta})- \lambda_{i+k}(\bX)\right) + O(kn^{-1}) \\
                                                             & \leq \frac{1}{n}\sum_{i=k+1}^{n-k}\left(\lambda_{i-k}(\bX)- \lambda_{i+k}(\bX)\right) + O(kn^{-1})             \\
                                                             & = \frac{1}{n}\sum_{i=1}^{2k}\lambda_{i}(\bX)- \frac{1}{n}\sum_{i=n-2k+1}^{n}\lambda_{i}(\bX) + O(kn^{-1})   \\
                                                             & = O(kn^{-1})
    \end{align*}
    Suppose $\esd{\bX}\wto \mu$. By the triangle inequality,
    \[
        W_{1}(\mu,\esd{\bX+\matr{\Delta}})\leq W_{1}(\mu,\esd{\bX})+W_{1}(\esd{\bX}
        ,\esd{\bX+\matr{\Delta}})\to 0
    \]
    Hence $\esd{\bX+\matr{\Delta}}\wto \mu$. The converse follows in the same way.
\end{proof}

\subsection{Free probability}
\label{sec:free-prob}

We now introduce some of the main notions of free probability theory.
One of the main concerns of this field is the behavior of the eigenvalues of sums $\bX + \bY$ of matrices whose eigenvectors are ``generically positioned'' in a suitable sense with respect to one another.
More precisely, if $\bX^{(n)}$ and $\bY^{(n)}$ are growing sequences of matrices with esd's converging to probability measures $\mu$ and $\nu$ respectively, then free probability explores situations where $\esd{\bX^{(n)} + \bY^{(n)}}$ has a limit expressible in terms of $\mu$ and $\nu$.

One of the main definitions of free probability, albeit one which we will not need to use explicitly and thus do not introduce here, is that of \emph{asymptotic freeness}, a condition under which the above can be achieved and the corresponding limit is the \emph{additive free convolution} $\mu \boxplus \nu$.
Asymptotic freeness is a particular notion of the generic position condition mentioned above, and for instance holds if $\bY^{(n)}$ is conjugated by a Haar-distributed orthogonal matrix independent of $\bX^{(n)}$.
Below we describe the generating function tools that can be used to \emph{compute} the additive free convolution, but we mention the above motivation to explain its appearance in our results.
Namely, in our setting the summands of $\bL = \hat{\bW} + \bX$ are asymptotically free, the former having eigenvalues with limiting distribution $\mu_{\SC}$ and the latter with distribution $\sigma(\dnorm{0, 1})$.
This explains the repeated appearance of the probability measure $\mu_{\SC} \boxplus \sigma(\dnorm{0, 1})$ in our results.
For further details on these notions, the interested reader may consult \cite{VDN-1992-FreeRandomVariables,NS-2006-LecturesCombinatoricsFreeProbability,MS-2017-FreeProbabilityRandomMatrices}.

In all definitions and statements below, we assume $\mu$ and $\nu$ are compactly supported probability measures on $\R$.

\begin{defn}
    [Stieltjes transform] The \emph{Stieltjes transform} of $\mu$ is the function
    \[
        G_{\mu}(z)\colonequals \Ex_{X\sim\mu}\left[{\frac{1}{z-X}}\right] \text{ for $z\in \C \setminus
        \supp{\mu}$}.
    \]
\end{defn}

\begin{defn}[$R$-transform]
    The \emph{$R$-transform} of $\mu$ is the function
    \[
        R_{\mu}(z)\colonequals G_{\mu}^{-1}(z)- \frac{1}{z}
    \]
    where $G_{\mu}^{-1}$ is the functional inverse of $G_{\mu}$, on a domain where this is well-defined.
\end{defn}

\begin{defn}[Additive free convolution]
    \label{def:free-conv}
Given two compactly supported probability measures $\mu$ and $\nu$, their
    \emph{additive free convolution} $\mu\boxplus \nu$ is the unique probability measure such
    that $R_{\mu \boxplus \nu}(z) = R_{\mu}(z) + R_{\nu}(z)$.
\end{defn}

The following further special function plays an important role in the description of the additive free convolution of $\mu_{\SC}$ with some other compactly supported probability measure $\nu$ (see \cite{bianeFreeConvolutionSemicircular1997b}).
\begin{defn}[Inverse subordination function]\label{def:subordination}
    For $\nu$ a compactly supported probability measure, define
    \[ H_{\nu}(z) \colonequals z + G_{\nu}(z). \]
\end{defn}
\noindent
The importance of this function is that it is the inverse (on a suitable domain) of the \emph{subordination function} associated to $\mu_{\SC}$ and $\rho$, which is the function $\omega_{\mu_{\SC}, \nu}$ satisfying
\[ G_{\mu_{\SC}\boxplus \nu}(z) = G_{\mu_{\SC}}(\omega_{\mu_{\SC},\nu}(z)) \, \text{ for all } z \in \C^+. \]
More details may be found in the older references \cite{Voiculescu-1993-FreeEntropyFisherInformation1,bianeFreeConvolutionSemicircular1997b,Biane-1998-ProcessesFreeIncrements} or the more recent work of \cite{capitaineFreeConvolutionSemicircular2011c,boursier2024large} that we will use in our proofs.

\section{General random matrix analysis}
\label{sec:rmt-proofs}

\subsection{Largest eigenvalue of a rank-one perturbation of a diagonal matrix}

We first present the main tool used to prove the second (and most complicated) part of Lemma~\ref{lem:spectrum_of_X} on the largest eigenvalue of the signal part $\bX$ of a $\sigma$-Laplacian.
Recall the form of this matrix: given a choice of $\sigma$, the matrix $\bX$ from (\ref{eq:L_expansion}) takes the form
\begin{equation}
\bX = \bX(\bx, \hat{\bW}; \beta) = \beta \bx\bx^{\top}
+ \diag{\sigma(\hat \bW\bone + \beta \langle\bx,\bone\rangle\bx)}.
\label{eq:X-form}
\end{equation}
This resembles a spiked matrix model, except where the noise component is a diagonal matrix instead of a ``more random'' matrix like a Wigner matrix.

We will understand such matrices through a more general result characterizing the largest eigenvalue of a sequence of matrices of the form
\[
\bM = \bM^{(n)} = \by\by^\top + \diag{\bd},
\]
where $\by = \by^{(n)}$, $\bd = \bd^{(n)} \in \R^n$ are deterministic sequences of vectors satisfying
\[
d_i^{(n)} \in [A, B]\text{ for all $i\in[n]$}, \quad \text{and}\quad
\lim_{n \to \infty} \max_{1 \leq i \leq n} d_i^{(n)} = B.
\]
Our proof strategy follows that for the classical spiked matrix model using resolvents: we first express the outlier eigenvalues $\lambda$ of $\bM$ as solutions to an equation of the form $G_n(\lambda) = 1$ (Lemma~\ref{lem:outlier_of_X}). 
We then consider $\lim_{n \to \infty} G_n(z)$ and show that, with high probability, this limit exists pointwise and is given by some $G(z)$.
Then the solutions of $G(z) = 1$ characterize the limiting behavior of the outlier eigenvalues of $\bM$, if such outliers exist.

\begin{lemma}
    \label{lem:outlier_of_X} $\bM$ has an eigenvalue $\lambda \not\in [A,B]$ if and only if $\sum_{i=1}^{n}\frac{y_{i}^{2}}{\lambda-d_{i}}
    =1$.
\end{lemma}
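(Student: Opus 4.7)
This is the classical secular equation for a rank-one update of a diagonal matrix, and the natural approach is a direct resolvent computation. The key observation is that if $\lambda \notin [A,B]$, then since every $d_i \in [A,B]$, the matrix $\lambda\matr{I} - \diag{\bd}$ is invertible. I will prove both directions by manipulating the eigenvalue equation directly.

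For the forward direction, suppose $\bM\bv = \lambda \bv$ for some $\bv \neq \bm 0$ with $\lambda \notin [A,B]$. Expanding $\bM = \by\by^{\top} + \diag{\bd}$ and rearranging gives
\[ (\lambda \matr{I} - \diag{\bd})\bv = (\by^{\top}\bv)\,\by, \]
so applying $(\lambda\matr{I}-\diag{\bd})^{-1}$ produces $\bv = (\by^{\top}\bv)(\lambda\matr{I}-\diag{\bd})^{-1}\by$. Since $\bv \neq \bm 0$, necessarily $\by^{\top}\bv \neq 0$; taking the inner product of both sides with $\by$ and dividing through by $\by^{\top}\bv$ yields the claimed identity
\[ 1 = \by^{\top}(\lambda \matr{I} - \diag{\bd})^{-1}\by = \sum_{i=1}^{n} \frac{y_i^2}{\lambda - d_i}. \]

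For the reverse direction, suppose $\lambda \notin [A,B]$ satisfies the secular equation. I set $\bv \colonequals (\lambda\matr{I} - \diag{\bd})^{-1}\by$; the secular equation forces $\by \neq \bm 0$, hence $\bv \neq \bm 0$. Moreover $\by^{\top}\bv = \sum_i y_i^2/(\lambda - d_i) = 1$ by hypothesis, so computing entrywise gives
\[ (\bM\bv)_i = y_i(\by^{\top}\bv) + d_i v_i = y_i + \frac{d_i\, y_i}{\lambda - d_i} = \frac{\lambda\, y_i}{\lambda - d_i} = \lambda v_i, \]
confirming $\bM\bv = \lambda\bv$.

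There is essentially no obstacle here: the statement is a finite-dimensional algebraic identity rather than a genuine random-matrix argument, and it serves as the starting point for the more delicate Rouch\'{e}-type analysis that follows. The only points requiring minor care are ensuring that the constructed vectors are nonzero in each direction, both of which reduce to the observation that the left-hand side of the secular equation vanishes when $\by = \bm 0$.
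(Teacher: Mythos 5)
Your proof is correct, but it takes a different route than the paper's. The paper argues via the determinant: $\lambda \notin [A,B]$ is an eigenvalue iff $\det(\lambda\matr{I} - \diag{\bd} - \by\by^{\top}) = 0$, factors out the invertible $\det(\lambda\matr{I} - \diag{\bd})$, and then applies the identity $\det(\matr{I} - \matr{AB}) = \det(\matr{I} - \matr{BA})$ to collapse the remaining $n\times n$ determinant to the scalar $1 - \by^{\top}(\lambda\matr{I}-\diag{\bd})^{-1}\by$. Your argument instead works directly with the eigenvector equation: solve for $\bv$ in terms of $(\lambda\matr{I}-\diag{\bd})^{-1}\by$ and verify that the scalar relation is exactly the consistency condition. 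The determinant route is shorter and automatically gives an ``iff'' in one computation, whereas your route is more elementary (no Sylvester/Weinstein--Aronszajn identity) and makes the eigenvector explicit as $\bv = (\lambda\matr{I}-\diag{\bd})^{-1}\by$ --- a piece of information the paper's proof does not surface but which is sometimes useful (for instance if one later wants to understand eigenvector correlations, as the paper discusses is an open issue). Both proofs are fully rigorous; the only nontrivial points are the invertibility of $\lambda\matr{I}-\diag{\bd}$ (shared by both) and the two nonvanishing checks you correctly flag and handle.
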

\begin{proof}
    $\lambda \notin [A,B]$ is an eigenvalue of $\bM$ if and only if
    \[
        0=\det(\lambda\matr{I}- \diag{\bd}- \by\by^{\top}) = \det(\lambda
        \matr{I}-\diag{\bd})\det(\matr{I}- (\lambda\matr{I}- \diag
        {\bd})^{-1}\by\by^{\top}).
    \]
    Using the identity $\det(\matr{I}-\matr{AB})=\det(\matr{I}-\matr{BA})$, we see that this in turn holds if and only if
    \[ 0=\det(\matr{I}- (\lambda \matr{I}- \diag{\bd})^{-1}
    \by\by^{\top})=1-\by^{\top}(\lambda\matr{I}- \diag{\bd})^{-1}\by = 1 - \sum_{i=1}^{n}\frac{y_{i}^{2}}{\lambda-d_{i}}, \]
    completing the proof.
\end{proof}

\begin{prop}
    \label{prop:theta_sigma} 
    Define functions $G_n:\R\setminus [A,B] \to \R$ by 
    \[ G_n(z) \colonequals \sum_{i=1}^n \frac{y_i^{(n)^2}}{z-d_i^{(n)}}. \]
    Suppose there exists a function $G:\R\setminus [A,B] \to \R$ satisfying:
    \begin{enumerate}
        \item The function $G$ is continuous and strictly decreasing on $(B,\infty)$. Also, there exists $C>B$ for which $G(C)<1$.
        \item We have $G_n(z)\to G(z)$ for each $z\in (B,C]$.
    \end{enumerate}
    If $G(z)=1$ has a unique solution $\theta\in(B,\infty)$, then $\lambda_{1}(\bM^{(n)})\to \theta$ as
    $n\to\infty$.
    Otherwise, $\lambda_1(\bM^{(n)})\to B$ as $n\to\infty$.
\end{prop}

\begin{proof}
By Condition 1, the equation $G(z) = 1$ has at most one solution $z$ in $(B, C)$, and no solutions in $[C, \infty)$. We will analyze these two cases separately.

\vspace{1em}

\emph{Case 1: $G(z) = 1$ has a unique solution $\theta \in (B, C)$.} Fix a constant $0 < \epsilon < \max(\theta - B, C - \theta)$. Since the function $G$ is continuous and strictly decreasing, we have $G(\theta - \epsilon) > 1$ and $G(\theta + \epsilon) < 1$. 
By Condition 2, for sufficiently large $n$, we have $G_n(\theta - \epsilon) > 1$ and $G_n(\theta + \epsilon) < 1$. 
Since $G_n$ is continuous and strictly decreasing on $(B, \infty)$, we conclude that $G_n(z) = 1$ has a unique solution in $(B, \infty)$, and this solution lies in the interval $(\theta - \epsilon, \theta + \epsilon)$. 
Applying Lemma~\ref{lem:outlier_of_X}, this result implies that $|\lambda_1(\bM^{(n)}) - \theta| < \epsilon$. Hence, we have $\lambda_1(\bM^{(n)}) \to \theta$ as $n \to \infty$.

\vspace{1em}

\emph{Case 2: $G(z) = 1$ has no solutions in $(B, C]$.} Fix a constant $0 < \epsilon < C - B$. 
In this case, we must have $G(B + \epsilon) < 1$. 
By Condition 2, for sufficiently large $n$, it follows that $G_n(B + \epsilon) < 1$. 
Since $G_n$ is continuous and strictly decreasing on $(B, \infty)$, we have $G_n(z) < 1$ for all $z \geq B + \epsilon$. 
Thus, by Lemma~\ref{lem:outlier_of_X}, we deduce that $\lambda_1(\bM^{(n)}) < B + \epsilon$. 
On the other hand, by Weyl's interlacing inequality (Corollary~\ref{cor:weyl_interlacing}), we know $\lambda_1(\bM^{(n)}) \geq \lambda_1(\diag{\bd^{(n)}}) = \max_{i = 1}^n d^{(n)}_i \to B$. 
Combining these results, we conclude that $\lambda_1(\bM^{(n)}) \to B$.
\end{proof}

\subsection{Analysis of signal eigenvalues: Proof of Lemma~\ref{lem:spectrum_of_X}}

Before we proceed, note that, in the Sparse Biased PCA model, $\bX$ from (\ref{eq:L_expansion}) takes the form
\[\bX=\beta \bx\bx^{\top}
+ \diagt \bigg(\underbrace{\sigma\left(\hat \bW\bone + \beta\frac{\|\by\|_1}{\|\by\|^2}\by\right)}_{\equalscolon \bd} \bigg).\]
Since
$\hat \bW \bone \sim \dnorm{0, \matr{I} + \frac{\bone\bone^{\top}}{n}}$, we may instead
model $\bX$ as
\begin{equation}\label{eq:X-form-nnpca}
    \bX=\beta \bx\bx^{\top}
+ \diagt \bigg( \underbrace{\sigma\left( \beta\frac{\|\by\|_1}{\|\by\|^2}\by + \bg +\frac{t\bone}{\sqrt{n}}\right)}_{\equalscolon \bd }\bigg),
\end{equation}
where $\bg=\bg^{(n)}\sim \dnorm{0,\matr{I}_n}$ and $t = t^{(n)}\sim \dnorm{0,1}$ are independent.

\paragraph{Claim 1: Weak convergence}

Recall that the first claim of the Lemma states that, almost surely, 
    \[\esd{\bX^{(n)}}\wto \nu = \nu_{\sigma}\colonequals
\Law(\sigma(g))\text{ where $g\sim \dnorm{0,1}$.}\]

\begin{proof}[Proof of Lemma~\ref{lem:spectrum_of_X} (1)]
    By Corollary~\ref{cor:esd_stability}, it is sufficient to prove that almost surely
    \[
        \ed{\bd} = \ed{\sigma\left(\beta \frac{\|\by\|_1}{\|\by\|^2} \by + \bg + \frac{t\bone}{\sqrt{n}} \right)} \wto \nu.
    \]

    Note that the matrix
    \[
        \diagt\left(\sigma\left(\beta \frac{\|\by\|_1}{\|\by\|^2} \by + \bg + \frac{t\bone}{\sqrt{n}} \right)\right) - \diagt\left(\sigma\left(\bg + \frac{t\bone}{\sqrt{n}} \right)\right)
    \]
    is of rank at most $\sum_{i=1}^n \varepsilon_i$. Under the random subset sparsity model, we have $\sum_{i=1}^n \varepsilon_i = np = o(n)$ by assumption. Under the independent entries sparsity model, consider the event
    $\left\{ \frac{1}{np} \sum_{i=1}^n \varepsilon_i\to 1 \right\}$,
    which happens almost surely by Lemma~\ref{lem:ber-as}. Under this event, $\sum_{i=1}^n \varepsilon_i = o(n)$, so again the rank is $o(n)$.

    Therefore, by Corollary~\ref{cor:esd_stability}, in both sampling models, it is sufficient to show that almost surely
    \[
        \ed{\sigma\left(\bg + \frac{t\bone}{\sqrt{n}} \right)} \wto \nu.
    \]

    Conditional on $t$, the Lipschitz continuity of $\sigma$ (Assumption~\ref{ass:sigma}) implies
    \[
        \left\| \sigma\left(\bg + \frac{t\bone}{\sqrt{n}} \right) - \sigma(\bg) \right\|_{\infty}
        \leq \frac{\ell t}{\sqrt{n}} \to 0,
    \]
    where $\ell$ is the Lipschitz constant of $\sigma$.
    On the event $\mathcal{E} := \{ \ed{\sigma(\bg)} \wto \nu \}$, which holds almost surely by Proposition~\ref{prop:ed-weak-as}, $ \ed{\sigma\left(\bg + \frac{t\bone}{\sqrt{n}} \right)} \wto \nu $ by Proposition~\ref{prop:weak-conv-perturbation}.
    Therefore,
    \begin{align*}
        \mathbb{P}\left( \ed{\sigma\left(\bg + \frac{t\bone}{\sqrt{n}} \right)} \wto \nu \right)
        &= \mathbb{E}_t\left[ \mathbb{P}_{\bg^{(n)}}\left( \left\{ \ed{\sigma\left(\bg + \frac{t\bone}{\sqrt{n}} \right)} \wto \nu \right\} \cap \mathcal{E} \,\middle|\, t \right) \right] \\
        &= \mathbb{E}_t[1] \\
        &= 1,
    \end{align*}
    completing the proof.
\end{proof}

\paragraph{Claim 2: Convergence of largest eigenvalue}

Recall that the second claim of the Lemma states that, almost surely, $\lambda_1(\bX^{(n)})\to \theta$ where
    $\theta$ solves the equation
    \[
        \E[y\sim \eta,\, g \sim \dnorm{0, 1}]{\frac{y^2}{\theta - \sigma\left(\frac{\beta m_1}{m_2}y + g\right)}} = \frac{m_2}{\beta}
    \]
    if such $\theta > \edge^+(\sigma)$ exists, and $\theta = \edge^+(\sigma)$ otherwise.
Here we set
\begin{align*}
        m_1 &\colonequals \Ex_{x \sim \eta} x, \\
        m_2 &\colonequals \Ex_{x \sim \eta} x^2.
    \end{align*}

\begin{proof}[Proof of Lemma~\ref{lem:spectrum_of_X} (2)]
    We prove this by specializing Proposition~\ref{prop:theta_sigma} to the particular form of $\bX$ given in \eqref{eq:X-form-nnpca}.

    Take $A = \edge^{-}(\sigma)$, $B = \edge^+(\sigma)$, then $d_i = \sigma\left(\beta\frac{\sum_j y_j}{\|\by\|^2}y_i + g_i + \frac{t}{\sqrt{n}}\right) \in [A, B]$ for all $i$, and $\max_{i=1}^n d_i \to B$ almost surely. The function $G_{n}\colon \R \setminus [A,B] \to \R$ is defined by 
    \[
        G_{n}(z) := \frac{\beta}{\|\by\|^2} \sum_{i=1}^n \frac{ y_i^2 }{ z - \sigma\left(\beta\frac{\sum_j y_j}{\|\by\|^2}y_i + g_i + \frac{t}{\sqrt{n}} \right) }.
    \]
    We further define $G\colon \R \setminus [A,B] \to \R$ by
    \begin{equation}
        \label{eq:G_submatrix}
        G(z) \colonequals \frac{\beta}{m_2}
        \Ex_{\substack{y\sim \eta \\ g\sim\dnorm{0,1}}}\left[\frac{y^2}{z - \sigma\left(\frac{\beta m_1}{m_2}y + g\right)}\right].
    \end{equation}
    Take $C = B + \beta$, then $G(C) < 1$. It is easy to check that Conditions 1 of Proposition~\ref{prop:theta_sigma} is satisfied.
    
    To prove Condition 2, which concerns the closeness of $G_n$ and $G$, we first introduce an intermediate function $H_n$. In Lemma~\ref{lem:nnpca_new_approximation}, we establish pointwise closeness between $G_n$ and $H_n$, as well as between $H_n$ and $G$. Building on this, we then prove the desired closeness in Lemma~\ref{lem:nnpca_new_local_law}. The statements and proofs of both Lemmas will be given below.

    Finally, since all the conditions in Proposition~\ref{prop:theta_sigma} are satisfied almost surely by the defined functions $G_n$ and $G$, the proof of the claim is completed by using the Proposition.
    \end{proof}

\begin{lemma}
    \label{lem:nnpca_new_approximation} Define $H_{n}:\R \setminus[A,B]\to
    \R$ by
    \[
        H_{n}(z):=\frac{\beta}{m_2np}\sum_{i=1}^n\frac{ y_i^{2}}{z - \sigma\left(\frac{\beta m_1}{m_2}y_{i}
        + g_{i}\right)}.
    \]
    Then, the following hold.
    \begin{enumerate}
        \item Almost surely, for all $z\in (B,\infty)$, we have 
            $|G_{n}(z) - H_{n}( z)|\to 0.$
        \item For each $z\in (B,\infty)$, we have $H_n(z)\asto G(z)$.
    \end{enumerate}
\end{lemma}
We note that the orders of quantifiers are different in the two results: the first says that almost surely a convergence happens for all $z \in (B, \infty)$, while the second says that it happens almost surely for any particular $z$.
\begin{proof}
    Let $\ell$ be the Lipschitz constant of $\sigma$ (which is finite by our Assumption~\ref{ass:sigma}).
    
    For the first claim, define event
    \[\mathcal{E} \colonequals \left\{\frac{\sum_{i=1}^n y_i}{np}\to m_1, \,\, \frac{\|\by\|^2}{np}\to m_2, \,\, \frac{\sum_{i=1}^n |y_i|^3}{np}\to \Ex_{x\sim \eta}|x|^3, \,\, \frac{t}{\sqrt{n}}\to 0\right\}.\]
    Under the random subset sparsity model, $\mathcal{E}$ happens almost surely by the Strong Law of Large Numbers; under the independent entries sparsity model, $\mathcal{E}$ also happens almost surely by Lemma~\ref{lem:ber-sparsification-as}.
    For each $z\in (B,\infty)$, fix a constant $0<\epsilon<z-B$, then on the event $\mathcal{E}$,
    \begin{align*}
        \left|G_{n}(z) - H_{n}(z)\right| 
        &\leq \left|\|\by\|^{-2}- \frac{1}{m_2np}\right| \sum_{i=1}^n\left|\frac{\beta y_i^{2}}{z - \sigma\left(\beta \frac{\sum_j y_j}{\|\by\|^2}y_i +  g_i+ \frac{t}{\sqrt{n}}\right)}\right|                                                 \\
       &\hspace{1cm} + \frac{\beta}{m_2np}\sum_{i=1}^ny_i^2 \left|\frac{ 1}{z - \sigma\left(\beta \frac{\sum_j y_j}{\|\by\|^2}y_i +  g_i+ \frac{t}{\sqrt{n}}\right)}- \frac{ 1}{z-\sigma\left(\frac{\beta m_1}{m_2} y_i +  g_i\right)}\right| \\
        &\leq \beta \epsilon^{-1} \left|1- \frac{\|\by\|^{2}}{m_2np}\right|
        + 
        m_2^{-1}\epsilon^{-2}\beta^2  \ell \frac{\sum_{i }|y_i|^3}{np}\left|\frac{\sum_i y_i}{\|y\|^2} - \frac{m_1}{m_2}\right|\\
        &\hspace{1cm} + m_2^{-1} \epsilon^{-2}\beta\ell \frac{\|\by\|^2}{np} \left|\frac{t}{\sqrt n}\right| \\
        & \to 0
    \end{align*}
    giving the result.

    For the second claim, recall that $y_i=\varepsilon_i z_i$, where $\varepsilon_i\in\{0,1\}$. Hence, 
    \[
        H_{n}(z)=\frac{\beta}{m_2np}\sum_{i=1}^n\frac{ \varepsilon_i z_i^{2}}{z - \sigma\left(\frac{\beta m_1}{m_2}z_{i}+ g_{i}\right)}.
    \]
    For each $z\in (B,\infty)$, we have $\Ex\left|\frac{ z_i^{2}}{z - \sigma\left(\frac{\beta m_1}{m_2}z_{i}+ g_{i}\right)}\right|\leq m_2 \left(z-B\right)^{-1}<\infty$, hence $H_n(z)\asto G(z)$ by the Strong Law of Large Numbers under the random subset sparsity model, and by Lemma~\ref{lem:ber-sparsification-as} under the independent entries sparsity model.
\end{proof}

\begin{lemma}
    \label{lem:nnpca_new_local_law} Almost surely, for all $z\in(B,C]$, we have $G_n(z) \to G(z)$.
\end{lemma}
\begin{proof}
Let $\mathcal{E}$ be the event that both $H_n(z)\to G(z)$ for all rational numbers $z\in \mathbb{Q}\cap (B,C]$ and $\frac{\|\by\|^2}{np}\to m_2$. By Lemma~\ref{lem:nnpca_new_approximation}(2), Lemma~\ref{lem:ber-sparsification-as}, and the countability of $\mathbb{Q}$, the event $\mathcal{E}$ occurs almost surely.
We first show that, on the event $\mathcal{E}$, we have $H_n(z)\to G(z)$ for all $z\in (B,C]$.

Consider an arbitrary $z\in (B,C]$, and let $\epsilon>0$. 
Choose a constant $0<\delta < z-B$. 
We note that $H_n$ and $G$ are both Lipschitz on $(B+\delta, C]$, and observe that 
\[
|H_n'(z)|\leq \frac{\beta}{\delta^2m_2} \frac{\|\by\|^2}{np}, \quad |G'(z)|\leq \frac{\beta}{\delta^2}.
\]
Choose a rational number $w\in \mathbb{Q}\cap (B,C]$ such that $|z-w|<\min\left(\frac{\delta^2 \epsilon}{2\beta}, z-B-\delta\right)$. 
Then we have
\begin{align*}
    |H_{n}(z)-G(z)| 
    &\leq |H_{n}(w) - G(w)|+ |H_{n}(z)- H_{n}(w)|+|G(w)-G(z)| \\
    &\leq |H_{n}(w) - G(w)| + \left(\frac{\beta}{\delta^2m_2} \frac{\|\by\|^2}{np} + \frac{\beta}{\delta^2}\right)\frac{\delta^2\epsilon}{2\beta}.
\end{align*}
Taking the limit $n\to\infty$, we get $\lim_{n\to\infty}|H_n(z)-G(z)|\leq \epsilon$. 
Since $\epsilon$ is arbitrary, we conclude that $H_n(z)\to G(z)$. 
Therefore, almost surely, for all $z\in (B,C]$, we have $H_n(z)\to G(z)$. 

Finally, combining this result with Lemma~\ref{lem:nnpca_new_approximation}(1), we complete the proof.
\end{proof}

\paragraph{Claim 3: Control of other eigenvalues}
Finally, we give the brief analysis of the eigenvalues other than the top one.

\begin{proof}[Proof of Lemma~\ref{lem:spectrum_of_X} (3)]
Since $\bX = \beta\bx\bx^\top + \diag{\bd}$, by Weyl's interlacing inequality (Corollary~\ref{cor:weyl_interlacing}),
\begin{align*}
    \edge(\sigma)^+ &\geq \lambda_1(\diag{\bd}) \geq \lambda_2(\bX)\geq \lambda_2(\diag{\bd})\geq \cdots \\\cdots&\geq\lambda_{n-1}(\bX)\geq \lambda_{n-1}(\diag{\bd})\geq\edge(\sigma)^-.
\end{align*}
Thus, $\lambda_2(\bX),\dots, \lambda_n(\bX)$ lie in $\sigma(\R)$, as claimed.
\end{proof}

\subsection{Compression to obtain independent spiked matrix model}
\label{sec:compression}

Recall that the $\sigma$-Laplacian $\bL = \bL_{\sigma}$ from~(\ref{eq:L_expansion}) takes the form
\begin{align*}
\bL
&= \hat{\bW} + \bX, \\
\bX = \bX(\bx, \hat{\bW}; \beta) &= \beta \bx\bx^{\top}
+ \diag{\underbrace{\sigma(\hat \bW\bone + \beta \langle\bx,\bone\rangle\bx)}_{\equalscolon\bd}}.
\end{align*}
where $\hat{\bW}$ is a Wigner matrix with entrywise variance $1/n$. This resembles the spiked matrix model studied in~\cite{capitaineFreeConvolutionSemicircular2011c}, with the key complication that $\hat{\bW}$ and $\bX$ are weakly dependent, since $\bX$ depends on $\hat{\bW}\bone$.
We do not expect this dependence to make a difference in the behavior of these random matrices in general: it should be possible to merely pretend that $\hat{\bW}\bone$ is a Gaussian random vector with suitable distribution sampled independently of $\hat{\bW}$.
However, to rigorously treat the actual $\sigma$-Laplacian matrices under consideration, we now introduce a trick to circumvent this complication and construct from this weakly dependent spiked matrix model an exactly independent one.

The idea is to ``project away'' the $\bone$ direction from $\bL$, which will make $\hat{\bW}$ independent of the (now slightly deformed) signal matrix.
Specifically, we now present the conditions under which ``compressing'' $\bL$ in this way via
\[
\bL \mapsto \tilde{\bL} \colonequals \bV^\top \bL \bV,
\]
where $\bV \in \R^{n \times (n-1)}$ has columns forming an orthonormal basis for the orthogonal complement of the all-ones vector $\bone$, produces the desired independence of signal and noise components.

The main technical point is that $\hat{\bW}$ drawn from the GOE is an orthogonally invariant random matrix; that is, $\bQ^{\top} \hat{\bW} \bQ$ has the same law as $\hat{\bW}$ for any orthogonal $\bQ$.
So, the compressed noise term $\tilde{\bW} = \bV^\top \hat{\bW} \bV$ remains a GOE matrix, now of dimension $(n-1) \times (n-1)$.
Moreover, in our formulation, $\bX$ depends on $\hat{\bW}$ only through the vector $\hat{\bW} \hat{\bone}$. By projecting $\hat{\bW}$ onto the orthogonal complement of $\hat{\bone}$, we remove the component that correlates with $\bX$, and thus the resulting noise term becomes independent of $\bX$ after compression.
This is formalized in the Proposition below.
\begin{prop}\label{prop:compression}
        Suppose $\bL= \bL^{(n)}$ of the form in (\ref{eq:L_expansion}) satisfies the following:
    \begin{enumerate}
        \item $\hat\bW = \hat\bW^{(n)}\sim \GOE(n,1/n)$.
        \item Almost surely, $\sum_{i=1}^n \one\{x_i=0\} \to\infty,\|\bx^{(n)}\|_1=o(\sqrt{n})$, $\esd{\bX^{(n)}} \wto \nu$ where $\nu$ has finite first moment, and $\lambda_1(\bX^{(n)}) \to \theta \in \R$.
    \end{enumerate}
    Then, for any sequence of matrices $\bV = \bV^{(n)} \in \R^{n \times (n-1)}$ satisfying $\bV \bV^\top = \matr{I}_n - \hat{\bone} \hat{\bone}^\top$ and $\bV^\top \bV = \matr{I}_{n-1}$, the compression $\tilde\bL = \tilde{\bL}^{(n)} \colonequals \bV^\top \bL \bV$ can be decomposed as
\[
\tilde\bL= \tilde{\bW} + \tilde{\bX} + \matr{\Delta}\in \R_{\mathrm{sym}}^{(n-1) \times (n-1)},
\]
where:
\begin{enumerate}
    \item $\tilde \bW = \tilde{\bW}^{(n)} \sim \GOE(n-1, \frac{1}{n-1})$,
    \item Almost surely, $\mathrm{esd}(\tilde{\bX}^{(n)}) \wto \nu$, $\|\matr{\Delta}^{(n)}\|\to 0$, $\lambda_1(\tilde{\bX}^{(n)}) \to \theta$, and all other eigenvalues  $\lambda_2(\tilde{\bX}^{(n)}),\dots,\lambda_{n-1}(\tilde{\bX}^{(n)})$ lie in $[\edge{(\sigma)}^- , \edge{(\sigma)}^+]$.
    \item $\tilde{\bW}^{(n)}$ and $\tilde{\bX}^{(n)}$ are independent.
\end{enumerate}
\end{prop}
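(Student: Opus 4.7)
The plan is to take
\[
\tilde{\bL} = \bV^\top \bL \bV = \underbrace{\bV^\top \hat{\bW}\bV}_{\equalscolon \tilde{\bW}_0} + \underbrace{\bV^\top \bX \bV}_{\equalscolon \tilde{\bX}},
\]
set $\tilde{\bW} \colonequals \sqrt{n/(n-1)}\,\tilde{\bW}_0$ so that it has exactly the right GOE variance, and absorb the rescaling into $\matr{\Delta} \colonequals (\sqrt{(n-1)/n}-1)\,\tilde{\bW}$, which has $\|\matr{\Delta}\| = O(1/n)\to 0$ almost surely using the semicircle bound on $\|\tilde{\bW}\|$ from Proposition~\ref{prop:largest_eigenvalue_wigner}. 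To identify the law of $\tilde{\bW}$ and its independence from $\tilde{\bX}$, I would complete $\bV$ to the orthogonal matrix $\bQ \colonequals [\bV,\hat{\bone}] \in O(n)$; by the orthogonal invariance of the GOE, $\bQ^\top\hat{\bW}\bQ \sim \GOE(n,1/n)$ with mutually independent entries (up to symmetry), so the top-left $(n-1)\times(n-1)$ block $\tilde{\bW}_0$ is $\GOE(n-1,1/n)$ and is independent of the remaining entries $\bV^\top\hat{\bW}\hat{\bone}$ and $\hat{\bone}^\top\hat{\bW}\hat{\bone}$. Since $\bX$ depends on $\hat{\bW}$ only through $\hat{\bW}\bone = \sqrt{n}\,\hat{\bW}\hat{\bone}$, which these remaining entries fully determine, and since $\bx$ is independent of $\hat{\bW}$, the matrix $\tilde{\bX}$ is independent of $\tilde{\bW}$.

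For the spectral properties of $\tilde{\bX}$, observe that $\tilde{\bX}$ is (up to orthogonal conjugation by $\bQ$) the $(n-1)\times(n-1)$ principal submatrix of $\bX$, so Cauchy interlacing gives $W_1(\esd{\bX},\esd{\tilde{\bX}}) = O(1/n)$ and thus $\esd{\tilde{\bX}}\wto\nu$ from the hypothesis $\esd{\bX}\wto\nu$. Writing $\tilde{\bX} = \beta\tilde{\bx}\tilde{\bx}^\top + \tilde{\bD}$ with $\tilde{\bx}\colonequals\bV^\top\bx$ and $\tilde{\bD}\colonequals\bV^\top\diag{\bd}\bV$, the same Cauchy interlacing between $\tilde{\bD}$ and $\diag{\bd}$ confines every eigenvalue of $\tilde{\bD}$ to $[\edge^-(\sigma),\edge^+(\sigma)]$, and applying Weyl's inequality (Corollary~\ref{cor:weyl_interlacing}) to the rank-one PSD perturbation $\beta\tilde{\bx}\tilde{\bx}^\top$ then confines $\lambda_2(\tilde{\bX}),\dots,\lambda_{n-1}(\tilde{\bX})$ to the same interval.

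The main obstacle is showing $\lambda_1(\tilde{\bX})\to\theta$ in the outlier regime $\theta>\edge^+(\sigma)$, since Cauchy interlacing alone only pins $\lambda_1(\tilde{\bX})$ to $[\lambda_2(\bX),\lambda_1(\bX)] \subseteq [\edge^+(\sigma),\theta+o(1)]$. To close the gap I would exploit the eigenvector equation $\bX\bv_1 = \lambda_1(\bX)\bv_1$, which rearranges into the coordinatewise formula $(\bv_1)_i = \mu x_i/(\lambda_1(\bX)-d_i)$ with $\mu \colonequals \beta\langle\bx,\bv_1\rangle = O(1)$ (valid since eventually $\lambda_1(\bX) > \max_i d_i$). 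Summing and using $\lambda_1(\bX)-\edge^+(\sigma)\to\theta-\edge^+(\sigma)>0$ together with the hypothesis $\|\bx\|_1 = o(\sqrt{n})$ gives
\[
|\langle\bv_1,\hat{\bone}\rangle| = \left|\frac{\mu}{\sqrt{n}}\sum_i\frac{x_i}{\lambda_1(\bX)-d_i}\right| \leq \frac{|\mu|\,\|\bx\|_1}{\sqrt{n}\,(\lambda_1(\bX)-\edge^+(\sigma))} \to 0,
\]
so the projection $\bP\bv_1$ onto $\bone^\perp$ (with $\bP\colonequals \bV\bV^\top = \matr{I}-\hat{\bone}\hat{\bone}^\top$) retains almost all of $\bv_1$'s norm. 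A short calculation using $\hat{\bone}^\top \bX \bv_1 = \lambda_1(\bX)\langle\bv_1,\hat{\bone}\rangle$ then shows $(\bP\bv_1/\|\bP\bv_1\|)^\top\bX(\bP\bv_1/\|\bP\bv_1\|)\to\theta$, and since this unit vector lies in the range of $\bV$, it certifies $\lambda_1(\tilde{\bX})\geq\theta-o(1)$. Combined with the trivial $\lambda_1(\tilde{\bX})\leq\lambda_1(\bX)\to\theta$, this yields $\lambda_1(\tilde{\bX})\to\theta$. The edge case $\theta=\edge^+(\sigma)$ falls to Cauchy interlacing alone, since $\esd{\bX}\wto\nu=\sigma(\dnorm{0,1})$ places positive mass in every left-neighborhood of $\edge^+(\sigma)$ and thus forces $\lambda_2(\bX)\to\edge^+(\sigma)=\theta$.
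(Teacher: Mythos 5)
Your proof takes essentially the same approach as the paper's: the identical decomposition into $\tilde{\bW} + \tilde{\bX} + \matr{\Delta}$, the orthogonal-completion trick $\bQ = [\bV,\hat{\bone}]$ for identifying the law of $\tilde{\bW}$ and its independence from $\tilde{\bX}$, the Cauchy/Weyl interlacing to pin the bulk, and the eigenvector formula $(\bv_1)_i = \mu x_i/(\lambda_1(\bX)-d_i)$ together with $\|\bx\|_1 = o(\sqrt{n})$ to show $\langle\bv_1,\hat{\bone}\rangle\to 0$ in the outlier regime.

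There are two gaps, both at the boundary case $\theta = \edge^+(\sigma)$. First, you tacitly assume $\theta\geq\edge^+(\sigma)$: this is needed both to make your two-case split exhaustive and to justify the resolvent formula $(\lambda_1(\bX)\matr{I}-\diag{\bd})^{-1}$ in Case 1. The paper derives it from the hypothesis $\sum_i\one\{x_i=0\}\to\infty$: on that event, $\lambda_1(\bX)\geq\max_i d_i\geq\max_{i:x_i=0}\sigma((\hat{\bW}\bone)_i)\to\edge^+(\sigma)$, since one is taking the max of $\sigma$ over an unboundedly growing number of near-standard Gaussians. Second and more substantively, your Case 2 argument invokes $\nu=\sigma(\dnorm{0,1})$ to obtain positive mass near $\edge^+(\sigma)$, but the proposition's hypotheses only say $\nu$ has a finite first moment; they do not identify $\nu$, so your argument proves a weaker statement than what is claimed. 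The paper sidesteps this by producing an explicit test vector: with $i=\arg\max_j d_j$, the Rayleigh quotient of $\bV^\top\diag{\bd}\bV$ at $\bV^\top\vect{e}_i$ is $\geq\frac{n-2}{n-1}\max_j d_j + O(1/n)\to\edge^+(\sigma)$, which with Weyl interlacing gives $\lambda_1(\tilde{\bX})\geq\lambda_1(\bV^\top\diag{\bd}\bV)\to\edge^+(\sigma)$. Alternatively, you could salvage your Cauchy-interlacing route by bounding $\lambda_2(\bX)\geq d_{(2)}$ (second-largest entry of $\bd$) via Weyl and showing $d_{(2)}\to\edge^+(\sigma)$ from the $\sum_i\one\{x_i=0\}\to\infty$ hypothesis and the Gaussian structure of $\hat{\bW}\bone$, rather than from the identity of $\nu$.
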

\begin{rmk}
    The matrix $\bL$ from the Sparse Biased PCA model satisfies the assumptions, since almost surely
    \[\sum_{i=1}^n \one\{x_i=0\}\geq \sum_{i=1}^n\one\{\varepsilon_i=0\}\to\infty,\]
    \[\|\bx^{(n)}\|_1 = \frac{\|\by^{(n)}\|_1}{\|\by^{(n)}\|^2} \|\by^{(n)}\| = O(\sqrt{np})=o(\sqrt n),\]
    and $\nu = \sigma(\dnorm{0,1})$ has finite first moment due to the boundedness of $\sigma$.
\end{rmk}

\begin{proof}
    Define
    \begin{align*}
    \tilde\bW = \tilde{\bW}^{(n)} &= \sqrt{\frac{n}{n-1}} {\bV^{(n)}}^\top \hat\bW^{(n)} \bV^{(n)}, \\
    \tilde\bX = \tilde{\bX}^{(n)} &= {\bV^{(n)}}^\top \bX^{(n)} \bV^{(n)}, \\
    \matr{\Delta}=\matr{\Delta}^{(n)} &= \left(1-\sqrt{\frac{n}{n-1}}\right) {\bV^{(n)}}^\top \hat\bW^{(n)} \bV^{(n)}.
    \end{align*}
    Consider the matrix $\bQ \colonequals [\,\hat{\bone} \,\,\, \bV \,]$, which is orthogonal. Since the GOE ensemble is invariant under orthogonal conjugation, we have $\bQ^\top \hat{\bW} \bQ \sim \GOE(n, n^{-1})$. Its right bottom $(n-1) \times (n-1)$ submatrix is $\bV^\top \hat{\bW} \bV$, and hence $\tilde{\bW} \sim \sqrt\frac{n}{n-1}\cdot \GOE(n-1, n^{-1}) = \GOE(n-1, (n-1)^{-1})$.

    By Proposition~\ref{prop:largest_eigenvalue_wigner}, $\|\tilde\bW\|\asto 2$, hence
    \[\|\matr{\Delta}\| = \left(1-\sqrt{\frac{n-1}{n}}\right)\|\tilde \bW\|\asto 0. \]
    Moreover, the first column (and row) of $\bQ^\top \hat{\bW} \bQ$ is independent of $\tilde \bW$, and corresponds to $\hat{\bW} \hat{\bone}$ expressed in the basis formed by the columns of $\bQ$. Since $\tilde{\bX}$ is a function of $\hat{\bW} \hat{\bone}$, we conclude that $\tilde{\bW}$ and $\tilde{\bX}$ are independent.

    Let $\bQ_0 \colonequals [\,\vect{0} \,\,\, \bV\,]$, so that
    \[ \bQ_0^\top \bX\bQ_0 = \begin{bmatrix}
        0 & \vect{0}^\top\\ \vect{0} &\bV^\top \bX\bV
    \end{bmatrix}. \]
    Then, $\bQ^\top \bX\bQ - \bQ_0^\top \bX\bQ_0 $ has rank at most $2$.
    By the stability of the esd under low rank perturbation (Corollary \ref{cor:esd_stability}), this does not affect the weak convergence of the esd:
    \[\P{\esd{\tilde \bX}\wto \nu} = \P{\esd{\bQ_0^\top \bX\bQ_0}\wto \nu} \geq \P{\esd{\bQ^\top \bX\bQ}\wto \nu} =1\]

    To control the eigenvalues aside from the top one, note that, for any unit vector $\by\in\R^{n-1}$ , $\by^\top \bV^\top \diag{\bd}\bV \by = \sum_{i}\bd_{i} (\bV\by)_i^2\in [\edge(\sigma)^-,\edge(\sigma^+)]$. So, all eigenvalues of $\bV^\top \diag{\bd}\bV$ are in $[\edge(\sigma)^-,\edge(\sigma)^+]$. Since $\tilde \bX = \beta(\bV^\top\bx)(\bV^\top\bx)^\top + \bV^\top \diag{\bd}\bV$, by Weyl's interlacing inequality (Corollary~\ref{cor:weyl_interlacing}),
    \begin{align*}
        \edge(\sigma)^+ &\geq \lambda_1(\bV^\top \diag{\bd}\bV) \geq \lambda_2(\tilde\bX)\geq \lambda_2(\bV^\top \diag{\bd}\bV)\geq \cdots \\\cdots&\geq\lambda_{n-1}(\tilde\bX)\geq \lambda_{n-1}(\bV^\top \diag{\bd}\bV)\geq\edge(\sigma)^-,
    \end{align*}
    as claimed.

    It is left to prove $\lambda_1(\tilde\bX)\to \theta$ almost surely.
    First, by Weyl's interlacing inequality (Corollary~\ref{cor:weyl_interlacing}), and on the event $\{\sum_{i=1}^n \one\{x_i=0\} \to\infty\}$ which happens almost surely by assumption, we have
    \[\lambda_1(\bX)\geq \lambda_1(\diag{\bd}) = \max_{i}d_i\geq \max_{i:x_i=0} \sigma((\hat\bW\bone)_i)\to \edge{(\sigma)}^+. \]
    Hence, $\theta\geq \edge{(\sigma^+)}$. We now consider two cases, depending on whether equality holds here.

\vspace{1em}

\emph{Case 1: $\theta>\edge(\sigma)^+$.}
    On the one hand, since $\|\bV\bx\|^2 = \bx^\top \bV^\top \bV \bx=\|\bx\|^2$ for all $\bx\in \R^{n-1}$,
\[\lambda_1(\tilde \bX)=\max_{\|\bx\|=1}\bx^\top \bV^\top \bX\bV\bx \leq \max_{\|\by\|= 1} {\by^\top \bX \by}=\lambda_1(\bX) \]
    On the other hand, let $\bv = \bv_1(\bX)$.
    We have
    \begin{align*}
        \lambda_1(\tilde\bX)
        &\geq \frac{(\bV^\top \bv)^\top (\bV^\top \bX\bV)(\bV^\top \bv)}{\|\bV^\top \bv\|^2}\\
        &= \frac{\bv^\top \bP \bX \bP \bv}{\|\bP\bv\|^2}\\
        \intertext{ where $\bP\colonequals \matr{I}_n - \hat\bone \hat\bone^\top$ is the projection matrix to the orthogonal complement of the $\hat\bone$ direction. Continuing,}
        &\geq \bv^\top \bP \bX \bP \bv \\
        &= \bv^\top \bX\bv - \langle \bv,\hat\bone\rangle (\hat \bone^\top \bX \bv + \bv^\top \bX\hat\bone) + \langle \bv, \hat\bone\rangle^2\\
        &\geq \lambda_1(\bX) - 2\|\bX\| \cdot |\langle\bv,\hat\bone\rangle|.
    \end{align*}
    It suffices to show that the last term is $o(1)$ as $n \to \infty$.

    Suppose $|\sigma(x)|<K$ for all $x\in\R$. Then, $\|\bX\| \leq \beta + K$, so $\|\bX\| = \|\bX^{(n)}\|=O(1)$.
    So, it further suffices to show that $|\langle \bv, \hat{\bone}\rangle| = o(1)$.

    Moreover, note that $\bv$ satisfies
    \[\lambda_1(\bX) \bv = \bX \bv = (\beta \bx\bx^{\top}
+ \diag{\bd_i}) \bv\]
    Rearranging,
    \[\bv = \beta\langle\bx,\bv\rangle(\lambda_1(\bX) \matr{I}-\diag{\bd})^{-1}\bx\]
    Hence,
    \[|\langle \bv,\hat\bone\rangle| \leq \beta  \frac{1}{\sqrt{n}}\sum_{i=1}^n \frac{|x_i|}{|\lambda_1(\bX) - d_i|} \]
    Let $\epsilon = (\theta - \edge{(\sigma)}^+) / 2 >0$. Then, on the event
    $\{\|\bx\|_1=o(\sqrt{n}), \lambda_1(\bX^{(n)})\to\theta\}$ which happens almost surely,  $\lambda_1(\bX^{(n)})>\theta-\epsilon$ for all sufficiently large $n$, so
    \[|\langle \bv,\hat\bone\rangle|\leq \beta\epsilon^{-1}n^{-1/2}\|\bx\|_1= o(1)\]
    and therefore
    \[\lambda_1(\tilde \bX)-\lambda_1(\bX)\to 0\]
    We conclude that $\lambda_1(\tilde\bX)\to \theta$ almost surely.

    \vspace{1em}
    \emph{Case 2: $\theta = \edge{(\sigma)}^+$.}
    Just like in Case 1, $\lambda_1(\tilde\bX)\leq \lambda_1(\bX)$. On the other hand, as $\tilde \bX = \bV^\top \diag{\bd}\bV + \beta (\bV^\top \bx) (\bV^\top\bx)^\top$, by Weyl's interlacing inequality (Corollary~\ref{cor:weyl_interlacing})
    \begin{align*}
        \lambda_1(\tilde\bX) &\geq \lambda_1(\bV^\top \diag{\bd}\bV).
    \intertext{Let $i \colonequals \arg \max_j d_j$, and set $\by \colonequals {\bV^\top \vect{e}_i}=\bV^\top \vect{e}_i $. Then $\|\by\|^2 = \|\bP\vect{e}_i\|^2 = 1-\frac{1}{n}$, and we have}
    &\geq \frac{\by^\top \bV^\top \diag{\bd}\bV\by}{\|\by\|^2}\\
    &= \frac{\vect{e}_i^\top \bP\diag{\bd}\bP\vect{e}_i}{1-\frac{1}{n}}\\
    &= \frac{d_i - \frac{2}{n}d_i + \frac{1}{n^2}}{1-\frac{1}{n}}\\
    &\geq \frac{n-2}{n-1}d_i\\
    &\to \edge{(\sigma)}^+
    \end{align*}
    We conclude that $\lambda_1(\tilde\bX)\to \theta=\edge{(\sigma)}^+$ almost surely, completing the proof.
\end{proof}

\subsection{Analysis of compressed \texorpdfstring{$\sigma$-Laplacian}{nonlinear Laplacian} eigenvalues}

\subsubsection{Weak convergence: Proof of Lemma~\ref{lem:esd_of_L}}

Recall from Proposition~\ref{prop:compression} that we may decompose
    \[ \tilde{\bL}^{(n)} = \bV^{(n)^{\top}}\bL^{(n)}\bV^{(n)} = \tilde{\bW}^{(n)} + \tilde{\bX}^{(n)} + \matr{\Delta}^{(n)}, \]
    where $\tilde{\bW}^{(n)} \sim \GOE(n-1, \frac{1}{n-1})$, $\tilde{\bW}^{(n)}$ and $\bm\Delta^{(n)}$ are independent of $\tilde \bX^{(n)}$ (in fact $\bm\Delta^{(n)}$ is a multiple of $\tilde{\bW}^{(n)}$) and $\|\matr{\Delta}^{(n)}\| \to 0$ almost surely as $n \to \infty$.
    For the weak convergence result, we then first note that, by our Lemma~\ref{lem:spectrum_of_X}, the esd of $\tilde{\bX}^{(n)}$ almost surely converges weakly to $\sigma(\mathcal{N}(0, 1))$, while that of $\tilde{\bW}^{(n)}$ almost surely converges weakly to $\mu_{\SC}$ by Theorem~\ref{thm:wigner_semicircle}.
    Thus, by Proposition~4.3.9 of \cite{HP-2000-FreeProbabilityRandomMatrices}, the esd of $\tilde{\bW}^{(n)} + \tilde{\bX}^{(n)}$ almost surely converges weakly to $\mu_{\SC} \boxplus \sigma(\mathcal{N}(0, 1))$, since the law of $\tilde{\bW}^{(n)}$ is orthogonally invariant (the Proposition in the reference is stated for unitary invariance, but the same result holds by the same proof under orthogonal invariance, as also mentioned in the discussion following this result in the reference).

\subsubsection{Largest eigenvalue: Proof of Theorem~\ref{thm:main}, eigenvalue part}\label{app:pf:thm:main}

Before proceeding to the proof, let us note the relationship between the thresholds we state and the ones stated in related results \cite{capitaineFreeConvolutionSemicircular2011c,boursier2024large} in terms of the inverse subordination function $H_{\nu}$ (Definition~\ref{def:subordination}).
These are actually equivalent.
Recall that, for a given $\nu$, this is defined as
\begin{align*}
H_{\nu}(z) 
&= z + G_{\nu}(z) \\ 
&= z + \Ex_{X \sim \nu}\left[\frac{1}{z - X}\right]
\intertext{In our case, the $\nu$ that will arise is $\nu = \sigma(\dnorm{0, 1})$, which gives}
&= z + \Ex_{g \sim \dnorm{0, 1}}\left[\frac{1}{z - \sigma(g)}\right],
\end{align*}
defined for all $z \in \C \setminus \overline{\sigma(\R)}$.

In particular, we also have
\[ H^{\prime}_{\nu}(z) = 1 - \Ex_{g \sim \dnorm{0, 1}}\left[\frac{1}{(z - \sigma(g))^2}\right]. \]

Note that this function converges to 1 as $z \to \infty$ along the real axis, and is continuous and strictly increasing on $(\edge^+(\sigma), +\infty)$. Moreover, by \cite[Lemma 2.1]{capitaineFreeConvolutionSemicircular2011c}, $\edge^+(\sigma)\in\supp{\nu}\subseteq \overline{\{u\in\R\setminus {\supp{\nu}}: H_\nu'(u)<0\}}$. Particularly, this implies that there exists some $u>\edge^+(\sigma)$ where $H_\nu'(u)<0$. Recall also that $\theta_{\sigma,\beta}\geq \edge^+(\sigma)$ by definition.

Theorem 8.1 of \cite{capitaineFreeConvolutionSemicircular2011c} implies that there is an outlier eigenvalue if and only if
\[
    \theta_{\sigma,\beta} \in \mathbb{R} \setminus \overline{\{u \in \mathbb{R} \setminus \text{supp}(\nu) : H_\nu'(u) < 0\}}.
\]
This is equivalent to our condition in Theorem~\ref{thm:main},
\[
    H_\nu'(\theta_{\sigma}(\beta)) > 0,
\]
since, by our assumption that $\sigma$ is Lipschitz-continuous and bounded, we know that $\supp{\nu}$ consists of a single closed interval of real numbers.\footnote{Note that the matter of outlier eigenvalues is more complicated when considering deformations of matrices whose limiting esd's support has several connected components, since outliers can appear between these components (thus rather being ``inliers'' between parts of the undeformed limiting spectrum).}

Similarly, when $H_\nu'(\theta_{\sigma}(\beta)) > 0$, we state in Theorem~\ref{thm:main} that
\begin{align*}
\lambda_1(\tilde{\bL}^{(n)}) 
&\asto \theta_{\sigma}(\beta) + \Ex_{g \sim \dnorm{0, 1}}\left[\frac{1}{\theta_{\sigma}(\beta) - \sigma(g)}\right]
\intertext{which may be rewritten}
&= H_{\nu}(\theta_{\sigma}(\beta)),
\end{align*}
giving the form stated in the other references.

In the special case where the entrywise condition $x_i\geq 0$ holds almost surely (i.e., $\eta$ in Definition~\ref{def:probs} is a probability measure on $\R_{\geq 0}$), $\theta_\sigma$ is an increasing function of $\beta$. Thus, $H_\nu'(\theta_\sigma(\beta))>0$ if and only if $\beta>\beta_*$, where $\beta_* = \beta_*(\sigma)$ solves
\[ H^{\prime}_{\nu}(\theta_{\sigma}(\beta_*)) = 0. \]

\begin{proof}[Proof of Theorem~\ref{thm:main}, eigenvalue part]
    Let us define a function $f_{\sigma}: \R_{\geq 0} \to \R$ by
    \[
        f_{\sigma}(\beta) =
        \begin{cases}
            H_{\nu_{\sigma,\beta}}(\theta_{\sigma}(\beta)) & \text{if $H_{\nu_{\sigma,\beta}}'(\beta)>0$} \\
            \mathrm{edge}^+({\mu_{\SC}\boxplus \sigma(\mathcal{N}(0, 1))})                   & \text{otherwise}.
        \end{cases}
    \]
    We then want to show that $\lambda_1(\widetilde{\bL}^{(n)}) \to f_{\sigma}(\beta)$ almost surely.
    
    Let $\mathcal{X}$ be the set of sequences $(\widetilde{\bX}^{(n)})_{n \geq 1} \in \R^{1 \times 1}_{\sym} \times \R^{2 \times 2}_{\sym} \times \cdots$ that satisfy the conclusions of Lemma~\ref{lem:spectrum_of_X}.
    The Lemma states that \[ \P{(\widetilde{\bX}^{(n)})_{n \geq 1} \in \mathcal{X}} = 1. \]
    Conditioning the probability we are interested in on the value of $\widetilde{\bX}^{(n)}$ for all $n$ and using the above independence property,
    \begin{align*}
    &\hspace{-1cm}\P{\lambda_1(\widetilde{\bL}^{(n)}) \to f_{\sigma}(\beta)}                                               \\
        &= \P{\lambda_1(\tilde{\bW}^{(n)} + \tilde{\bX}^{(n)} + \matr{\Delta}^{(n)}) \to f_{\sigma}(\beta)} \\
        &= \Ex_{(\tilde \bX^{(n)})_{n \geq 1}}\left[\Px_{(\widetilde{\bW}^{(n)})_{n \geq 1}}\left(\lambda_1(\tilde{\bW}^{(n)} + \tilde{\bX}^{(n)} + \matr{\Delta}^{(n)}) \to f_{\sigma}(\beta)\right)\right] 
        \intertext{and using that $\bm\Delta^{(n)}$ is a function of $\widetilde{\bW}^{(n)}$ and $\|\bm\Delta^{(n)}\| \to 0$ almost surely, we may rewrite}
        &= \Ex_{(\tilde \bX^{(n)})_{n \geq 1}}\left[\Px_{(\widetilde{\bW}^{(n)})_{n \geq 1}}\left(\lambda_1(\tilde{\bW}^{(n)} + \tilde{\bX}^{(n)} + \matr{\Delta}^{(n)}) \to f_{\sigma}(\beta), \|\matr{\Delta}^{(n)}\| \to 0\right) \right]
        \intertext{and now, by Weyl's inequality (Proposition~\ref{prop:weyl}), the convergence of the largest eigenvalue is not affected by $\bm\Delta^{(n)}$, so}
        &= \Ex_{(\tilde \bX^{(n)})_{n \geq 1}}\left[\Px_{(\widetilde{\bW}^{(n)})_{n \geq 1}}\left(\lambda_1(\tilde{\bW}^{(n)} + \tilde{\bX}^{(n)}) \to f_{\sigma}(\beta), \|\matr{\Delta}^{(n)}\| \to 0\right) \right] \\
        &= \Ex_{(\tilde \bX^{(n)})_{n \geq 1}}\left[\Px_{(\widetilde{\bW}^{(n)})_{n \geq 1}}\left(\lambda_1(\tilde{\bW}^{(n)} + \tilde{\bX}^{(n)}) \to f_{\sigma}(\beta)\right) \right]
        \intertext{and similarly, since $(\widetilde{\bX}^{(n)})_{n \geq 1} \in \mathcal{X}$ almost surely, we may introduce the indicator of this event in the expectation}
        &= \Ex_{(\tilde \bX^{(n)})_{n \geq 1}}\left[\Px_{(\widetilde{\bW}^{(n)})_{n \geq 1}}\left(\lambda_1(\tilde{\bW}^{(n)} + \tilde{\bX}^{(n)}) \to f_{\sigma}(\beta)\right) \one\{(\widetilde{\bX}^{(n)})_{n \geq 1} \in \mathcal{X}\} \right]
        \intertext{Finally, the condition $(\widetilde{\bX}^{(n)})_{n \geq 1} \in \mathcal{X}$ precisely verifies the conditions of Theorem~1.1 of \cite{boursier2024large}, whose conclusion is that the inner probability equals 1, whereby}
        &= \Ex_{(\tilde \bX^{(n)})_{n \geq 1}}\left[ \one\{(\widetilde{\bX}^{(n)})_{n \geq 1} \in \mathcal{X}\} \right] \\
        &= \P{(\widetilde{\bX}^{(n)})_{n \geq 1} \in \mathcal{X}} \\
        &= 1,
    \end{align*}
    completing the proof.
\end{proof}

As an aside, Theorem 1.1 presented in \cite{boursier2024large} is attributed there to \cite{capitaineFreeConvolutionSemicircular2011c} by the authors, though the latter work only considers complex-valued Wigner matrices where the real and complex parts of the entries are i.i.d., while \cite{boursier2024large} and our application concern real-valued Wigner matrices.
In any case, the real-valued version follows from the much more detailed large deviations principle proved by \cite{boursier2024large} (their Theorem 1.2).

\subsection{Eigenvector analysis: Proof of Theorem~\ref{thm:main}, eigenvector part}
\label{sec:eigvec}

\begin{proof}[Proof of Theorem~\ref{thm:main}, eigenvector part]
To analyze the eigenvector, we introduce an ancillary matrix function
\begin{equation}
    \label{eq:L_t}\bL(t) =\bL^{(n)}(t) \colonequals \hat{\bW}+ \underbrace{(\beta+t) \bx\bx^{\top} + \diag{\sigma(\hat{\bW} \bone + \beta \langle\bx, \bone\rangle \bx)}}
    _{\equalscolon \bX(t)}.
\end{equation}
This choice ensures that evaluating at $t=0$ recovers the $\sigma$-Laplacian matrix $\bL$ defined in \eqref{eq:L_expansion}.
We likewise define the compressed matrix
\[
\tilde\bL(t)= \tilde\bL^{(n)}(t) \colonequals \bV^\top \bL(t)\,\bV.
\]
We use Lemma~\ref{lem:largest_eigenvalue_with_t}, which shows that almost surely, for any $\beta,t$ such that $\beta> 0$, and $\beta+t> 0$, we have $\lambda_1^{(n)}(t) =\lambda_1(\tilde\bL^{(n)}(t)) \to f_\sigma(t)$, where $f_\sigma$ is a differentiable function. The statement and proof will follow below. Moreover, almost surely $\tilde\bL^{(n)}(0)$ has no repeated eigenvalues for all $n$ and $\beta> 0$, since matrices with repeated eigenvalues form an algebraic set of codimension 2 in the space of symmetric matrices (see, e.g., \cite{BKL-2018-GeometryMatricesRepeatedEigenvalues}). On this event, the maps $t \mapsto \lambda_1^{(n)}(t)$ and $t \mapsto \bv_1^{(n)}(t)$ are differentiable at $t = 0$, by \cite[Theorem~1]{magnus1985differentiating}. Define $\mathcal{E}$ be the almost sure event where these two conditions hold.

We write $\dot{\tilde\bL}$, $\dot\bv_1$, and $\dot\lambda_1$ for derivatives with respect to $t$ whenever they are well-defined.
Differentiating the eigenpair relation $\tilde\bL\,\bv_1=\lambda_1\,\bv_1$ with respect to $t$ and left-multiplying by $\bv_1^\top$ yields
\[\bv_1^\top \dot{\tilde\bL}\,\bv_1 + \bv_1^\top \tilde\bL\,\dot\bv_1 = \dot\lambda_1\,\bv_1^\top\bv_1 + \lambda_1\,\bv_1^\top\dot\bv_1,\]
which simplifies to
\[\dot\lambda_1 = \bv_1^\top \dot{\tilde\bL}\,\bv_1.\]
In our setup, $\dot{\tilde\bL}=\bV^\top \bx\bx^\top \bV$, so
\[\dot\lambda_1 = \big\langle \bx, \bV\bv_1 \big\rangle^2.\]

We observe that the map $t\mapsto \lambda_1^{(n)}(t)$ is convex for any $n$ and $\beta$.
Indeed,
\[
\lambda_1^{(n)}(t)=\max_{\|\bx\|=1}\,\bx^\top \tilde\bL^{(n)}(t)\,\bx,
\]
and, for each fixed unit vector $\bx$, the objective $\bx^\top \tilde\bL^{(n)}(t)\,\bx$ is affine in $t$, so taking the maximum over $\bx$ yields a convex function of $t$. Hence, on the event $\mathcal{E}$, we can apply Lemma~\ref{lem:swap_lim_derivative} to obtain
\[
\langle\bx,\bV\bv_1^{(n)}(0)\rangle^2=\dot{\lambda}_1^{(n)}(0)\longrightarrow f_\sigma'(0).
\]
The result follows.
\end{proof}
\begin{lemma}\label{lem:largest_eigenvalue_with_t}
    For each $\sigma$, $\beta$ and $t$, define $\theta = \theta_{\sigma}(\beta,t)$ to solve the equation
    \[\Ex_{\substack{y \sim \eta \\ g \sim \dnorm{0, 1}}}\left[{\frac{y^2}{\theta-\sigma(\frac{m_1}{m_2}\beta y + g)}}\right]=\frac{m_2}{\beta+t}\]
    if such $\theta>\mathrm{edge}^+(\sigma)$ exists, and let $\theta_\sigma= \mathrm{edge}^+(\sigma)$ otherwise. Furthermore, recall the inverse subordination function 
    \[H_\nu(z)= z + \Ex_{g \sim \dnorm{0, 1}}\left[\frac{1}{z - \sigma(g)}\right].\]
    Almost surely, for every $\beta$ and $t$ so that $\beta> 0, \beta+t> 0$, the sequence $\widetilde{\bL}(t) = \widetilde{\bL}^{(n)}(t)$ satisfies
    $\lambda_1^{(n)}(t)\to f_\sigma (t), $ where
    \[
    f_\sigma (t) \colonequals
    \begin{cases}
        H_{\nu}(\theta_\sigma(\beta,t))& \text{if }H_\nu'(\theta_{\sigma}(\beta,t))>0\\
        \mathrm{edge}^{+}(\mu_{\SC}\boxplus \sigma(\dnorm{0, 1})) & \text{otherwise.}\\
    \end{cases}
    \]
    Moreover, $f_\sigma$ is differentiable with 
    \[
    f'_\sigma(0)= \begin{cases}
        \frac{m_2}{\beta^2}\left(\Ex_{y\sim \eta,\;g\sim \dnorm{0,1}}\!\left[\frac{y^2}{\big(\theta_\sigma(\beta,0)-\sigma\!\big(\tfrac{m_1}{m_2}\beta y + g\big)\big)^2}\right]\right)^{-1}H_{\nu}'(\theta_\sigma(\beta,0)) & \text{ if $H_\nu'(\theta_{\sigma}(\beta,0))>0$}\\
        0 & \text{otherwise}.
    \end{cases}
    \]
\end{lemma}
\begin{proof}
    The proof for the first part, which characterizes the limit of the largest eigenvalue, follows exactly the same argument as that for $\tilde{\bL}$. Below, we show that $f_\sigma$ is differentiable and compute its limit.

    Notice that $H_\nu'$ is strictly increasing, and $\theta_\sigma(\beta, t)$ also strictly increases with $t$. Consequently, there exists a unique $t_* = t_*(\beta)$ that solves $H_\nu'\big(\theta_\sigma(\beta, t_*(\beta))\big) = 0$, and $H_\nu'(\theta_\sigma(\beta, t)) > 0$ if and only if $t > t_*$. 

    We first consider the case where $H_\nu'(\theta_\sigma(\beta, t)) > 0$.
    By \cite[Lemma 2.1]{capitaineFreeConvolutionSemicircular2011c}, we have $\edge^+(\sigma) \in \text{supp}(\nu) \subseteq \overline{\{u \in \mathbb{R} \setminus \text{supp}(\nu) : H_\nu'(u) < 0\}}$. Hence, $\theta_\sigma(\beta, t) \neq \edge^+(\sigma)$. This implies that $\theta = \theta_\sigma(\beta, t)$ solves $G(\theta, t) = 1$, where
    \[
    G(\theta, t) \colonequals \frac{\beta + t}{m_2} \Ex_{\substack{y \sim \eta \\ g \sim \mathcal{N}(0, 1)}}\left[\frac{y^2}{\theta - \sigma\left(\frac{m_1}{m_2}\beta y + g\right)}\right].
    \]
    By the implicit function theorem, $\theta_\sigma(\beta, t)$ is differentiable with respect to $t$ for all $t > t_*$. Using the chain rule, we can compute
    \[
    \frac{d\theta_\sigma(\beta, t)}{dt} = \frac{m_2}{(\beta + t)^2} \left( \Ex_{\substack{y \sim \eta \\ g \sim \mathcal{N}(0, 1)}}\left[\frac{y^2}{\left(\theta_\sigma(\beta, t) - \sigma\left(\frac{m_1}{m_2}\beta y + g\right)\right)^2}\right] \right)^{-1}.
    \]
    Consequently, $H_\nu\big(\theta_\sigma(\beta, t)\big)$ is differentiable, with 
    \begin{align*}
        \frac{dH_\nu(\theta_{\sigma}(\beta, t))}{dt} &= \frac{m_2}{(\beta + t)^2} \left(\Ex_{\substack{y \sim \eta \\ g \sim \mathcal{N}(0, 1)}}\left[\frac{y^2}{\left(\theta_\sigma(\beta, t) - \sigma\left(\frac{m_1}{m_2}\beta y + g\right)\right)^2}\right]\right)^{-1} H_\nu'(\theta_\sigma(\beta, t)).
    \end{align*}

    Finally, since 
    \[
    \lim_{t \downarrow t_*(\beta)}\frac{dH_\nu(\theta_{\sigma}(\beta, t))}{dt} = 0,
    \]
    $f_\sigma$ is also differentiable at $t = 0$.
\end{proof}
\begin{lemma}\label{lem:swap_lim_derivative}
Let $(f_n)$ be a sequence of convex functions defined on an open interval $I$ containing $0$.
Assume that $f_n(x)\to f(x)$ for every $x\in I$.
Suppose further that each $f_n$ and $f$ is differentiable at $0$.
Then,
\[
\lim_{n\to\infty} f_n'(0) = f'(0).
\]
\end{lemma}
\begin{proof}
By convexity of $f_n$, for any $0<h<\epsilon$, we have
\[\frac{f_n(0)-f_n(-h)}{h}\leq f_n'(0)\leq \frac{f_n(h)-f_n(0)}{h}.\]
Taking the limit as $n\to\infty$ yields
\[\frac{f(0)-f(-h)}{h}\leq \liminf_{n\to\infty} f_n'(0)\leq \limsup_{n\to\infty} f_n'(0)\leq \frac{f(h)-f(0)}{h}.\]
Since this holds for any $0<h<\epsilon$, letting $h\to 0^+$ gives
\[f'(0)\leq \liminf_{n\to\infty} f_n'(0)\leq \limsup_{n\to\infty} f_n'(0)\leq f'(0).\]

Hence, $\lim_{n\to\infty} f_n'(0)=f'(0)$. This completes the proof.
\end{proof}

\subsection{Limitation of degree methods: Proof of Theorem~\ref{thm:contiguity}}
We follow the approach developed in \cite{MRZ-2015-LimitationsSpectral, BMVVX-2018-InfoTheoretic, PWBM-2018-PCAI, perry2020statistical} for proving statistical indistinguishability.
\begin{lemma}[Second moment method for contiguity, Lemma 1.13 of \cite{kunisky2019notes}]\label{lem:continguity}
    Consider two probability distributions $\mathbb{P}=(\mathbb{P}_n), \mathbb Q= (\mathbb Q_n)$ over a common sequence of measurable spaces $S_n$. Write $L_n(t)$ for the likelihood ratio $\frac{d\mathbb P_n}{d\mathbb Q_n}(t)$.
    If $\limsup_{n\to\infty}\E[t\sim \mathbb Q_n]{L_n(t)^2} < \infty$, then $\mathbb{P}$ is \emph{contiguous} to $\mathbb{Q}$, meaning that whenever $\mathbb{Q}_n(A_n)\to 0$ for a sequence of events $(A_n)$, then $\mathbb{P}_n(A_n)\to 0$ as well.
\end{lemma}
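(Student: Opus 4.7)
The plan is to prove this standard contiguity criterion by a direct application of the Cauchy--Schwarz inequality, together with the observation that writing $L_n = d\mathbb{P}_n / d\mathbb{Q}_n$ as a Radon--Nikodym derivative implicitly assumes $\mathbb{P}_n \ll \mathbb{Q}_n$, which is the only regime in which the hypothesis makes sense.

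First I would fix an arbitrary sequence of measurable events $A_n \subseteq S_n$ satisfying $\mathbb{Q}_n(A_n) \to 0$, and rewrite the probability under $\mathbb{P}_n$ via a change of measure:
\[
\mathbb{P}_n(A_n) \;=\; \int_{A_n} d\mathbb{P}_n \;=\; \int \mathbf{1}_{A_n}(t)\, L_n(t)\, d\mathbb{Q}_n(t) \;=\; \mathbb{E}_{t\sim\mathbb{Q}_n}\!\left[ \mathbf{1}_{A_n}(t)\, L_n(t) \right].
\]
Next I apply the Cauchy--Schwarz inequality (viewing $\mathbf{1}_{A_n}$ and $L_n$ as vectors in $L^2(\mathbb{Q}_n)$) to obtain
\[
\mathbb{P}_n(A_n) \;\leq\; \sqrt{\mathbb{E}_{t\sim\mathbb{Q}_n}\!\left[ \mathbf{1}_{A_n}(t)^2 \right]} \cdot \sqrt{\mathbb{E}_{t\sim\mathbb{Q}_n}\!\left[ L_n(t)^2 \right]} \;=\; \sqrt{\mathbb{Q}_n(A_n)} \cdot \sqrt{\mathbb{E}_{t\sim\mathbb{Q}_n}\!\left[ L_n(t)^2 \right]}.
\]

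Finally, by the hypothesis $\limsup_{n \to \infty} \mathbb{E}_{t \sim \mathbb{Q}_n}[L_n(t)^2] < \infty$, there exists a finite constant $C > 0$ and an integer $N$ such that $\mathbb{E}_{t \sim \mathbb{Q}_n}[L_n(t)^2] \leq C$ for all $n \geq N$. Combining with the bound above yields $\mathbb{P}_n(A_n) \leq \sqrt{C \cdot \mathbb{Q}_n(A_n)}$ for $n \geq N$, and since $\mathbb{Q}_n(A_n) \to 0$ by assumption, this forces $\mathbb{P}_n(A_n) \to 0$ as well. As $(A_n)$ was arbitrary, this is precisely the definition of contiguity of $\mathbb{P}$ to $\mathbb{Q}$.

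There is essentially no obstacle in this proof; it is a two-line calculation. The only point worth flagging is a notational one: the statement presumes that $L_n$ exists as a genuine Radon--Nikodym derivative, so $\mathbb{P}_n \ll \mathbb{Q}_n$ is built into the hypothesis and no decomposition into absolutely continuous and singular parts is required.
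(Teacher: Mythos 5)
Your proof is correct; the paper does not prove this lemma itself but cites it from \cite{kunisky2019notes}, and your Cauchy--Schwarz argument is precisely the standard proof of that cited result. The one caveat you flag (that $L_n$ existing as a Radon--Nikodym derivative presumes $\mathbb{P}_n \ll \mathbb{Q}_n$) is handled correctly, and in the paper's application $\tilde{\mathbb{Q}}_{n,\beta}$ is indeed absolutely continuous with respect to $\mathbb{Q}_{n,0}$ (both being Gaussian location mixtures with the same covariance), so nothing further is needed.
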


\begin{thm}
    \label{thm:contiguity-general}
    Define the sequence of probability distributions $\mathbb{Q}_{n,\beta} \colonequals \mathrm{Law}(\bY \bone / \sqrt{n})$, where $\bY \sim \mathbb{P}_{n,\beta}$ is drawn from the Sparse Biased PCA model defined in Definition~\ref{def:probs}, with the additional assumption that $p=p(n)=O(n^{-1/2})$ and that $\eta$ has bounded support.
    Then, for any $\beta>0$, the sequence of distributions $\mathbb{Q}_{n,\beta}$ is contiguous to the sequence $\mathbb{Q}_{n,0}$.
    In particular, no function of $\bY \bone$ can achieve strong detection in such a model.
\end{thm}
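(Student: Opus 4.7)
The plan is to apply the second moment method via Lemma~\ref{lem:continguity}. Under $\mathbb{Q}_{n,0}$, the vector $\bY\bone/\sqrt{n} = \bW\bone/\sqrt{n}$ is a centered Gaussian with covariance $\matr{\Sigma}_n \colonequals \matr{I}_n + \bone\bone^\top/n$ (from a direct computation of the GOE entrywise variances), and under $\mathbb{Q}_{n,\beta}$, conditional on $\bx$, it is Gaussian with mean $\vect{\mu}(\bx) \colonequals \beta\langle\bx,\bone\rangle \bx$ and the same covariance, so $\mathbb{Q}_{n,\beta}$ is a Gaussian location mixture. The standard computation for the second moment of such a mixture likelihood ratio (obtained by integrating with the Gaussian moment generating function) yields
\[
\mathbb{E}_{\mathbb{Q}_{n,0}}[L_n^2] = \mathbb{E}_{\bx,\bx'}\left[\exp\left(\vect{\mu}(\bx)^\top \matr{\Sigma}_n^{-1} \vect{\mu}(\bx')\right)\right],
\]
where $\bx, \bx'$ are independent draws from the signal law.

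Using Sherman--Morrison, $\matr{\Sigma}_n^{-1} = \matr{I}_n - \bone\bone^\top/(2n)$. Parametrizing through $\by$ (recalling $\bx = \by/\|\by\|$) and writing $S \colonequals \sum_i y_i$ and $T \colonequals \sum_i y_i y_i'$, the exponent expands to
\[
\beta^2 \frac{S S' T}{\|\by\|^2\|\by'\|^2} - \frac{\beta^2}{2n} \frac{S^2 (S')^2}{\|\by\|^2 \|\by'\|^2}.
\]
The subtracted term is non-negative, so it suffices to upper bound $\mathbb{E}\bigl[\exp\bigl(\beta^2 SS'T/(\|\by\|^2\|\by'\|^2)\bigr)\bigr]$.

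The core estimate proceeds as follows. By Proposition~\ref{prop:subgaussian-bernoulli} (for Independent Entries) or Corollary~\ref{cor:subgaussian-fixed-size} (for Random Subset), on an event $E$ of probability $1 - \exp(-\Omega((np)^\delta))$ both $S/\|\by\|^2$ and $S'/\|\by'\|^2$ lie in a bounded interval around $m_1/m_2$. On this event the exponent has magnitude at most $C_\beta |T|$ for some constant $C_\beta$ depending on $\beta$ and $\eta$, and so it suffices to bound $\mathbb{E}[\exp(\pm C_\beta T)]$. Under Independent Entries the products $y_i y_i'$ are independent across $i$, and the bounded support of $\eta$ yields
\[
\mathbb{E}[\exp(\pm C_\beta T)] = \prod_{i=1}^n \left(1 - p^2 + p^2\, \mathbb{E}_{z,z'\sim\eta}[e^{\pm C_\beta z z'}]\right) \leq \exp(np^2 (C' - 1)),
\]
which is uniformly bounded since $np^2 = O(1)$ by $p = O(n^{-1/2})$ and $\mathbb{E}[e^{\pm C_\beta z z'}]$ is finite by bounded support; the Random Subset case reduces to the same bound after conditioning on $|\{i \colon \epsilon_i \epsilon_i' = 1\}|$, which is hypergeometric with mean $np^2$ and whose MGF is dominated by that of the corresponding binomial. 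The off-$E$ contribution is handled by bounding the integrand crudely by $\exp(\mathrm{poly}(n))$ on a further high-probability event giving lower bounds on $\|\by\|^2,\|\by'\|^2$, combined with the stretched-exponential tail $\mathbb{P}(E^c) = \exp(-\Omega((np)^\delta))$, and is thus $o(1)$.

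Combining these estimates gives $\sup_n \mathbb{E}_{\mathbb{Q}_{n,0}}[L_n^2] < \infty$, so Lemma~\ref{lem:continguity} produces contiguity of $\mathbb{Q}_{n,\beta}$ to $\mathbb{Q}_{n,0}$. The ``in particular'' assertion is then immediate: if a sequence of tests $f_n$ of $\bY\bone$ achieved strong detection, then for $A_n \colonequals \{f_n(\bY\bone) = 1\}$ we would have $\mathbb{Q}_{n,0}(A_n) \to 0$, whence $\mathbb{Q}_{n,\beta}(A_n) \to 0$ by contiguity, contradicting the required $\mathbb{Q}_{n,\beta}(A_n) \to 1$. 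The main technical obstacle is the random prefactor $SS'/(\|\by\|^2\|\by'\|^2)$ in the exponent, which prevents an immediate MGF factorization of the full integrand; the truncation argument above, leveraging the concentration machinery of Section~2.2, is the key device to circumvent this obstacle.
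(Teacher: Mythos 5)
Your overall strategy---the second moment method via Lemma~\ref{lem:continguity}, the replica expansion for $\mathbb{E}_{\mathbb{Q}_{n,0}}[L_n^2]$, and the on-event MGF bound in terms of $T = \langle\by,\by'\rangle$ using $np^2 = O(1)$---matches the paper's, and that part of your argument is sound. The gap is in the off-event contribution. You bound the integrand by $\exp(\mathrm{poly}(n))$ and multiply by $\mathbb{P}(E^c) = \exp(-\Omega((np)^\delta))$, claiming the product is $o(1)$; this fails. The sharpest available deterministic bound on the exponent is of order $\beta^2\, np$: by Cauchy--Schwarz, $|S|/\|\by\|^2 \leq \sqrt{\|\by\|_0}/\|\by\| \leq \sqrt{np}/\|\by\|$ and $|T| \leq \|\by\|\|\by'\|$, giving $|SS'T|/(\|\by\|^2\|\by'\|^2) \leq np$, and this order cannot be improved when $\|\by\|, \|\by'\|$ are atypically small. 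In the hardest case $p = \Theta(n^{-1/2})$ allowed by the hypothesis, the integrand can thus be $\exp(\Theta(n^{1/2}))$, while the tail is only $\exp(-\Omega(n^{\delta/2}))$ for some $\delta < 1/2$ (Proposition~\ref{prop:subgaussian-bernoulli} gives a stretched exponential with exponent $(np)^\delta \ll np$). Since $np - c(np)^\delta \to +\infty$, the off-event upper bound diverges, and introducing a further high-probability event (which itself has at best stretched-exponential tails of the same order) does not rescue the estimate.

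The paper circumvents this by truncating the \emph{measure} rather than splitting the integral: it defines $\tilde{\mathbb{Q}}_{n,\beta}$ by replacing the signal with $\tilde{\by} \colonequals \one\{\mathcal{E}_2\}\frac{\sum_i y_i}{\|\by\|^2}\by$. On the bad event the signal is identically zero, the conditional likelihood ratio is $1$, and the integrand in $\mathbb{E}_{\mathbb{Q}_{n,0}}[\tilde{L}_n^2]$ is exactly $1$ on $\mathcal{E}_2^c \cup (\mathcal{E}_2')^c$, so there is no off-event term to bound at all. The cost is a total-variation correction $\delta_{\mathrm{TV}}(\mathbb{Q}_{n,\beta}, \tilde{\mathbb{Q}}_{n,\beta}) \leq \mathbb{P}(\mathcal{E}_2^c) = o(1)$, and a short argument then transfers contiguity from $\tilde{\mathbb{Q}}_{n,\beta}$ back to $\mathbb{Q}_{n,\beta}$. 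If you replace the mixing distribution on the bad event rather than trying to tail-bound the raw second moment, the remainder of your calculation (non-negativity of the cross term, the Sherman--Morrison expansion, and the MGF factorization over coordinates) goes through essentially as you wrote it.
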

\begin{rmk}
    Since the Gaussian Planted Submatrix model of Example~\ref{ex:submatrix} satisfies the additional assumptions above, this proposition establishes a result that is more general than Theorem~\ref{thm:contiguity} from the Introduction.
\end{rmk}

\begin{proof}
    Let $\rho$ denote the distribution of $\by$ from Definition~\ref{def:probs}.
    Recall that
    $\frac{\bY\bone}{\sqrt{n}} = \hat{\bW}\bone + \beta\frac{\sum_i y_i}{\|\by\|^2}\by$,
    so
    \[
    \mathbb{Q}_{n,\beta} = \mathrm{Law}\left( \bt + \beta \frac{\sum_i y_i}{\|\by\|^2}\by\right), \quad \bt \sim \mathcal{N}\left(\vect{0}, \matr{I}+\frac{\bone\bone^\top}{n}\right), \quad \by\sim \rho.
    \]

    Let $\delta>0$, and define another sequence of distributions
    \[
    \tilde{\mathbb{Q}}_{n,\beta} = \mathrm{Law}\left( \bt + \beta \one\{\mathcal{E}_n\} \frac{\sum_i y_i}{\|\by\|^2}\by\right),
    \]
    where the event sequence
    \begin{align*}
        \mathcal{E}_n&\colonequals \left\{\left| \frac{\sum_{i=1}^n y_i}{\|\by\|^2} - \frac{m_1}{m_2}\right|<\delta \right\}.
    \end{align*}
    Notice that $\frac{\sum_{i=1}^n y_i}{np}\asto m_1$ and $\frac{\|\by\|^2}{np}\asto m_2$ by Lemma~\ref{lem:ber-sparsification-as} under the independent entries sparsity model, or by the Strong Law of Large Numbers under the random subset sparsity model. 
    Therefore,
    \[
    \delta_{\mathrm{TV}}(\mathbb{Q}_{n,\beta}, \tilde{\mathbb{Q}}_{n,\beta}) \leq \mathbb{P}\left(\mathcal{E}_n^c\right) \to 0.
    \]

    Let $\tilde{L}_n(\bt)$ denote the likelihood ratio $\frac{d\tilde{\mathbb{Q}}_{n,\beta}}{d\mathbb{Q}_{n,0}}(\bt)$. 
    It suffices to show
    \begin{equation}\label{eq:second_moment}
        \limsup_{n\to\infty} \mathbb{E}_{\bt\sim \mathbb{Q}_{n,0}}\left[\tilde{L}_n(\bt)^2\right] < \infty.
    \end{equation}
    Indeed, if \eqref{eq:second_moment} holds, then by Lemma~\ref{lem:continguity}, whenever $\mathbb{Q}_{n,0}(A_n)\to 0$ for a sequence of events $(A_n)$, it follows that $\tilde{\mathbb{Q}}_{n,\beta}(A_n)\to 0$. Hence,
    \[
    \mathbb{Q}_{n,\beta}(A_n) \leq \tilde{\mathbb{Q}}_{n,\beta}(A_n) + \delta_{\mathrm{TV}}\left(\mathbb{Q}_{n,\beta}, \tilde{\mathbb{Q}}_{n,\beta}\right) \to 0,
    \]
    establishing contiguity of $\mathbb{Q}_{n,\beta}$ to $\mathbb{Q}_{n,0}$, and thus statistical indistinguishability by \cite[Proposition 1.12]{kunisky2019notes}.

    To prove \eqref{eq:second_moment}, write $\Sigma \colonequals \left(\matr{I}+\frac{\bone\bone^{\top}}{n}\right)^{-1} = \matr{I} - \frac{\bone\bone^{\top}}{2n}$.
    Let $\langle \cdot,\cdot\rangle_{\Sigma}$ denote the inner product defined by $\langle \bx,\by\rangle_{\Sigma}= \bx^{\top}\Sigma\by$, and $\|\cdot \|_{\Sigma}$ the corresponding norm.
    Define $\tilde\by \colonequals \one\{\mathcal{E}\} \frac{\sum_i y_i}{\|\by\|^2} \by$. Then,
    \begin{align*}
        \tilde{L}_n(\bt) &= \frac{\mathbb{E}_{\tilde\by}\left[\exp\left(-\frac{1}{2}\|\bt-\beta\tilde\by\|_{\Sigma}^{2}\right)\right]}{\exp\left(-\frac{1}{2}\|\bt\|_{\Sigma}^2\right)} \\
        &= \Ex_{\tilde\by}\left[\exp\left(-\frac{\beta^{2}}{2}\|\tilde\by\|^2_{\Sigma} + \beta \langle \bt,\tilde\by\rangle_{\Sigma}\right)\right].
    \end{align*}
    Therefore,
    \begin{align*}
        \Ex_{\bt\sim \mathcal{N}(0,\Sigma^{-1})}\left[\tilde{L}_n(\bt)^2\right]
        &= \Ex_{\tilde\by,\tilde\by'}\left[\exp\left(-\frac{\beta^{2}}{2} \left(\|\tilde\by\|^2_{\Sigma}+\|\tilde\by'\|^2_{\Sigma}\right)\right) \Ex_{\bt}\left[\exp\left(\beta\langle \bt, \tilde\by +\tilde\by'\rangle_{\Sigma}\right)\right]\right] 
        \intertext{where $\tilde\by, \tilde\by'$ are two independent copies of $\by$ (sometimes called ``replicas'' in this context)}
        &= \Ex_{\tilde\by,\tilde\by'}\left[\exp\left(-\frac{\beta^{2}}{2}\left(\|\tilde\by\|_{\Sigma}^2 + \|\tilde\by'\|_{\Sigma}^2 - \|\tilde\by + \tilde\by'\|_{\Sigma}^2\right)\right)\right] \\
        &= \Ex_{\tilde\by,\tilde\by'}\left[\exp\left(\beta^2 \langle \tilde\by, \tilde\by'\rangle_{\Sigma}\right)\right] \\
        &= \Ex_{\by,\by'}\left[\exp\left(\beta^2\one\{\mathcal{E}_n\cap\mathcal{E}_n'\} \frac{\sum_i y_i}{\|\by\|^2}\frac{\sum_i y'_i}{\|\by'\|^2} \left(\by^\top \by' - \frac{(\sum_i y_i)(\sum_i y'_i)}{2n}\right)\right)\right]       
        \intertext{where $\by, \by'$ are two independent replicas of $\by$, and $\mathcal{E}_n, \mathcal{E}_n'$ are the events corresponding to independent sequences $\by^{(n)},\by'^{(n)}$ respectively. }
        &\leq \Ex_{\by,\by'}\left[\exp\left(\beta^2\left(\frac{m_1}{m_2}+\delta\right)^2\by^\top \by'\right)\right]\\
        &= \left(1+ \frac{p^2n}{n} \left(\Ex_{z_1,z_1'\sim \eta}\left[\exp\left(\beta^2\left(\frac{m_1}{m_2}+\delta\right)^2 z_1z_1'\right)\right] -1\right)\right)^n\\
        \intertext{Suppose $0\leq z\leq M$ almost surely if $z\sim \eta$. Then for $n$ sufficiently large such that $p^2n \leq C$ for some constant $C>0$,}
        &\leq  \left(1+ \frac{C}{n} \left(\exp\left(4\beta^2\left(\frac{m_1}{m_2}+\delta\right)^2M^2\right) -1\right)\right)^n\\
        &\to \exp\left(C\exp\left(4\beta^2\left(\frac{m_1}{m_2}+\delta\right)^2M^2\right) -1\right)
    \end{align*}
    which completes the proof for (\ref{eq:second_moment}).
\end{proof}

\section{Specific models}
\subsection{Gaussian Planted Submatrix model}

We give some additional discussion of the special case discussed in Example~\ref{ex:submatrix}.
As a reminder, this special case is obtained by taking $\eta = \delta_1$ and sparsity level $p(n) = \beta / \sqrt{n}$ under the Random Subset sparsity model.

\subsubsection{Specialization of main results}

In this special case, our main general results take the following form, obtained by substituting $\eta = \delta_1, m_1 = m_2 = 1$ into Theorem~\ref{thm:main}.

First, for the eigenvalues of the signal part $\bX^{(n)}$, we have that almost surely
\[ \esd{\bX^{(n)}}\wto \sigma(\dnorm{0, 1}). \]
For the largest eigenvalue, we have almost surely $\lambda_1(\bX^{(n)})\to \theta$ where
    $\theta$ solves the equation
\begin{equation}
\Ex_{g \sim \dnorm{0, 1}}\left[\frac{1}{\theta - \sigma(\beta + g)}\right] = \frac{1}{\beta}
\label{eq:theta-submx}
\end{equation}
if such $\theta > \edge^+(\sigma)$ exists, and $\theta = \edge^+(\sigma)$ otherwise.
Finally, we find that the associated $\sigma$-Laplacian has an outlier eigenvalue if and only if $\beta > \beta_* = \beta_*(\sigma)$, where this $\beta_*$ solves
\begin{equation}
\Ex_{g \sim \dnorm{0, 1}}\left[ \frac{1}{(\theta_{\sigma}(\beta_*) - \sigma(g))^2}\right] = 1.
\label{eq:beta-submx}
\end{equation}

\subsubsection{Discussion of numerically optimized nonlinearities}\label{sec:numerical-optimize-sigma}

We now describe how we derive the numerical result of Theorem~\ref{thm:gauss-submx} from this theoretical characterization.
This amounts to finding a sufficiently ``good'' $\sigma$, i.e., one that has a small value of $\beta_*(\sigma)$.
We take it for granted here that $\beta_*(\sigma)$ can be estimated numerically; further details about this are given in Appendix~\ref{sec:numerical}.
We focus instead on the search methods that we use to look for a good $\sigma$.

As we have briefly described in Section~\ref{sec:results-numerical}, we consider three different approaches to this problem.

\paragraph{Explore simple class of $\sigma$}

We first consider a parametric family of piecewise linear functions characterized by a ``Z'' shape:
\begin{equation}
\sigma(x;a,b,c) \colonequals \begin{cases} 0 & \text{if } x< c, \\ b\cdot \frac{x-c}{a} & \text{if } c \leq x \leq a+c, \\ b & \text{if } x>a+c. \end{cases}
    \label{eq:z-shaped-sigma}
\end{equation}
Given that a vertical shift of $\sigma$ results in an essentially equivalent algorithm (since it merely shifts the $\sigma$-Laplacian by a multiple of the identity, we fix the minimum value of $\sigma$ to 0. We start with optimizing the objective function $\beta_*(\sigma)$ with respect to the parameters $(a,b,c)$ using the Nelder-Mead black-box optimization method. The optimized result is presented in Figure~\ref{fig:methods-comparison}(b). Subsequently, we investigate the individual effects of varying each parameter ($a,b,c$) on $\beta_*(\sigma)$. In Figure~\ref{fig:z-shape_heatmap}, we illustrate the result of fixing any one parameter at its optimal value and plotting, as a heatmap, the value of $\beta_*$ with respect to the other two parameters.

The resulting heatmaps demonstrate that $\beta_*$ is relatively robust with respect to all parameters.
As the middle plot shows, perhaps the most clearly important quantity is $a + c$, the right endpoint of the sloped part of the ``$Z$''.
On the other hand, the first and third plots show that the value of $b$, the height of the ``$Z$'', is relatively unimportant provided $a$ and $c$ are chosen well.\footnote{We do not illustrate it for the sake of the legibility of the plot, but the performance of the algorithm is not entirely indifferent to $b$: for very large values, $\beta_*(\sigma)$ does increase, with the optimal $\beta_*$ occurring around $4.4$.}

Next, we examine another simple but smoother parametric family of functions,
\[\sigma(x;a,b)\colonequals a\tanh(bx).\]
Similar to the previous case, we optimize $\beta_*(\sigma)$ with respect to the parameters $(a,b)$ using the Nelder-Mead method. The resulting optimized parameters and the corresponding objective function value are presented in Figure~\ref{fig:methods-comparison}(c). We note that optimizing within this family yields a slightly improved $\beta_*$ compared to the ``Z''-shaped choice of $\sigma$.
For this $\sigma$, we plot the phase transitions of the top eigenvalue and eigenvectors on random samples from the planted submatrix model in Figure~\ref{fig:eigenvector}. The results show that our theoretical predictions for the critical signal strength $\beta_*$ and the outlier eigenvalue are accurate, and that the eigenvector overlap exhibits a phase transition at the same $\beta_*$.
\begin{figure}
    \centering
    \begin{subfigure}{0.32\linewidth}
        \centering
        \includegraphics[width=\linewidth]{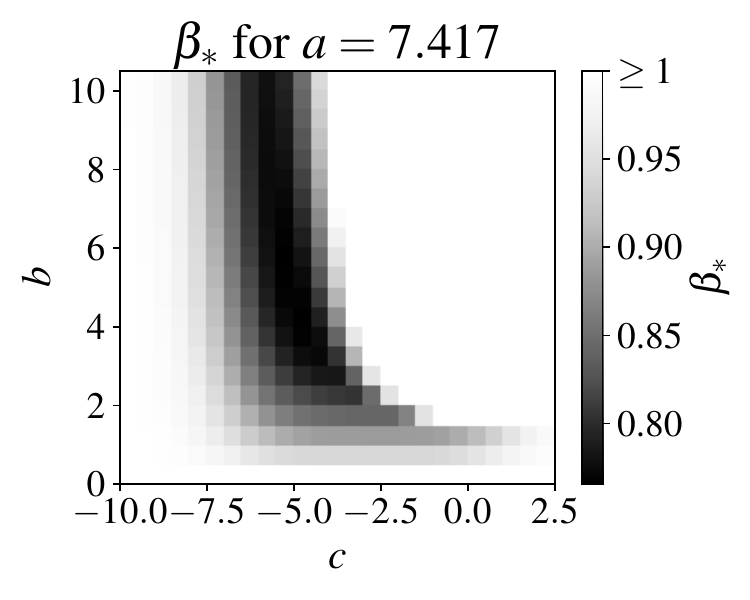}
    \end{subfigure}
    \hfill
    \begin{subfigure}{0.32\linewidth}
        \centering
        \includegraphics[width=\linewidth]{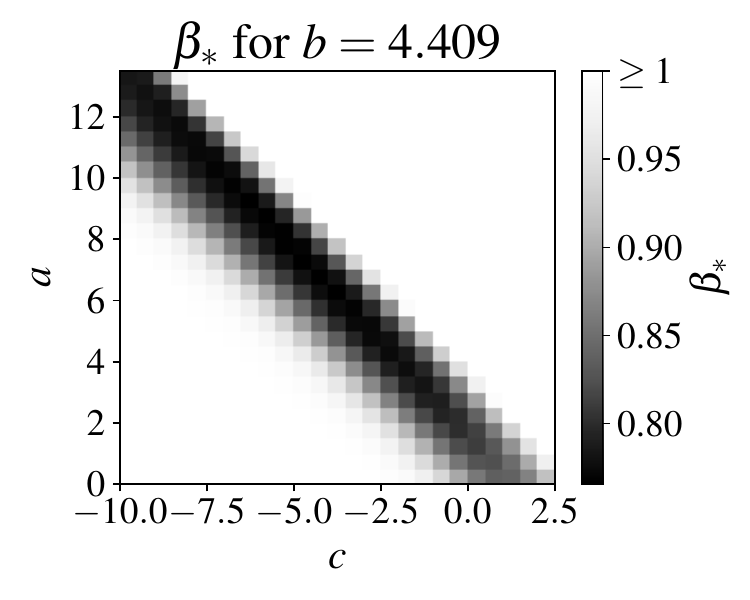}
    \end{subfigure}
    \hfill
    \begin{subfigure}{0.32\linewidth}
        \centering
        \includegraphics[width=\linewidth]{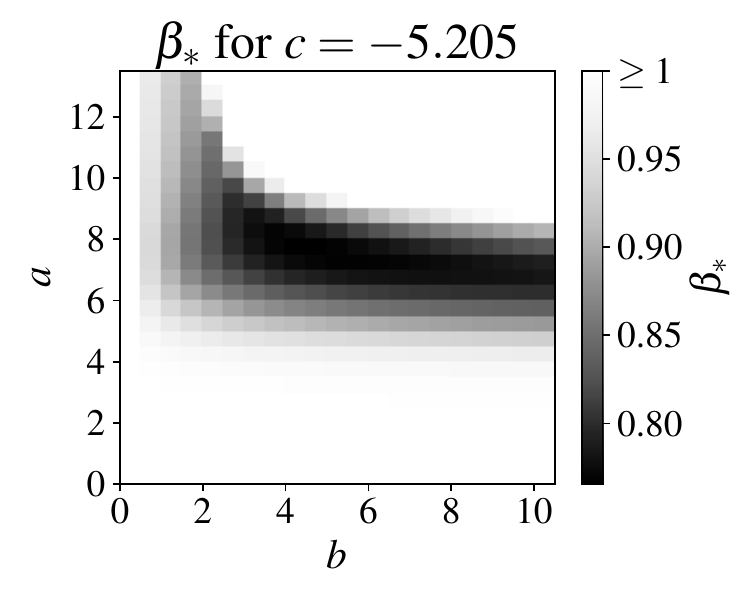}
    \end{subfigure}
    \caption{$\beta_*(\sigma)$ for the family of ``Z''-shaped piecewise linear $\sigma$ given in \eqref{eq:z-shaped-sigma}, with each parameter fixed at its optimal value and the other two varying.}
    \label{fig:z-shape_heatmap}
\end{figure}

\paragraph{Nelder-Mead optimization over step functions}
Next, we explore parameterizing $\sigma$ using more complex function classes and optimizing using black-box optimization. We first consider running Nelder-Mead optimization over $2n$ parameters controlling a step function. Let $\bm{a} = [a_1, \dots, a_n]$ and $\bm{b} = [b_1, \dots, b_n]$, where we impose the constraint that all parameters, except $a_1$, are non-negative. We define discontinuity points $x_k = \sum_{j=1}^{k} a_j$ for $1\le k \le n$. The step function $\sigma(x; \bm{a},\bm{b})$ is then defined as:
\[
\sigma(x; \bm{a},\bm{b}) \colonequals
\begin{cases}
0 & \text{if } x < x_1 \\
\sum_{j=1}^{k} b_j & \text{if } x_k \le x < x_{k+1} , \quad 1 \le k \le n-1 \\
\sum_{j=1}^{n} b_j & \text{if } x \ge x_{n}
\end{cases}
\]
The primary motivation for considering step functions is that numerical integration simplifies to summation in this case, which significantly accelerates computation.
For example, in computing $\theta_{\sigma}(\beta)$, if $\sigma$ is as above, we may rewrite
\[ \Ex_{g \sim \dnorm{0, 1}}\left[\frac{1}{\theta - \sigma(\beta + g)}\right] = \sum_{k = 0}^n \Px_{g \sim \dnorm{0, 1}}[g \in (x_k, x_{k + 1})] \cdot \frac{1}{\theta - \sigma(\beta + \sum_{j = 1}^k b_j)} \]
where we set $x_0 = -\infty$ and $x_{n + 1} = +\infty$.
The probabilities on the right-hand side can be computed in terms of the Gaussian error function, for which highly optimized implementations are provided in many numerical computing libraries, making one calculation of this integral far faster than numerically integrating the left-hand side.
Recall also that to compute $\theta_{\sigma}(\beta)$ we must perform a root-finding calculation on this function of $\theta$, requiring many evaluations; further, to compute $\beta_*(\sigma)$ we must perform \emph{that} root-finding operation many times as the ``inner loop'' of another root-finding calculation, making the total number of times that we need to compute the above kind of integral quite large.
The computational efficiency of $\sigma$ a step function makes the application of the Nelder-Mead method feasible for optimizing over a large number of parameters.
The optimized result for this step function parameterization with $n=16$ is presented in the Figure~\ref{fig:methods-comparison}(e).

\paragraph{Multi-Layer Perceptron learned from data}
We also consider parameterizing $\sigma$ using a Multi-Layer Perceptron (MLP) with $L$ layers and $h$ hidden channels. Specifically, the output $\sigma(x) = \bm{z}^{(L)}$ (viewed as a $1 \times 1$ matrix) is defined by the following recursive relations:
\[\bm{z}^{(0)} = x\]
\[\bm{z}^{(\ell)} = \rho\left(\bm{W}^{(\ell)}\bm{z}^{(\ell-1)}+ \bm{b}^{(\ell)}\right) \,\,\, \text{ for } \ell=1,\dots, L\]
where $\bm{W}^{(1)}\in \mathbb{R}^{h\times 1},\bm{b}^{(1)} \in \mathbb{R}^h, \bm{W}^{(\ell)}\in \mathbb{R}^{h\times h}, \bm{b}^{(\ell)}\in \mathbb{R}^h$ for $\ell=2,\dots, L-1$, and $\bm{W}^{(L)}\in \mathbb{R}^{1\times h}, \bm{b}^{(L)}\in \mathbb{R}$ are learnable weight matrices and bias vectors, respectively. The function $\rho$ denotes a fixed, element-wise nonlinearity, for which we use either the $\tanh$ or ReLU activation function. In our experiment, we use $L=8, h=20$, and the $\tanh$ activation function.

To find the optimized $\sigma$, we generate a dataset of i.i.d. pairs $(\bm{Y}_i, y_i)$ according to the following procedure: We fix parameters $n_0$ and $\beta_0$. For each data point $i$, we first draw a binary label $y_i \simiid \mathrm{Bernoulli}(1/2)$, where $y_i = 1$ indicates the presence of a planted signal and $y_i = 0$ indicates its absence. Subsequently, we generate the observed symmetric matrix $\bm{Y}_i \in \mathbb{R}_{\mathrm{sym}}^{n\times n}$ from the Planted Submatrix model, $\bY_i \sim \mathbb{P}_{n_0, \beta}$ in our previous notation. If $y_i = 1$, the matrix is generated with a planted signal strength of $\beta = \beta_0$; if $y_i = 0$, the matrix is generated with $\beta = 0$. In our experiments, we use $n_0=100, \beta_0= \num{1.3}$.

Our neural network model processes the input $\bY_i$ by first passing each element through the MLP parameterized by $\sigma$.
Then, it computes the $\sigma$-Laplacian matrix, $\matr{L}_\sigma(\bY_i) = \bY_i + \mathrm{diag}(\sigma(\bY_i\bone))$.
We then utilize the built-in eigenvalue computation functionality in \texttt{PyTorch}, which supports gradient flow, to obtain the top eigenvalue of the $\sigma$-Laplacian. Finally, this top eigenvalue is passed through a learnable linear map $f(x) =  \alpha x + \gamma$ to produce a scalar output.
Thus in summary we compute
\[ \alpha \lambda_1(\bY_i + \mathrm{diag}(\sigma(\bY_i\bone))) + \gamma \in \R. \]
We train this to solve the task of classifying $\bY_i$ according to the (binary) value of $y_i$ by minimizing the standard binary cross-entropy loss function using stochastic gradient descent.

The key distinction between this approach and the previous ones lies in the fact that \textbf{we do not directly optimize the theoretical $\beta_*(\sigma)$ derived earlier}.
Instead, our optimization focuses on the classification performance on datasets with a fixed, small matrix size $n$. We then validate the asymptotic performance of the learned $\sigma$ by computing the corresponding theoretical $\beta_*$ using our numerical procedure outlined above.

We train the above neural network with 10 different random initializations, post-process the $\sigma$ learned it to make it monotone, and compute the corresponding $\beta_*$. We report the $\sigma$ achieving the smallest $\beta_*$ in Figure~\ref{fig:methods-comparison}(d).
Encouragingly, despite being trained only on finite $n$, the learned $\sigma$ performs comparably to the other methods.
We take this as an indication that $\sigma$ can effectively be learned from relatively small datasets without depending on the theoretical analysis of $\beta_*(\sigma)$, making nonlinear Laplacians a promising method for more realistic applications beyond these models of synthetic data.

\subsection{Planted Clique model}
\label{sec:clique}

The Planted Clique problem is similar in spirit to the Gaussian Planted Submatrix problem, but is defined over random graphs rather than matrices.
We view graphs on vertext set $[n]$ as being encoded in matrix form by a graph $G$ giving rise to the so-called Seidel adjacency matrix $\bY \in \R^{n \times n}_{\sym}$,
        \[
            Y_{ij}=
            \left\{\begin{array}{rl}
                +1  & \text{if nodes $i,j$ are connected},    \\
                -1 & \text{if nodes $i,j$ are not connected}, \\
                0  & \text{if $i=j$}.
            \end{array}\right.
        \]
Thus, when we describe a random graph below, we may equivalently view it as a random such matrix.

\begin{defn}[Planted Clique]
    \label{def:clique}
    Let $n \geq 1$ and $\beta \geq 0$.
    We observe a random simple graph $G = (V = [n], E)$ constructed as follows: first, draw $G$ from the \Erdos-\Renyi\ random graph distribution, i.e., each pair of distinct $u, v \in V$, take $(u, v) \in E$ independently with probability $1/2$.
    Then, choose a subset $S \subseteq [n]$ of size $|S| = \beta\sqrt{n}$ uniformly at random, and add an edge to $G$ between each $u, v \in S$ distinct if that edge is not in $G$ already.
    Thus in the resulting $G$, the vertices of $S$ form a \emph{clique} or complete subgraph.
\end{defn}

Taking $\bx = \hat{\bone}_S$ as for Planted Submatrix, a simple calculation shows that we still have
\[ \E{\bY \mid \bx}\approx \bone_{S}\bone_{S}^{\top}= \beta \sqrt{n} \cdot \bx\bx^{\top}. \]
While the entries of $\bY - \E{\bY \mid \bx}$ are not exactly i.i.d.\ anymore, it turns out that this matrix is sufficiently ``Wigner-like'' that many of the phenomena discussed for direct spectral algorithms in Section~\ref{sec:direct-spectral} still hold for the Planted Clique model.

In particular, it is well-known as folklore that a version of Theorem~\ref{thm:bbp} on the BBP transition holds in this case.
The idea of showing this is to note that the matrix $\bW \colonequals \bY - \E{\bY \mid \bx}$ is a Wigner matrix with i.i.d.\ Rademacher entries drawn uniformly from $\{\pm 1\}$ (above the diagonal), except that the entries indexed by $S$ are identically zero.
We may write this as $\bW = \bW^{(0)} - \bW_{S, S}^{(0)}$ for $\bW^{(0)}$ truly a Wigner matrix.
But, we have with high probability $\bW^{(0)} = \Theta(n^{1/2})$, while $\|\bW^{(0)}_{S, S}\| = \Theta(n^{1/4})$, since the latter is just a Wigner matrix of dimension $\beta\sqrt{n}$ padded by zeroes (see Proposition~\ref{prop:largest_eigenvalue_wigner}).
Thus this modification is of relatively negligible spectral norm, and routine perturbation inequalities (in particular Weyl's inequality, our Proposition~\ref{prop:weyl}, for the eigenvalues) thus show that the result of Theorem~\ref{thm:bbp} holds for $\bY$ drawn from the Planted Clique model.

As an aside, the Planted Clique problem does have some non-trivial distinctions from the Gaussian Planted Submatrix problem.
For example, because of the special discrete structure of the Planted Clique problem, for $\beta$ sufficiently large it is also possible to introduce a post-processing
    step that improves weak recovery to exact recovery (i.e., recovering the exact value of the set $S$), as detailed in~\cite{AKS-1998-PlantedClique}.

Unfortunately, in this non-Gaussian setting the device of compression used to make the signal and noise parts of $\bY$ exactly independent (Section~\ref{sec:compression}) does not apply to the Planted Clique model, and we have not been able to find a substitute for that part of our argument.
On the other hand, we do still expect $(\hat{\bY} - \bone_S\bone_S^{\top})\bone$ to be approximately a Gaussian random vector in the Planted Clique model by the central limit theorem, since each entry is approximately a normalized sum of i.i.d.\ random variables drawn uniformly at random from $\{\pm 1\}$.
Thus, while we are confident that the analogy between Planted Clique and Gaussian Planted Submatrix holds sufficiently precisely for Conjecture~\ref{conj:clique} to hold (as also supported by numerical experiments analogous to those we present for the Gaussian Planted Submatrix model), we have not been able to prove this and leave the Conjecture for future work.

Meanwhile, we numerically validate Conjecture~\ref{conj:clique} using the same setup as in Figure~\ref{fig:eigenvector}, as shown in Figure~\ref{fig:planted_clique} below.
The Conjecture is merely the statement that these two Figures should look identical as $n \to \infty$, which we observe even for these experiments with finite $n$.

\begin{figure}
    \centering
    \includegraphics[width=0.8\linewidth]{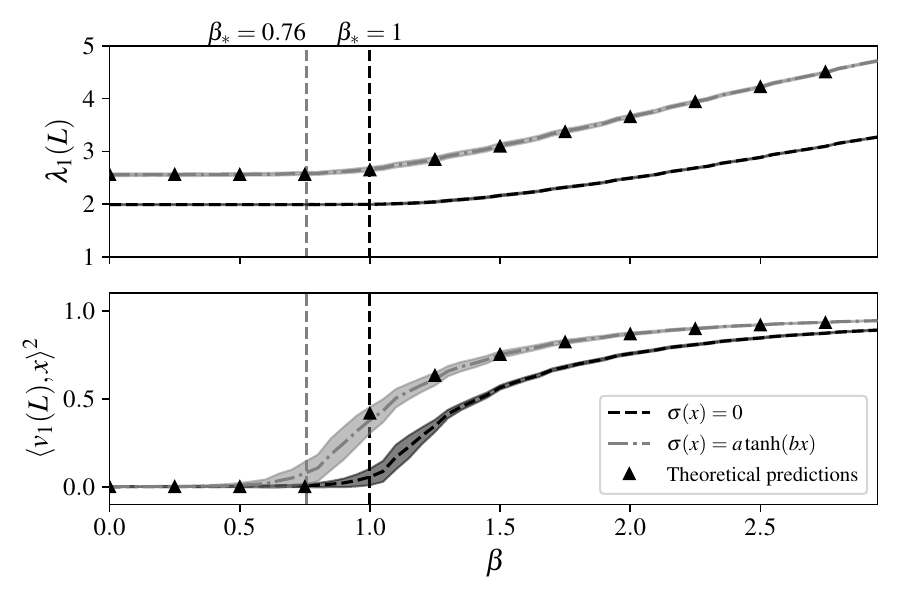}
    \caption{An illustration of the phase transition of top eigenvalue and eigenvector for the Planted Clique model. Each point of the plot shows the average over 500 random inputs of the largest eigenvalue and the squared correlation of the top eigenvector with the signal $\bx$ for a nonlinear Laplacian $\bL_{\sigma}(\bY)$ with $\bY$ drawn from the Gaussian Planted Submatrix model, for $\sigma = 0$ (dark line) and $\sigma$ the best ``S-shaped'' nonlinearity found by black-box optimization (light line). Error bars have width of plus/minus one standard deviation over these trials. We also incorporate the theoretical prediction for the Gaussian Planted Submatrix model from Theorem~\ref{thm:main}. The numerical outputs align with the theoretical predictions, supporting Conjecture~\ref{conj:clique}.}
    \label{fig:planted_clique}
\end{figure}

\subsection{Heuristic discussion of dense signals}
\label{sec:dense}

Our choice of a model of Biased PCA in Definition~\ref{def:probs} prescribes that the signal vector $\bx$ must be sparse; in particular, we forbid the situation where $\|\bx\|_0 = \Theta(n)$.
This regime is different from that treated by, for instance, the previous work \cite{MR-2015-NonNegative}, which instead allows only such signals of constant sparsity, though many of their results concern the limit of this sparsity decreasing to zero (and thus in a sense approaching our regime of $\|\bx\|_0 = o(n)$).

In fact, it remains unclear to us to what extent our results should transfer to the case of denser signals.
Recall that our intuition is that the matrix $\bD = \diag{\sigma(\hat{\bY}\bone)}$ may be treated as approximately independent of $\hat{\bY}$ for the purposes of studying the spectrum of their sum.
This is in some sense encoded in our use of the compression operation (Section~\ref{sec:compression})---when we project away the $\bone$ direction from $\hat{\bY}$, we intuitively believe that the important structure of $\hat{\bY}$ remains unchanged.

At a technical level, this is simply false once $\bx$ is a dense ($\Theta(n)$ non-zero entries) non-negative vector: $\bx$ then has positive correlation with $\bone$, so applying compression will reduce the signal strength in $\hat{\bY}$.
Thus we should not expect a spectral algorithm using the compressed $\sigma$-Laplacian to be as powerful as one using the uncompressed $\sigma$-Laplacian in this dense regime.
Between these two algorithms, it seems that using the uncompressed $\sigma$-Laplacian will be superior, but we have not found a way to carry out analogous analysis without using compression.

Let us sketch how our results would look for an example of such a model, supposing that our technique based on the analysis of \cite{capitaineFreeConvolutionSemicircular2011c} applied.
Consider the case where $\eta = |\dnorm{0, 1}|$, the law of the absolute value of a standard Gaussian random variable, and $p(n) = 1$, so that the signal is not sparsified at all.
Thus we have that $y_i \simiid \eta$ and $\bx = \by / \|\by\|$.
We expect to have
\[ \hat{\bY}\bone = \beta \langle \bx, \bone \rangle \bx + \hat{\bW}\bone. \]
As in the sparse case, the second term has independent entries approximately distributed as $\dnorm{0, 1}$ and in the first term $\langle \bx, \bone \rangle \approx \sqrt{2 / \pi} \cdot \sqrt{n}$.

Already here the first difference with the sparse case appears: in that case the first term above does not affect the weak limit of $\edt(\hat{\bY}\bone)$ and thus of $\esdt(\bX^{(n)})$, since only $o(n)$ entries of $\bx$ are non-zero.
Here, however, this term does play a role, and instead of $\esdt(\bX^{(n)}) \wto \sigma(\dnorm{0, 1})$ as in Lemma~\ref{lem:spectrum_of_X}, we should expect the different limit
\[ \esdt(\bX^{(n)}) \wto \nu = \nu_{\beta, \sigma} \colonequals \Law(\sigma(\beta \sqrt{2 / \pi} |g| + h)) \,\,\, \text{ for } \,\,\, g, h \simiid \dnorm{0, 1}, \]
which notably depends on $\beta$.

Adjusting the rest of Lemma~\ref{lem:spectrum_of_X} accordingly, we expect $\lambda_1(\bX^{(n)}) \to \theta = \theta_{\sigma}(\beta)$ where this $\theta$ solves the equation
\[ \Ex_{g, h \sim \dnorm{0, 1}}\left[\frac{g^2}{\theta - \sigma(\beta\sqrt{2/\pi}|g| + h)}\right] = \frac{1}{\beta} \]
if such $\theta > \edge^+(\sigma)$ exists, and $\theta = \edge^+(\sigma)$ otherwise.
Similarly, $\beta_* = \beta_*(\sigma)$ would then solve
\[ \Ex_{g, h \sim \dnorm{0, 1}}\left[\frac{1}{(\theta_{\sigma}(\beta_*) - \sigma(\beta_*\sqrt{2/\pi}|g| + h))^2}\right] = 1, \]
and when $\beta > \beta_*$, then we would predict
\[ \lambda_1(\bL^{(n)}) \to \theta_{\sigma}(\beta) + \Ex_{g, h \sim \dnorm{0, 1}}\left[\frac{1}{\theta_{\sigma}(\beta) - \sigma(\beta\sqrt{2/\pi}|g| + h)}\right] > \edge^+(\mu_{\SC} \boxplus \nu). \]

\begin{rmk}
    In fact, since the limiting esd of $\bL^{(n)}$ will be $\mu_{\SC} \boxplus \nu_{\beta, \sigma}$ in this case, which depends non-trivially on $\beta$ (we do not give a proof but this much is not difficult to show using basic free probability tools), it is easy to distinguish the case $\beta = 0$ from that of $\beta > 0$, achieving strong detection, by merely considering, e.g., a suitable moment of the esd, which is an expression of the form $\frac{1}{n}\Tr(\hat{\bY}^k)$.
    It is also easy to achieve weak recovery, since the vector $\bone / \sqrt{n}$ is macroscopically correlated with any dense non-negative vector $\bx$.
    So, the questions of thresholds for strong detection and weak recovery we have been concerned with in this work trivialize for models with dense signals.
    A more interesting question in this setting would be for what signal strength $\beta$ it becomes possible to estimate $\bx$ with a correlation superior to that achieved by $\bone / \sqrt{n}$.
    However, per the discussion in Section~\ref{sec:eigvec}, it seems challenging even in the sparse case to understand what such correlation is achieved by a $\sigma$-Laplacian spectral algorithm for recovery.
\end{rmk}

\begin{figure}
    \begin{center}
        \begin{tabular}{p{0.2\textwidth} >{\centering\arraybackslash}p{0.37\textwidth} >{\centering\arraybackslash}p{0.15\textwidth} >{\centering\arraybackslash}p{0.15\textwidth}@{}}
            \toprule
            \makecell[cc]{Source of $\sigma$} & \makecell[cc]{$\sigma(x)$} & \makecell{$\beta_*(\sigma)$ for \\ $\eta = \delta_1$} & \makecell[cc]{$\beta_*(\sigma)$ for \\ $\eta = |\dnorm{0, 1}|$} \\
            \midrule \\[-0.75em]
            \makecell[tc]{
                $\sigma(x) = 0$ \\
                (direct)
            }
            & \makecell{\includegraphics[width=0.33\textwidth]{zero_sigma.pdf}}
            & 1
            & 1 \\ \\[-0.5em]
            \hline
            \\[-0.5em]
            \makecell[cc]{
                Best S-shaped $\sigma$ \\
                for $\eta = \delta_1$
            }
            & \makecell{\includegraphics[width=0.33\textwidth]{planted_submatrix_tanh.pdf}}
            & 0.755
            & 0.673 \\ \\[-0.5em]
            \hline \\[-0.5em]
            \makecell[cc]{
                Best S-shaped $\sigma$ \\
                for $\eta = |\dnorm{0, 1}|$
            }
            & \makecell{\includegraphics[width=0.36\textwidth]{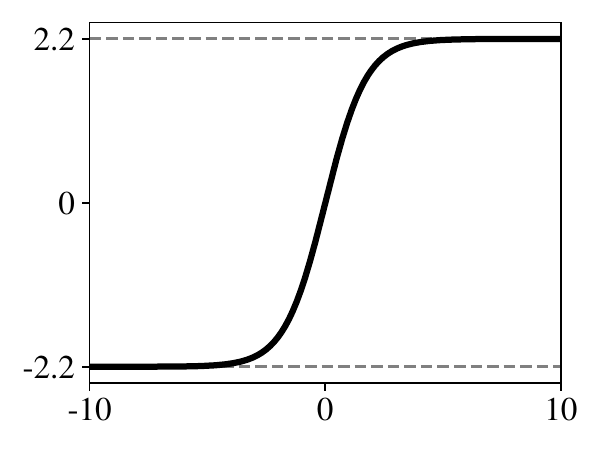}}
            & 0.767
            & 0.662\\
            \bottomrule
        \end{tabular}
    \end{center}

    \caption{We present the optimal ``S-shaped'' nonlinearities $\sigma(x)$ found for two signal entry distributions $\eta$, and give the threshold signal strength $\beta_*(\sigma)$ achieved by each nonlinearity for each entry distribution. At the top we also include the zero nonlinearity, leading to a direct spectral algorithm, for reference.}
    \label{fig:transfer}
\end{figure}

\subsection{Transferability of nonlinearities}
\label{sec:transferability}

One advantage of nonlinear Laplacian algorithms is that it is sensible to use the same algorithm---the same choice of $\sigma$---for different values of the signal entry distribution $\eta$.
(Indeed, while we have not explored this rigorously here, it also seems like a given $\sigma$ should be a reasonable choice for either sparse or dense signals, the latter as discussed in Section~\ref{sec:dense}.)
We give a simple illustration in Figure~\ref{fig:transfer}.
For the two choices $\eta = \delta_1$ (corresponding to a planted submatrix model) and $\eta = |\dnorm{0, 1}|$.
For each, we use black-box optimization to optimize $\sigma$ within the simple class of functions $\sigma(x) = a \tanh(bx)$, which we call ``S-shaped'' choices of $\sigma$.
We find that using either $\sigma$ for either choice of $\eta$ is effective, in both cases much more effective than a direct spectral algorithm, and that ``fine-tuning'' $\sigma$ to the model gives only a modest further improvement.
Further, as is also visible in the plots of these $\sigma$, the optimized values of $b$ are quite close (roughly $b \approx \num 0.584$ for $\eta = \delta_1$ and $b \approx \num 0.575$ for $\eta = |\dnorm{0, 1}|$), so the resulting nonlinearities are close to but not quite rescalings of one another.

\section{Generalizations and future directions}

\subsection{Generalized directional prior information}
\label{sec:other-directions}

Both in the models we have singled out in Definition~\ref{def:probs} and in our formulation of nonlinear Laplacian spectral algorithms, the direction of the $\bone$ vector has played a distinguished role.
This is important to the reasoning behind our algorithms, because in order for adding $\bD$ to $\hat{\bY}$ to improve the performance of a spectral algorithm, we must have that $\bx^{\top}\bD\bx$ is large. For monotone $\sigma$ the diagonal entries of $\bD$ will be positively correlated with the entries of $\bx$, so $\bx$ must be biased entrywise to be positive, as we have assumed.

On the other hand, if one is instead given prior information that $\bx$ is correlated with some other vector $\bv \neq \bone$, of course one may just preprocess $\hat{\bY}$ into $\bQ^{\top}\hat{\bY}\bQ$ for some orthogonal transformation $\bQ$ that maps $\bone$ to $\bv$ (for instance the reflection that swaps the $\bone$ and $\bv$ directions) and then apply a nonlinear Laplacian algorithm to this matrix.
Similarly, if one is given the information that $\bx$ lies in a cone $\mathcal{C} \subset \R^n$, one may try to either find a vector $\bv$ that lies near the ``center'' of that cone so that $\bx$ is correlated with $\bv$ and then apply the above strategy, or to directly look for $\bQ$ as above that maps the positive orthant $\{\bx \in \R^n: \bx \geq \bm 0\}$ to a cone covering $\mathcal{C}$.
If $\mathcal{C}$ is not convex (say being the union of a small number of convex cones), one may also attempt to find or learn from data several different directions to play the role of $\bone$, as we discuss further below.
We have not explored these ideas systematically, but they seem to be promising directions for future investigation.

\subsection{Generalized architectures and relation to graph neural networks}
\label{sec:gnn}

The original motivation that led us to study $\sigma$-Laplacians was a considerably more general question of how to learn effective spectral algorithms.
That is, can one learn a function $M: \R^{n \times n}_{\sym} \to \R^{n \times n}_{\sym}$ such that $\lambda_{1}(M(\hat{\bY}))$ is a good test statistic for estimation or $\bv_1(M(\hat{\bY}))$ is a good estimator for recovery in PCA problems?
Two structural criteria seem reasonable for such $M$:
\begin{enumerate}
\item $M$ should be an equivariant matrix-valued function with respect to the two-sided action of symmetric group on symmetric matrices, so that for any permutation matrix $\bP$ we have $M(\bP\hat{\bY}\bP^{\top}) = \bP M(\hat{\bY}) \bP^{\top}$.
\item $M$ should allow for ``dimension generalization,'' so that we may learn $M$ from datasets of small $n_0 \times n_0$ matrices but apply it to $n \times n$ matrices with $n \gg n_0$.
\end{enumerate}
The work of \cite{MBSL-2018-InvariantEquivariantGraphNetworks} provides building blocks for constructing such $M$ as a neural network.
Namely, they classify the equivariant \emph{linear} functions $\R^{n \times n}_{\sym} \to \R^{n \times n}_{\sym}$, which also are given in a basis that is consistent across dimensions, so that a function on one dimensionality of matrices naturally extends to higher dimensions.
(See also the work of \cite{levin2024any} for a general framework for dimension generalization in such settings.)
They frame their constructions as giving natural graph neural networks acting on $\bY$ an adjacency matrix, but really the key structure involved is equivariance (which in graphs becomes the particularly natural invariance under relabelling of vertices of a graph).
Using this, the following is a natural class of functions for our purposes: given $L_1, \dots, L_k: \R^{n \times n}_{\sym} \to \R^{n \times n}_{\sym}$ such equivariant linear functions and $\sigma_1, \dots, \sigma_k: \R \to \R$ nonlinearities that extend to apply entrywise to matrices, we may compute
\[ M(\hat{\bY}) \colonequals \sigma_k(L_k(\cdots \sigma_1(L_1(\hat{\bY})))). \]

When can such a construction be analyzed by tools of random matrix theory of the kind we have used?
Unfortunately, it seems that if we want to have that $\|\hat{\bY}\| = O(1)$ (say having a compactly supported continuous bulk of outlier eigenvalues with a finite number of eigenvalues of constant order) implies $\|M(\hat{\bY})\| = O(1)$ (with a similar structure), we must restrict this natural general structure considerably.
One issue is that some basis elements of equivariant linear matrix functions roughly preserve the rank of a matrix, for instance the functions $\bZ \mapsto \bZ$ or $\bZ \mapsto \diag{\bZ\bone}$.
But, other functions map high-rank matrices to low-rank ones, such as $\bZ \mapsto \bone (\bZ\bone)^{\top} + (\bZ\bone)\bone^{\top}$.
Thus if we are not very careful to center and rescale the inputs into each $L_i$, we can easily end up with outlier eigenvalues of order $\omega(1)$.
This is not obviously an issue algorithmically or computationally (though it seems plausible it might lead to instability in training), but it does lead to random matrices different from the ``signal plus noise'' structure where both summands are of comparable order.

Similarly, if the $\sigma_i$ are not carefully centered, then the output of a given $\sigma_i$ can have, say, a positive entrywise bias.
This corresponds, roughly speaking, to an eigenvector correlated with the $\bone$ direction with very large eigenvalue relative to the remaining eigenvalues (similar to the Perron-Frobenius eigenvalue of a matrix with positive entries).
If we further take such a matrix and apply the above function $\bZ \mapsto \bone (\bZ\bone)^{\top} + (\bZ\bone)\bone^{\top}$, we will further inflate this eigenvalue, and iterating this procedure can allow the largest eigenvalue of $M(\hat{\bY})$ to quickly diverge with the dimension $n$ and the number of layers $k$.

Our choice of nonlinear Laplacians is in part made to avoid all of these issues by restricting the nonlinear part of our (very simple relative to the above) architecture to yield a \emph{diagonal} matrix.
Thus, so long as our $\sigma$ is bounded, the above situation of ``blowing up'' eigenvalues cannot occur.

It is perhaps natural to try to extend this construction to more complex diagonal matrix functions of $\hat{\bY}$.
For instance, one may parametrize an equivariant function $\bd: \R^{n \times n}_{\sym} \to \R^n$ (that is, one having $\bd(\bP\hat{\bY}\bP^{\top}) = \bP \bd(\hat{\bY})$ for all permutation matrices $\bP$) in a way similar to the approach of \cite{MBSL-2018-InvariantEquivariantGraphNetworks} to equivariant linear functions together with entrywise nonlinearities and use $M(\hat{\bY}) = \hat{\bY} + \diag{\bd(\hat{\bY})}$.
Provided that the specification of $\bd$ ends by applying a bounded nonlinearity, we again obtain a reasonable random matrix model for theoretical analysis.
Many similar variants are natural; for instance, closer to our choice here, one could also fix or learn several vectors $\bv_1, \dots, \bv_k$ and compute $\bd$ an equivariant function of the tuple of vectors $(\hat{\bY}\bv_1, \dots, \hat{\bY}\bv_k)$, allowing the ``distinguished direction'' $\bone$ in nonlinear Laplacians to instead be learned from data (see the discussion in the previous section for a setting where this could be useful).

We also mention a few issues with this approach.
First, at a technical level, if $\bD$ can depend on $\hat{\bY}$ more strongly than merely through $\hat{\bY}\bone$, then the device of compression that we have used to decouple $\bD$ from $\hat{\bW}$ may break down and the analysis could become more challenging; for sufficiently complex $\bD$, the entire apparatus of free probability (which plays a crucial role in the results of \cite{capitaineFreeConvolutionSemicircular2011c} that we use) might no longer apply, which would leave us with random matrices requiring a fundamentally different toolkit to analyze.
Also, allowing very complex $\bD$ might allow one to build a complex algorithm solving the underlying statistical problem into this diagonal matrix alone, making the first term of the nonlinear Laplacian $\bL = \hat{\bY} + \bD$ superfluous. As we have argued, the merit of nonlinear Laplacians is in their balance of simplicity and strong performance, and we believe that more valuable than to just optimize nonlinear Laplacians with a complicated architecture would be to understand more precisely the tradeoff between these properties as one blends more and more complex side information into spectral algorithms.

\section*{Acknowledgments}
\addcontentsline{toc}{section}{Acknowledgments}
Thanks to Benjamin McKenna and Cristopher Moore for helpful discussions in the course of this project, and to the anonymous reviewers for several useful suggestions, in particular for the idea behind the analysis of the top eigenvector in Theorem~\ref{thm:main}.

\clearpage

\addcontentsline{toc}{section}{References}
\bibliographystyle{alpha}
\bibliography{main}

\clearpage

\appendix

\section{Details of numerical methods}
\label{sec:numerical}

Below we give some details of the more involved numerical computations we have referred to throughout.
The code used to generate the results presented in this paper is also available online at \url{https://github.com/yuxinma98/NonlinearLaplacian}.

\subsection{Estimation of additive free convolution}

We describe how we compute the analytic prediction of the empirical spectral distribution of a $\sigma$-Laplacian, as illustrated in Figure~\ref{fig:submatrix-histograms}.
Recall that, by Lemma~\ref{lem:esd_of_L}, the limiting density of this distribution is of the form $\nu \boxplus \mu_{\SC}$, where $\nu$ is a compactly supported measure with a known density, in our case $\nu = \sigma(\dnorm{0, 1})$, and $\mu_{\SC}$ is the semicircle distribution.

The distribution of $\nu \boxplus \mu_{\SC}$ is characterized by the equation
satisfied by its Stieltjes transform: we have
\[
G_{\nu \boxplus \mu_{\SC}}^{-1}(z) - \frac{1}{z}= R_{\nu \boxplus
\mu_{\SC}}(z) = R_{\nu}(z) + z = G_{\nu}^{-1}(z) - \frac{1}{z}+ z.
\]
Rearranging this gives the implicit equation
\begin{equation}
\label{eq:free_convolution_equation}G_{\nu \boxplus \mu_{\SC}}(z) = G
_{\nu}(z - G_{\nu \boxplus \mu_{\SC}}(z)).
\end{equation}

Numerically, the density of $\nu \boxplus \mu_{\SC}$ can be computed by the following procedure.
Fix a grid of complex numbers
$z_{j}= x_{j}+ i\epsilon$ close to the real axis. Call
$G_{j}= G_{\nu \boxplus \mu_{\SC}}(z_{j})$. These numbers satisfy the fixed-point
equation $G_{j}= \int_{-\infty}^{\infty} \frac{1}{z_{j}- G_{j}- \sigma(g)}\frac{1}{\sqrt{2\pi}}e^{-g^2/2}
dg$, hence it can be computed numerically by iterating
\[
G_{j}^{(t + 1)}= \int_{-\infty}^{\infty}  \frac{1}{z_{j}- G_{j}^{(t)}- \phi(g)}\frac{1}{\sqrt{2\pi}}
e^{-g^2/2}dg
\]
until convergence. To recover the density of $\nu \boxplus \mu_{\SC}$,
we can then use the Stieltjes inversion formula: if this density is $\rho$, then

\[
\rho(x) = \lim_{\epsilon \to 0}-\frac{1}{\pi}\mathsf{Im}(G_{\nu \boxplus \mu_{\SC}}
(x + i\epsilon))
\]

In particular, for $\epsilon$ small enough, we should have

\[
\rho(x_{j}) \approx -\frac{1}{\pi}\mathsf{Im}(G_{j}),
\]
giving us a sequence of estimates of the density at the points $x_j$.
We compute the integrals above using the standard numerical integration methods built into the \texttt{NumPy} library.

\subsection{\texorpdfstring{Computation of $\theta_{\sigma}(\beta)$ and $\beta_*(\sigma)$}{Computation of effective signal strength and critical threshold}}

Recall that, per Theorem~\ref{thm:main}, each of the quantities $\theta_{\sigma}(\beta)$ and $\beta_*(\sigma)$ is determined by finding the root of a suitable function.
For example, $\theta_{\sigma}(\beta)$ is the $\theta$ that solves $F_{\beta, \sigma}(\theta) = 0$, where
\[ F_{\beta, \sigma}(\theta) = \Ex_{\substack{y \sim \eta \\ g \sim \dnorm{0, 1}}}\left[ \frac{y^2}{\theta - \sigma(\frac{m_1}{m_2} \beta y + g)}\right] - \frac{m_2}{\beta}. \]
In principle one could differentiate this function (differentiating under the integral in the first term) and use more sophisticated methods like Newton's method for root-finding, but for the sake of simplicity we use the implementation of Brent’s method in the \texttt{SciPy} library.
In any case, the computational bottleneck of this procedure is the repeated evaluation of the integral implicit in expectations of the above kind (which would remain with different integrands upon taking derivatives).
We have not attempted to optimize this numerical integration for the specific integrals we encounter, but this would likely be a useful next step towards a more detailed numerical study of the signal strength thresholds $\beta_*(\sigma)$.

\subsection{Neural network training procedure}\label{sec:NN-training}
For the method of optimizing $\sigma$ using an MLP to learn from data, the neural network was implemented with the \texttt{PyTorch} framework and trained on a randomly generated dataset of size $n = \num{5000}$, split into training, validation, and test sets in a ratio of $\num{0.6} : \num{0.1} : \num{0.3}$. Training was performed by minimizing the binary cross-entropy loss using the AdamW optimizer with an initial learning rate of $\num{0.01}$ and weight decay of $\num{0.01}$. The learning rate was automatically halved if the validation loss did not improve for 10 consecutive epochs. Training was conducted for 350 epochs on a single NVIDIA A5000 GPU. Each such run takes around 10 minutes.

\section{Applicability to real-world data}

We remark that nonlinear Laplacians appear to be a reasonable method for searching for dense subgraphs of a given graph, even when the statistical models we have studied do not hold or are not reasonable.
Indeed, when applied to graph data, nonlinear Laplacians may be viewed as merely a deformed version of standard spectral approaches to finding dense subgraphs.

We have tested nonlinear Laplacians on the widely studied dataset of the neural network of C.\ Elegans, a network on 297 vertices proposed as an example of a ``small-world'' network by \cite{WS-1998-CollectiveDynamicsSmallWorldNetworks}. To obtain a binary and symmetric adjacency matrix, we have ignored all ancillary data including directedness of interactions in this dataset, and on the resulting dataset we consider the task of finding unusually densely connected subgraphs. We reliably find that a suitable nonlinear Laplacian algorithm finds slightly denser and substantially different subgraphs compared to a naive algorithm (here, to view an algorithm as outputting a subgraph, we compute the top eigenvector of the associated matrix and then take the subset of vertices associated to some number of the eigenvector's largest coordinates). For instance, searching for a dense subgraph of 15 vertices, a naive spectral algorithm finds a subgraph having 65\% of all possible edges present, while a nonlinear Laplacian algorithm (for instance, that obtained by optimizing in the ``S-shaped'' class of nonlinearities based on the $\tanh$ function) finds one having 73\% of all possible edges.

\end{document}